\newtheorem{theorem}{Theorem}[section]
\newtheorem*{theorem*}{Theorem}
\newtheorem{proposition}[theorem]{Proposition}
\newtheorem*{proposition*}{Proposition}
\newtheorem{lemma}[theorem]{Lemma}
\newtheorem*{lemma*}{Lemma}
\newtheorem{corollary}[theorem]{Corollary}
\newtheorem*{conjecture*}{Conjecture}
\newtheorem*{fact*}{Fact}
\newtheorem*{hypothesis*}{Hypothesis}
\newtheorem{claim}[theorem]{Claim}
\theoremstyle{definition}
\newtheorem{definition}[theorem]{Definition}
\theoremstyle{remark}
\newtheorem{remark}[theorem]{Remark}
\newtheorem*{remark*}{Remark}
\newtheorem*{observation*}{Observation}
\numberwithin{equation}{section}
\newcommand{\IGNORE}[1]{}
\newcommand\inner[1]{\langle #1 \rangle}
\newcommand{\Exp}{\mathop{\mathbb E}\displaylimits}
\newcommand{\argmax}{\mathop{\textup{argmax}}\displaylimits}
\newcommand{\dzero}{d^{\textup{KL}}}
\newcommand{\cDzero}{\cD^{G}}
\newcommand{\barG}{\bar{G}}
\newcommand{\piref}{\pi_{\textup{ref}}}
\newcommand{\hatM}{\widehat{M}}
\newcommand{\norm}[1]{\lVert#1\rVert}
\newcommand{\Id}{\textup{Id}}
\newcommand{\mper}{\,.}
\newcommand\R{\mathbb{R}}
\newcommand{\cD}{D}
\newcommand{\cT}{\mathcal{T}}
\let\epsilon=\varepsilon
\numberwithin{equation}{section}
\newcommand\MYcurrentlabel{xxx}
\newcommand{\MYstore}[2]{%
	\global\expandafter \def \csname MYMEMORY #1 \endcsname{#2}%
}
\newcommand{\MYload}[1]{%
	\csname MYMEMORY #1 \endcsname%
}
\newcommand{\MYnewlabel}[1]{%
	\renewcommand\MYcurrentlabel{#1}%
	\MYoldlabel{#1}%
}
\newcommand{\MYdummylabel}[1]{}
\newcommand{\torestate}[1]{%
	\let\MYoldlabel\label%
	\let\label\MYnewlabel%
	#1%
	\MYstore{\MYcurrentlabel}{#1}%
	\let\label\MYoldlabel%
}
\newcommand{\restatetheorem}[1]{%
	\let\MYoldlabel\label
	\let\label\MYdummylabel
	\begin{theorem*}[Restatement of \prettyref{#1}]
		\MYload{#1}
	\end{theorem*}
	\let\label\MYoldlabel
}
\newcommand{\restatelemma}[1]{%
	\let\MYoldlabel\label
	\let\label\MYdummylabel
	\begin{lemma*}[Restatement of \prettyref{#1}]
		\MYload{#1}
	\end{lemma*}
	\let\label\MYoldlabel
}
\newcommand{\restateprop}[1]{%
	\let\MYoldlabel\label
	\let\label\MYdummylabel
	\begin{proposition*}[Restatement of \prettyref{#1}]
		\MYload{#1}
	\end{proposition*}
	\let\label\MYoldlabel
}
\newcommand{\restatefact}[1]{%
	\let\MYoldlabel\label
	\let\label\MYdummylabel
	\begin{fact*}[Restatement of \prettyref{#1}]
		\MYload{#1}
	\end{fact*}
	\let\label\MYoldlabel
}
\newcommand{\restate}[1]{%
	\let\MYoldlabel\label
	\let\label\MYdummylabel
	\MYload{#1}
	\let\label\MYoldlabel
}
\newcommand{\eps}{\epsilon}
\let\origparagraph\paragraph
\renewcommand{\paragraph}[1]{\origparagraph{#1.}}
\newtheorem*{rep@theorem}{\rep@title}
\providecommand{\calD}{\mathcal{D}}
\providecommand{\calL}{\mathcal{L}}
\providecommand{\calN}{\mathcal{N}}
\def\shownotes{1}  
\def\showarxiv{0}  
\newcommand{\arxivnote}[2]{$\ll$\textsf{\footnotesize #1 notes: #2}$\gg$}
\newcommand{\arxivnote}[2]{}
\newcommand{\arxiv}[1]{{\color{red}\arxivnote{Tengyu}{#1}}}
\newcommand*{\email}[1]{\texttt{#1}}
\author[1]{Yuping Luo \textsuperscript{*}}
\author[2]{Huazhe Xu \textsuperscript{*}}
\author[4]{Yuanzhi Li}
\author[3]{Yuandong Tian}
\author[2]{Trevor Darrell}
\author[4]{Tengyu Ma}
\affil[1]{Princeton University, \email{yupingl@cs.princeton.edu}}
\affil[2]{University of California, Berkeley, \email{\{huazhe\_xu,trevor\}@eecs.berkeley.edu}}
\affil[3]{Facebook AI Research, \email{yuandong@fb.com}}
\affil[4]{Stanford University. \email{\{yuanzhil,tengyuma\}@stanford.edu}}
\begin{document}
\title{Algorithmic Framework for Model-based Deep Reinforcement Learning with Theoretical Guarantees}
\maketitle

\begin{abstract}
\footnotetext{* indicates equal contribution}
Model-based reinforcement learning (RL) is considered to be a promising approach to reduce the sample complexity that hinders model-free RL. However, the theoretical understanding of such methods has been rather limited. This paper introduces a novel algorithmic framework for designing and analyzing model-based RL algorithms with theoretical guarantees.
We design a meta-algorithm with a theoretical guarantee of monotone improvement to a local maximum of the expected reward.
The meta-algorithm iteratively builds a lower bound of the expected reward based on the estimated dynamical model and sample trajectories, and then maximizes the lower bound jointly over the policy and the model.
The framework extends the optimism-in-face-of-uncertainty principle to non-linear dynamical models in a way that requires \textit{no explicit} uncertainty quantification.
Instantiating our framework with  simplification gives a  variant of model-based RL algorithms Stochastic Lower Bounds Optimization (SLBO).
Experiments demonstrate that SLBO achieves state-of-the-art performance when only one million or fewer samples are permitted on a range of continuous control benchmark tasks.\footnote{The source code of this work is available at \href{https://github.com/roosephu/slbo}{https://github.com/roosephu/slbo}}

\end{abstract}
\section{Introduction}

In recent years deep reinforcement learning has achieved strong empirical success, including super-human performances on Atari games and Go~\citep{atari-nature,alphagozero} and learning locomotion and manipulation skills in robotics~\citep{levine2016end, schulman2015high, lillicrap2015continuous}. Many of these results are achieved by model-free RL algorithms that often require a massive number of samples, and therefore their applications are mostly limited to simulated environments. Model-based deep reinforcement learning, in contrast, exploits the information from state observations explicitly --- by planning with an estimated dynamical model --- and is considered to be a promising approach to reduce the sample complexity. Indeed, empirical results~\citep{pilco, deisenroth2013survey,levine2016end,  nagabandi2017neural, kurutach2018model, pong2018temporal}  have shown strong improvements in sample efficiency.

Despite promising empirical findings, many of \textit{theoretical} properties of model-based deep reinforcement learning are not well-understood. For example, how does the error of the estimated model affect the estimation of the value function and the planning? Can model-based RL algorithms be guaranteed to improve the policy monotonically and converge to a local maximum of the value function? How do we quantify the uncertainty in the dynamical models?  

It's challenging to address these questions theoretically in the context of deep RL with continuous state and action space and non-linear dynamical models. Due to the high-dimensionality,  learning models from observations in one part of the state space and extrapolating to another part sometimes involves a leap of faith.  The uncertainty quantification of the non-linear parameterized dynamical models is difficult --- even without the RL components, it is an active but widely-open research area. Prior work in model-based RL mostly quantifies uncertainty with either heuristics or simpler models~\citep{moldovan2015optimism, xie2016model,deisenroth2011pilco}. 

Previous theoretical work on model-based RL mostly focuses on either the finite-state MDPs~\citep{jaksch2010near, bartlett2009regal, fruit2018efficient, lakshmanan2015improved, hinderer2005lipschitz, pirotta2015policy, pirotta2013adaptive}, or the linear parametrization of the dynamics, policy, or value function~\citep{abbasi2011regret, simchowitz2018learning, dean2017sample, dyna-analysis, partial-model}, but not much on non-linear models. 
Even with an oracle prediction intervals\footnote{We note that the confidence interval of parameters are likely meaningless for over-parameterized neural networks models.} or posterior estimation, to the best of our knowledge, there was no previous algorithm with convergence guarantees for model-based deep RL. 

Towards addressing these challenges, the main contribution of this paper is to propose a novel algorithmic framework  for model-based deep RL with theoretical guarantees. Our meta-algorithm (Algorithm~\ref{alg:framework}) extends the optimism-in-face-of-uncertainty principle to non-linear dynamical models in a way that requires \textit{no explicit} uncertainty quantification of the dynamical models. 
%

Let $V^{\pi}$ be the value function $V^\pi$ of a policy $\pi$ on the true environment, and let $\widehat{V}^\pi$ be the value function of the policy $\pi$ on the estimated model $\widehat{M}$. We design provable upper bounds, denoted by $D^{\pi, \widehat{M}}$,  on how much the error can compound and divert the expected value $\widehat{V}^\pi$ of the imaginary rollouts from their real value $V^\pi$, in a neighborhood of some reference policy.
Such upper bounds capture the intrinsic difference between the estimated and real dynamical model with respect to the particular reward function under consideration.


The discrepancy bounds $D^{\pi, \widehat{M}}$ naturally leads to a lower bound for the true value function:
\begin{align}V^{\pi} \ge \widehat{V}^{\pi}-D^{\pi, \widehat{M}}\,.\label{eqn:lowerbound}\end{align}

 Our algorithm iteratively collects batches of samples from the interactions with environments, builds the lower bound above, and then maximizes it over both the dynamical model $\widehat{M}$ and the policy $\pi$. We can use any RL algorithms to optimize the lower bounds, because it will be designed to only depend on the sample trajectories from a fixed reference policy (as opposed to requiring new interactions with the policy iterate.)
 
 We show that the performance of the policy is guaranteed to monotonically increase, assuming the optimization within each iteration succeeds (see Theorem~\ref{thm:main}.) To the best of our knowledge, this is the first theoretical guarantee of monotone improvement for model-based deep RL. 

Readers may have realized that optimizing a robust lower bound is reminiscent of robust control and robust optimization. The  distinction is that we optimistically and iteratively maximize the RHS of~\eqref{eqn:lowerbound} jointly over the model and the policy. The iterative approach allows the algorithms to collect higher quality trajectory adaptively, and the optimism in model optimization encourages explorations of the parts of space that are not covered by the current discrepancy bounds. 

To instantiate the meta-algorithm, we design a few valid discrepancy bounds in Section~\ref{sec:design}. In Section~\ref{subsec:error},  we recover the norm-based model loss by imposing the additional assumption of a Lipschitz value function. The result suggests a norm is preferred compared to the square of the norm. Indeed in Section~\ref{sec:exp}, we show that experimentally learning with $\ell_2$ loss significantly outperforms the mean-squared error loss ($\ell_2^2$). 

In Section~\ref{sec:intrinsic}, we design a discrepancy bound that is \textit{invariant} to the representation of the state space. Here we measure the loss of the model by the difference between the value of the predicted next state and the value of the true next state. Such a loss function is shown to be invariant to one-to-one transformation of the state space. Thus we argue that the loss is an intrinsic measure for the model error without any information beyond observing the rewards. We also refine our bounds in Section~\ref{subsec:refined} by utilizing some mathematical tools of measuring the difference between policies in $\chi^2$-divergence (instead of KL divergence or TV distance).

Our analysis also sheds light on the comparison between model-based RL and on-policy model-free RL algorithms such as policy gradient or TRPO~\citep{schulman2015trust}. The RHS of equation~\eqref{eqn:lowerbound} is likely to be a good approximator of $V^{\pi}$ in a larger neighborhood than the linear approximation of $V^{\pi}$ used in policy gradient is (see Remark~\ref{remark:1}.)  

Finally, inspired by our framework and analysis, we design a variant of model-based RL algorithms Stochastic Lower Bounds Optimization (SLBO).  
Experiments demonstrate that SLBO achieves state-of-the-art performance when only 1M samples are permitted on a range of continuous control benchmark tasks.

%

\section{Notations and Preliminaries}

We denote the state space by $\mathcal{S}$, the action space by $\mathcal{A}$.  A policy $\pi(\cdot\vert s)$ specifies the conditional distribution over the action space given a state $s$. 
A dynamical model $M(\cdot \vert s,a)$ specifies the conditional distribution of the next state given the current state $s$ and action $a$. 
We will use $M^\star$ globally to denote the unknown true dynamical model. 
Our target applications are problems with the continuous state and action space, although the results apply to discrete state or action space as well.
When the model is deterministic, $M(\cdot \vert s,a)$ is a dirac measure.  In this case, we use $M(s,a)$ to denote the unique value of $s'$ and view $M$ as a function from $\mathcal{S}\times \mathcal{A}$ to $\mathcal{S}$. Let $\mathcal{M}$ denote a (parameterized) family of models that we are interested in, and $\Pi$ denote a (parameterized) family of policies. 

Unless otherwise stated, for random variable $X$, we will use $p_X$ to denote its density function.


Let $S_0$ be the random variable for the initial state. Let $S_t^{\pi, M}$ to denote the random variable of the states at steps $t$ when we execute the policy $\pi$ on the dynamic model $M$ stating with $S_0$. Note that $S_0^{\pi, M} = S_0$ unless otherwise stated. We will omit the subscript when it's clear from the context. 
We use $A_t$ to denote the actions at step $t$ similarly. We often use $\tau$ to denote the random variable for the trajectory $(S_0, A_1,\dots, S_t, A_t,\dots)$.  
Let $R(s,a)$ be the reward function at each step. 
We assume $R$ is \textit{known} throughout the paper, although $R$ can be also considered as part of the model if unknown. 
Let $\gamma$ be the discount factor. 

Let $V^{\pi, M}$ be the value function on the model $M$ and policy $\pi$ defined as: 
\begin{align}
V^{\pi, M}(s) = \Exp_{\substack{\forall t\ge 0, A_t \sim \pi(\cdot \mid S_t)\\ S_{t+1}\sim M(\cdot \mid S_t,A_t)}}\left[\sum_{t=0}^{\infty} \gamma^t R(S_t, A_t) \mid S_0 = s\right]
\end{align}
We define $V^{\pi, M} = \Exp\left[V^{\pi, M}(S_0)\right]$ as the expected reward-to-go at Step 0 (averaged over the random initial states).  Our goal is to maximize the reward-to-go on the true dynamical model, that is,  $V^{\pi, M^\star}$, over the policy $\pi$. For simplicity, throughout the paper, we set $\kappa = \gamma(1-\gamma)^{-1}$ since it  occurs frequently in our equations.
Every policy $\pi$ induces a distribution of states visited by policy $\pi$:
\begin{definition}\label{def:rho}
	For a policy $\pi$, define $\rho^{\pi, M}$ as the discounted distribution of the states visited by $\pi$ on $M$. Let $\rho^{\pi}$ be a shorthand for $\rho^{\pi, M^\star}$ and we omit the superscript $M^\star$ throughout the paper.   Concretely,
we have $\rho^\pi = (1-\gamma) \sum_{t=0}^{\infty} \gamma^t \cdot p_{S_t^\pi}$
\end{definition}
%

\section{Algorithmic Framework}
\label{sec:algo}

As mentioned in the introduction, towards optimizing $V^{\pi, M^\star}$,\footnote{Note that in the introduction we used $V^\pi$ for simplicity, and in the rest of the paper we will make the dependency on $M^\star$ explicit. } our plan is to build a lower bound for $V^{\pi, M^\star}$ of the following type and optimize it iteratively: 
\begin{align}
V^{\pi, M^\star} \ge V^{\pi, \hatM} - \cD(\hatM, \pi) \label{eqn:lower_bound}
\end{align}
where $\cD(\hatM, \pi)\in \R_{\ge 0}$ bounds from above the discrepancy between $V^{\pi,\hatM}$ and $V^{\pi, M^\star}$. Building such an optimizable discrepancy bound globally that holds for all $\hatM$ and $\pi$ turns out to be rather difficult, if not impossible. Instead, we shoot for establishing such a bound over the neighborhood of a reference policy $\piref$. 
\begin{align}
V^{\pi, M^\star}\ge V^{\pi, \hatM} - \cD_{\piref, \delta}(\widehat{M},\pi), \quad \quad \forall \pi ~ \textup{s.t.}~ d(\pi, \piref)\le \delta  \label{eqn:base} \tag{R1}
\end{align}
Here $d(\cdot,\cdot)$ is a function that measures the closeness of two policies, which will be chosen later in alignment with the choice of $\cD$.  We will mostly omit the subscript $\delta$ in $\cD$ for simplicity in the rest of the paper. 
We will require our discrepancy bound to vanish when $\hatM$ is an accurate model: 
\begin{align}
\hatM = M^\star \Longrightarrow \cD_{\piref}(\hatM,\pi) = 0, \quad \forall \pi, \piref\label{eqn:equality}\tag{R2}
\end{align}

The third requirement for the discrepancy bound $\cD$ is that it can be estimated and optimized in the sense that 
\begin{align}
\cD_{\piref}(\hatM,\pi) \textup{ is of the form } \Exp_{\tau \sim \piref, M^\star}[f(\hatM, \pi,\tau)]\tag{R3} \label{eqn:optimizable}
\end{align}
where $f$ is a known differentiable function. We can estimate such discrepancy bounds for \textit{every} $\pi$ in the neighborhood of $\piref$ by sampling empirical trajectories $\tau^{(1)}, \dots, \tau^{(n)}$  from executing policy $\piref$ on the real environment $M^\star$ and compute the average of $f(\hatM, \pi, \tau^{(i)})$'s. We would have to insist that the expectation cannot be over the randomness of trajectories from $\pi$ on $M^\star$, because then we would have to re-sample trajectories for every possible $\pi$ encountered. 

For example, assuming the dynamical models are all deterministic, one of the valid discrepancy bounds (under some strong assumptions) that will prove in Section~\ref{sec:design} is a multiple of the  error of the prediction of $\hatM$ on the trajectories from $\piref$: 
\begin{align} \cD_{\piref}(\hatM,\pi) = L\cdot \Exp_{S_0,\dots, S_t, \sim \piref, M^\star}\left[\|\hatM(S_t)-S_{t+1}\|\right]\label{eqn:21}\end{align}

Suppose we can establish such an  discrepancy bound $\cD$ (and the distance function $d$) with properties ~\eqref{eqn:base},~\eqref{eqn:equality}, and~\eqref{eqn:optimizable}, --- which will be the main focus of Section~\ref{sec:design} ---, then we can devise the following meta-algorithm (Algorithm ~\ref{alg:framework}). We iteratively optimize the lower bound over the policy $\pi_{k+1}$ and the model $M_{k+1}$, subject to the constraint that the policy is not very far from the reference policy $\pi_k$ obtained in the previous iteration. For simplicity, we only state the population version with the exact computation of $ \cD_{\piref}(\hatM,\pi)$, though empirically it is estimated by sampling trajectories. 

\begin{algorithm}\caption{Meta-Algorithm for Model-based RL}\label{alg:framework}
	{\bf Inputs: } Initial policy $\pi_0$. Discrepancy bound $D$ and distance function $d$ that satisfy equation~\eqref{eqn:base} and~\eqref{eqn:equality}. \\
	{\bf For} $k = 0$ to $T$:
	\begin{align}
	\pi_{k+1}, M_{k+1} & = \argmax_{\pi\in \Pi,~ M\in \mathcal{M}}  ~~V^{\pi, M} -  \cD_{\pi_k,\delta}(M, \pi) \label{eqn:obj} \\
	& ~~~~~~\textup{s.t.} ~~ d(\pi, \pi_k) \le \delta\label{eqn:constraints}
		\end{align}
\end{algorithm}

We first remark that the discrepancy bound $\cD_{\pi_k}(M,\pi)$ in the objective plays the role of learning the dynamical model by ensuring the model to fit to the sampled trajectories. For example, using the discrepancy bound in the form of equation~\eqref{eqn:21}, we roughly  recover the standard objective for model learning, with the caveat that we only have the norm instead of the square of the norm in MSE.  Such distinction turns out to be empirically important for better performance (see Section~\ref{sec:exp}).

Second, our algorithm can be viewed as an extension of the optimism-in-face-of-uncertainty (OFU) principle to non-linear parameterized setting:  jointly optimizing $M$ and $\pi$ encourages the algorithm to choose the most optimistic model among those that can be used to accurately estimate the value function. (See~\citep{jaksch2010near, bartlett2009regal, fruit2018efficient, lakshmanan2015improved, pirotta2015policy, pirotta2013adaptive} and references therein for the OFU principle in finite-state MDPs.) The main novelty here is to optimize the lower bound directly, without explicitly building any confidence intervals, which turns out to be challenging in deep learning.  In other words, the uncertainty is measured straightforwardly by how the error would affect the estimation of the value function. 

Thirdly, the maximization of $V^{\pi, M}$, when $M$ is fixed, can be solved by any model-free RL algorithms with $M$ as the environment without querying any real samples. Optimizing $V^{\pi, M}$ jointly over $\pi,M$ can be also viewed as another RL problem with an extended actions space using the known ``extended MDP technique''. See~\citep[section 3.1]{jaksch2010near} for details. 

Our main theorem shows formally that the policy performance in the real environment is non-decreasing under the assumption that the real dynamics belongs to our parameterized family $\mathcal{M}$.\footnote{We note that such an assumption, though restricted, may not be very far from reality: optimistically speaking, we only need to approximate the dynamical model accurately on the trajectories of the optimal policy. This might be much easier than approximating the dynamical model globally. }

\begin{theorem}\label{thm:main}
	Suppose that $M^\star\in \mathcal{M}$, that $\cD$ and $d$ satisfy equation~\eqref{eqn:base} and~\eqref{eqn:equality}, and the optimization problem in equation~\eqref{eqn:obj} is solvable at each iteration. Then, Algorithm~\ref{alg:framework} produces a sequence of policies $\pi_0,\dots, \pi_T$ with monotonically increasing values: 
	\begin{align}
	V^{\pi_0, M^\star} \le     V^{\pi_1, M^\star}  \le \cdots \le     V^{\pi_T, M^\star}  \label{eqn:monotone}
	\end{align}
	Moreover, as $k\rightarrow \infty$,  the value $V^{\pi_k, M^\star}$ converges to some $V^{\bar{\pi}, M^\star}$, where $\bar{\pi}$ is a local maximum of $V^{\pi, M^\star}$ in domain $\Pi$. 
\end{theorem}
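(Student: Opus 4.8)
The plan is to prove monotonicity first by a short feasibility-plus-optimality argument, and then obtain convergence to a local maximum by combining the guaranteed per-step improvement with a compactness/continuity argument. For \textbf{monotonicity}, the key observation is that the pair $(\pi_k, M^\star)$ is always feasible for the optimization in~\eqref{eqn:obj}: the policy $\pi_k$ trivially satisfies the constraint $d(\pi_k,\pi_k)\le\delta$ in~\eqref{eqn:constraints}, and $M^\star\in\mathcal{M}$ by assumption. Since $(\pi_{k+1},M_{k+1})$ is the maximizer, its objective value dominates that of $(\pi_k,M^\star)$, and by~\eqref{eqn:equality} the discrepancy term at $(\pi_k, M^\star)$ vanishes, so
$$V^{\pi_{k+1},M_{k+1}}-\cD_{\pi_k}(M_{k+1},\pi_{k+1})\ge V^{\pi_k,M^\star}-\cD_{\pi_k}(M^\star,\pi_k)=V^{\pi_k,M^\star}.$$
I then apply the lower bound~\eqref{eqn:base} with reference policy $\piref=\pi_k$, model $\hatM=M_{k+1}$, and policy $\pi=\pi_{k+1}$ — which is legitimate precisely because the constraint guarantees $d(\pi_{k+1},\pi_k)\le\delta$ — to turn the left-hand side into a genuine lower bound on $V^{\pi_{k+1},M^\star}$. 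Chaining the two inequalities yields $V^{\pi_{k+1},M^\star}\ge V^{\pi_k,M^\star}$, which is~\eqref{eqn:monotone}.

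The same computation gives a sharper statement that drives the convergence argument. Lower-bounding the maximum in~\eqref{eqn:obj} instead by the candidate $(\pi, M^\star)$ for an \emph{arbitrary} feasible $\pi$, and using~\eqref{eqn:equality} to kill the discrepancy term for every such $\pi$, I obtain $V^{\pi_{k+1},M^\star}\ge\max_{\pi:\,d(\pi,\pi_k)\le\delta}V^{\pi,M^\star}$. Assuming the reward is bounded (so that $V^{\cdot,M^\star}$ is bounded), the monotone sequence $V^{\pi_k,M^\star}$ converges, hence its increments $\epsilon_k:=V^{\pi_{k+1},M^\star}-V^{\pi_k,M^\star}\to0$. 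Combined with the sharper inequality, this shows that the best value achievable in the $\delta$-neighborhood of $\pi_k$ exceeds $V^{\pi_k,M^\star}$ by at most $\epsilon_k\to0$.

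For \textbf{convergence to a local maximum}, I would invoke compactness of $\Pi$ to extract a subsequence $\pi_{k_j}\to\bar\pi$. Writing $g(\pi_0):=\max_{\pi:\,d(\pi,\pi_0)\le\delta}V^{\pi,M^\star}$, the previous step reads $g(\pi_{k_j})\le V^{\pi_{k_j},M^\star}+\epsilon_{k_j}$; passing to the limit using continuity of $\pi\mapsto V^{\pi,M^\star}$ and of $g$ (the latter via Berge's maximum theorem, since the constraint correspondence $\pi_0\mapsto\{\pi:d(\pi,\pi_0)\le\delta\}$ is continuous and compact-valued) gives $g(\bar\pi)\le V^{\bar\pi,M^\star}$. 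As the reverse inequality is trivial (take $\pi=\bar\pi$ in the maximum), I conclude $g(\bar\pi)=V^{\bar\pi,M^\star}$, i.e.\ $\bar\pi$ maximizes $V^{\cdot,M^\star}$ over its entire $\delta$-ball and is therefore a local maximum.

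\textbf{Main obstacle.} The monotonicity half is essentially a one-line feasibility-plus-optimality argument, and I expect no difficulty there. The delicate part is the local-maximum claim: it is not purely algebraic and relies on regularity that the statement leaves implicit — boundedness of the reward, compactness of the policy class $\Pi$, continuity of $\pi\mapsto V^{\pi,M^\star}$, and (most subtly) continuity of the neighborhood-maximum $g$. Establishing this last point rigorously is where Berge's theorem enters, and one must be careful that ``local maximum'' is read as optimality over the $\delta$-ball rather than as an infinitesimal first-order condition.
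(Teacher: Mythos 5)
Your proof is correct. The monotonicity half is essentially identical to the paper's: feasibility of $(\pi_k, M^\star)$ for~\eqref{eqn:obj}, optimality of $(\pi_{k+1},M_{k+1})$, requirement~\eqref{eqn:equality} to kill the discrepancy at the true model, and then~\eqref{eqn:base} with $\piref=\pi_k$ to convert the surrogate objective into a bound on $V^{\pi_{k+1},M^\star}$. For the local-maximum half you take a genuinely different route. The paper argues by contradiction: it extracts a convergent subsequence $\pi_{k_j}\to\bar\pi$, supposes a strictly better $\pi'$ exists with $d(\bar\pi,\pi')<\delta/2$, picks an iterate $\pi_t$ within $\delta/2$ of $\bar\pi$, and shows $(\pi',M^\star)$ would beat $(\pi_{t+1},M_{t+1})$ in iteration $t$ --- using the very same inequality chain you isolate as the ``sharper statement'' $V^{\pi_{k+1},M^\star}\ge V^{\pi,M^\star}$ for every feasible $\pi$. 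You instead make that inequality the centerpiece, define $g(\pi_0)=\max_{d(\pi,\pi_0)\le\delta}V^{\pi,M^\star}$, note $g(\pi_k)-V^{\pi_k,M^\star}\le\epsilon_k\to 0$ by monotone convergence, and pass to the limit. What each approach buys: yours is quantitative (it yields $(\delta,\epsilon_k)$-approximate local optimality at every iterate, which is exactly the notion the paper later needs for its finite-sample Theorem~\ref{thm:sample_complexity}) and concludes optimality over the full $\delta$-ball rather than the $\delta/2$-ball; the cost is that you must assume continuity of the neighborhood-maximum $g$, for which Berge's theorem requires continuity of the constraint correspondence $\pi_0\mapsto\{\pi:d(\pi,\pi_0)\le\delta\}$. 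The paper's contradiction argument avoids Berge but implicitly needs $d$ to behave like a metric (symmetry and a triangle inequality to get $d(\pi',\pi_t)\le\delta$ from the two $\delta/2$ bounds), which the KL-based $\dzero$ does not literally satisfy --- so both arguments lean on comparable unstated regularity, and you are commendably explicit about yours.
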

	

The theorem above can also be extended to a finite sample complexity result with standard concentration inequalities. We show in Theorem~\ref{thm:sample_complexity} that we can obtain an approximate local maximum in $O(1/\epsilon)$ iterations with sample complexity (in the number of trajectories) that is polynomial in dimension and accuracy $\epsilon$ and is logarithmic in certain smoothness parameters. 

\begin{proof}[Proof of Theorem~\ref{thm:main}]
	
	Since $\cD$ and $d$ satisfy equation~\eqref{eqn:base}, we have that 
	\begin{align}
	V^{\pi_{k+1}, M^\star} \ge V^{\pi_{k+1}, M_{k+1}} -  \cD_{\pi_k}(M_{k+1}, \pi_{k+1}) \nonumber
	\end{align}
	By the definition that $\pi_{k+1}$ and $M_{k+1}$ are the optimizers of equation~\eqref{eqn:obj}, we have that 
	\begin{align}
	V^{\pi_{k+1}, M_{k+1}} -  \cD_{\pi_k}(M_{k+1}, \pi_{k+1}) \ge     V^{\pi_{k}, M^\star} -  \cD_{\pi_k}(M^\star, \pi_{k}) = V^{\pi_{k}, M^\star} \tag{by equation~\ref{eqn:equality}}
	\end{align}
	Combing the two equations above we complete the proof of equation~\eqref{eqn:monotone}. 
	
	For the second part of the theorem, by compactness, we have that a subsequence of $\pi_k$ converges to some $\bar{\pi}$. By the monotonicity we have $V^{\pi_k, M^\star}\le V^{\bar{\pi}, M^\star}$ for every $k\ge 0$. For the sake of contradiction, we assume $\bar{\pi}$ is a not a local maximum, then in the neighborhood of $\bar{\pi}$ there exists $\pi'$ such that $V^{\pi', M^\star} > V^{\bar{\pi}, M^\star}$ and $d(\bar{\pi}, \pi') < \delta/2$. Let $t$ be such that $\pi_t$ is in the $\delta/2$-neighborhood of $\bar{\pi}$. Then we see that $(\pi', M^\star)$ is a better solution than $(\pi_{t+1}, M_{t+1})$ for the optimization problem~\eqref{eqn:obj} in iteration $t$ because $V^{\pi', M^\star} > V^{\bar{\pi}, M^\star} \ge V^{\pi_{t+1}, M^\star}\ge V^{\pi_{t+1}, M_{t+1}} - \cD_{\pi_t}(M_{t+1},\pi_{t+1})$. (Here the last inequality uses equation~\eqref{eqn:base} with $\pi_t$ as $\piref$.) The fact $(\pi', M^\star)$ is a strictly better solution than $(\pi_{t+1}, M_{t+1})$ contradicts the fact that $(\pi_{t+1}, M_{t+1})$ is defined to be the optimal solution of~\eqref{eqn:obj} . Therefore $\bar{\pi}$ is a local maximum and we complete the proof.
	
\end{proof}

\noindent{\color{blue} \textbf{Update in Feb 2021}.  The authors and their collaborators Jiaqi Yang and Kefan Dong recently realized that even though Theorem~\ref{thm:main} is technically correct, its conditions~\eqref{eqn:base}, ~\eqref{eqn:equality}, and~\eqref{eqn:optimizable}  cannot simultaneously hold unless we have already obtained enough diverse data to identity $M^\star$ exactly.}

 Particularly, the condition~\eqref{eqn:equality} and~\eqref{eqn:optimizable} are at odds with each other---\eqref{eqn:optimizable} essentially says that the discrepancy bound $D$ can only carry information about $M^\star$ through the sampled data, and the non-trivial uncertainty about $M^\star$ from observing only sampled data implies that the discrepancy bound $\cD_{\piref}(\hatM,\pi)$ can never be zero (regardless of $M^\star =\hatM$ or not). We can formally see this from the linear bandit setting (which corresponds to the horizon $H=1$ case.) 
	
	In the linear bandit setting, with a bit abuse of notation, we still use $M\in \mathbb{R}^d$ for the model parameters and use $\pi$ for the action in $\mathbb{R}^d$. We assume that the reward is linear: $V^{\pi, M} = \langle \pi, M \rangle$. 
	In the sequel, we will show that~\eqref{eqn:optimizable} and~\eqref{eqn:base} imply the ``$\Leftarrow$'' direction in~\eqref{eqn:equality}, and therefore we have,
	\begin{align}
	\widehat{M} = M^* \iff  \forall \pi, \pi_{\mathrm{ref}}, D_{\pi_\mathrm{ref}}(\widehat{M}, \pi)= 0,  \label{eq:slbo-1}
	\end{align}
	Note that \eqref{eq:slbo-1} is statistically impossible because it allows us to find the true model $M^*$ directly through checking if $D$ is zero for all policies $\pi, \pi_{\mathrm{ref}}$, which is not possible given finite data (at least not before we have sufficient data). 
	
	Now we prove the reverse direction of~\eqref{eqn:equality}. For the sake of contradiction, we assume that there exists $\widehat{M} \ne M^*$ such that for all policies $\pi, \pi_{\mathrm{ref}}$, we have $d(\pi, \pi_{\mathrm{ref}}) \le \delta$ and $D_{\pi_\mathrm{ref}}(\widehat{M}, \pi)= 0$.  There exists a policy $\pi$ such that $V^{\pi, \widehat{M}} = \langle \pi, \hatM\rangle > \langle \pi, M^\star\rangle = V^{\pi, M^*}$ because we can take a policy $\pi$ that correlates with $\hatM$ more than $M^\star$ (and this is the only place that we use linearity.)  
	By~\eqref{eqn:base}, we have $0 \ge D_{\pi_\mathrm{ref}}(\widehat{M}, \pi) \ge V^{\pi, \widehat{M}} - V^{\pi, M^*}$, which is a contradiction.

\section{Discrepancy Bounds Design}\label{sec:design}In this section, we design discrepancy bounds that can provably satisfy the requirements~\eqref{eqn:base}, ~\eqref{eqn:equality}, and~\eqref{eqn:optimizable}. We design increasingly stronger discrepancy bounds from Section~\ref{subsec:error} to Section~\ref{subsec:refined}.

\subsection{Norm-based prediction error bounds}\label{subsec:error}

In this subsection, we assume the dynamical model $M^\star$ is deterministic and we also learn with a deterministic model $\widehat{M}$. Under assumptions defined below,  we derive a discrepancy bound $\cD$ of the form $\norm{\hatM(S, A) - M^\star(S, A)}$ averaged over the observed state-action pair $(S,A)$  on the dynamical model $\hatM$. This suggests that the  norm is  a better metric than the mean-squared error for learning the model, which is empirically shown in Section~\ref{sec:exp}. Through the derivation, we will also introduce a telescoping lemma, which serves as the main building block towards other finer discrepancy bounds.

We make the (strong) assumption that the value function $V^{\pi, \hatM}$ on the estimated dynamical model is $L$-Lipschitz w.r.t to some norm $\|\cdot\|$ in the sense that
\begin{align}
\forall s, s'\in \mathcal{S}, 	\big|V^{\pi, \hatM}(s) - V^{\pi, \hatM}(s')\big|\le L\cdot \|s-s'\| \label{eqn:lip}
\end{align}
In other words, nearby starting points should give reward-to-go under the same policy $\pi$. We note that not every real environment $M^\star$ has this property, let alone the estimated dynamical models. However, once the real dynamical model induces a Lipschitz value function, we may penalize the Lipschitz-ness of the value function of the estimated model during the training.

We start off with a lemma showing that the expected prediction error is an upper bound of the discrepancy between the real and imaginary values.

\begin{lemma}\label{prop:norm}
	Suppose  $V^{\pi,\hatM}$ is $L$-Lipschitz (in the sense of ~\eqref{eqn:lip}). Recall $\kappa = \gamma(1-\gamma)^{-1}$.
	\begin{align}
		\big|V^{\pi, \widehat{M}} -	V^{\pi, M^\star}\big| \le \kappa L\Exp_{\substack{S\sim \rho^{\pi}\\ A\sim \pi(\cdot\vert S)}}\left[\norm{\hatM(S, A) - M^\star(S, A)}\right]\label{eqn:15}
	\end{align}
\end{lemma}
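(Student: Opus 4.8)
The plan is to prove the bound by a telescoping argument that compares the \emph{same} policy $\pi$ run on the two deterministic models $\hatM$ and $M^\star$. Write $W = V^{\pi, \hatM}$ for the value function of $\pi$ on the estimated model; by hypothesis $W$ is $L$-Lipschitz in the sense of~\eqref{eqn:lip}. The key structural fact is that $W$ satisfies the Bellman equation on $\hatM$, namely $W(s) = \Exp_{A\sim\pi(\cdot\vert s)}[R(s,A) + \gamma W(\hatM(s,A))]$, whereas the true value $V^{\pi, M^\star}$ satisfies the analogous identity with $M^\star$ in place of $\hatM$. The discrepancy we want to control is thus caused entirely by the mismatch between $\hatM(s,a)$ and $M^\star(s,a)$, fed through the (Lipschitz) function $W$.

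First I would evaluate $W$ along a trajectory $S_0, A_0, S_1, A_1, \dots$ generated by running $\pi$ on the \emph{real} model $M^\star$, and consider the discounted sum of its one-step Bellman residuals, $\sum_{t\ge 0}\gamma^t\, \Exp[W(S_t) - R(S_t,A_t) - \gamma W(S_{t+1})]$. Telescoping the terms $\gamma^t\Exp[W(S_t)] - \gamma^{t+1}\Exp[W(S_{t+1})]$ collapses to $\Exp[W(S_0)] = V^{\pi,\hatM}$, while $\sum_{t\ge0}\gamma^t\Exp[R(S_t,A_t)]$ equals $V^{\pi,M^\star}$ since the trajectory is drawn from $M^\star$; hence the residual sum equals $V^{\pi,\hatM} - V^{\pi,M^\star}$. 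This is the reusable telescoping lemma the section advertises. Next I would rewrite each residual using the Bellman equation on $\hatM$: since $\Exp[W(S_t)] = \Exp[R(S_t,A_t) + \gamma W(\hatM(S_t,A_t))]$ and $S_{t+1} = M^\star(S_t,A_t)$, the step-$t$ residual is exactly $\gamma\,\Exp[W(\hatM(S_t,A_t)) - W(M^\star(S_t,A_t))]$. Applying the $L$-Lipschitz assumption pointwise bounds it by $\gamma L\,\Exp\norm{\hatM(S_t,A_t) - M^\star(S_t,A_t)}$. Summing over $t$, factoring out the extra $\gamma$, and recognizing $(1-\gamma)\sum_{t\ge0}\gamma^t p_{S_t^\pi} = \rho^\pi$ from Definition~\ref{def:rho} turns the time-indexed sum into an expectation over $S\sim\rho^\pi$, producing the prefactor $\gamma(1-\gamma)^{-1} = \kappa$ and yielding precisely~\eqref{eqn:15}.

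The main obstacle, and essentially the only place needing care, is justifying the telescoping identity rigorously: one must check that $\gamma^t\Exp[W(S_t)]\to 0$ so the collapse is legitimate, and that the interchange of the infinite sum with the expectation is valid. Both follow from the standard assumption that rewards, and hence $W$, are uniformly bounded together with $\gamma<1$, which gives absolute convergence; I would state this boundedness explicitly at the outset. Everything after the telescoping step is routine bookkeeping of discount factors, so the conceptual content lives entirely in the residual identity and the single application of Lipschitzness.
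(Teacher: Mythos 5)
Your proof is correct and is essentially the paper's own argument: the discounted sum of Bellman residuals of $V^{\pi,\hatM}$ along a real trajectory is exactly the paper's telescoping decomposition via the hybrid values $W_j$ (your step-$t$ residual $\gamma^{t+1}\Exp\left[W(\hatM(S_t,A_t)) - W(M^\star(S_t,A_t))\right]$ coincides with the paper's $W_t - W_{t+1}$), and the final step --- bounding each term by $L\,\norm{\hatM(S,A)-M^\star(S,A)}$ via Lipschitzness and converting the discounted time sum into an expectation over $\rho^\pi$ --- is identical to the paper's deduction of Lemma~\ref{prop:norm} from Lemma~\ref{lem:telescoping}. Your explicit attention to the vanishing tail $\gamma^t\Exp[W(S_t)]\to 0$ under bounded rewards is a point the paper glosses over, but it is the standard justification and does not change the argument.
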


However, in RHS in equation~\ref{eqn:15}  cannot serve as a discrepancy bound because it does not satisfy the requirement~\eqref{eqn:optimizable} --- to optimize it over $\pi$ we need to collect samples from $\rho^\pi$ for every iterate $\pi$ ---  the state distribution of the policy $\pi$ on the \textit{real} model $M^\star$. The main proposition of this subsection stated next shows that for every $\pi$ in the neighborhood of a reference policy $\piref$, we can replace the distribution $\rho^{\pi}$ be a fixed distribution $\rho^{\piref}$ with incurring only a higher order approximation.
We use the expected KL divergence between two $\pi$ and $\piref$ to define the neighborhood:
\begin{align}
 \dzero(\pi, \piref) & = \Exp_{_{S\sim \rho^\pi}}\left[KL(\pi(\cdot\vert S), \piref(\cdot \vert S))^{1/2}\right] \label{eqn:dzero}
\end{align}

\begin{proposition} \label{prop:normextension}In the same setting of Lemma~\ref{prop:norm}, assume in addition that $\pi$ is close to a reference policy $\piref$ in the sense that $\dzero(\pi, \piref) \le \delta$, and that the states in $\mathcal{S}$ are uniformly bounded in the sense that $\|s\|\le B, \forall s\in \mathcal{S}$. Then,
	\begin{align}
				\big|V^{\pi, \widehat{M}} -	V^{\pi, M^\star}\big|\le \kappa L\Exp_{\substack{S\sim \rho^{\piref}\\ A\sim \pi(\cdot\vert S)}}\left[\norm{\hatM(S, A) - M^\star(S, A)}\right] + 2\kappa^2\delta B\label{eqn:19}
	\end{align}
\end{proposition}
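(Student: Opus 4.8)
The plan is to start from the bound already established in Lemma~\ref{prop:norm}, namely $\big|V^{\pi,\hatM}-V^{\pi,M^\star}\big|\le\kappa L\,\Exp_{S\sim\rho^\pi,\,A\sim\pi}\big[\norm{\hatM(S,A)-M^\star(S,A)}\big]$, and to perform a change of measure in the \emph{state} distribution, replacing $\rho^\pi$ by $\rho^{\piref}$. Writing $h(S):=\Exp_{A\sim\pi(\cdot\vert S)}\big[\norm{\hatM(S,A)-M^\star(S,A)}\big]$, I would split $\Exp_{S\sim\rho^\pi}[h(S)]=\Exp_{S\sim\rho^{\piref}}[h(S)]+\big(\Exp_{S\sim\rho^\pi}[h(S)]-\Exp_{S\sim\rho^{\piref}}[h(S)]\big)$. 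After multiplying by $\kappa L$ the first term is exactly the main term on the right-hand side of~\eqref{eqn:19}, so everything reduces to controlling the change-of-measure error $\kappa L\,\big|\Exp_{S\sim\rho^\pi}[h]-\Exp_{S\sim\rho^{\piref}}[h]\big|$ by the higher-order quantity claimed.

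For that error I would use the elementary inequality $\big|\Exp_\mu[h]-\Exp_\nu[h]\big|\le(\sup h-\inf h)\,\|\mu-\nu\|_{\mathrm{TV}}$ together with $\|h\|_\infty\le 2B$: since $\hatM(S,A)$ and $M^\star(S,A)$ are both states, the uniform bound $\|s\|\le B$ and the triangle inequality give $\norm{\hatM(S,A)-M^\star(S,A)}\le 2B$. Hence the error is at most $\kappa L\cdot 2B\cdot\|\rho^\pi-\rho^{\piref}\|_{\mathrm{TV}}$, and it remains to bound the discrepancy between the two discounted state distributions by the policy distance.

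The key step, and the main obstacle, is the state-distribution-shift lemma
\[
\|\rho^\pi-\rho^{\piref}\|_{\mathrm{TV}}\le\kappa\,\Exp_{S\sim\rho^\pi}\big[\|\pi(\cdot\vert S)-\piref(\cdot\vert S)\|_{\mathrm{TV}}\big].
\]
To prove it I would unroll both Markov chains (driven by the same dynamics $M^\star$ but by $\pi$ versus $\piref$) one step at a time: adding and subtracting the kernel applied to $(\piref,p^\pi_t)$ and using that a Markov kernel is non-expansive in total variation, the layer-$t$ marginals satisfy $\|p^\pi_t-p^{\piref}_t\|_{\mathrm{TV}}\le\sum_{j<t}\Exp_{S\sim p^\pi_j}\big[\|\pi(\cdot\vert S)-\piref(\cdot\vert S)\|_{\mathrm{TV}}\big]$ by telescoping the one-step couplings. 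Taking the discounted sum $\rho=(1-\gamma)\sum_t\gamma^t p_t$ and interchanging the order of summation turns the prefactor $(1-\gamma)\sum_{t>j}\gamma^t$ into $\gamma^{\,j+1}$, which reassembles $\rho^\pi$ and produces exactly the factor $\gamma/(1-\gamma)=\kappa$. I expect this telescoping-plus-discounting bookkeeping to be the most delicate part, and I would be careful that the divergence is measured under $\rho^\pi$ (not $\rho^{\piref}$) — precisely the convention baked into the definition of $\dzero$ in~\eqref{eqn:dzero}, which is what makes the two pieces fit together.

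Finally I would pass from total variation to the KL-based distance via Pinsker's inequality, $\|\pi(\cdot\vert S)-\piref(\cdot\vert S)\|_{\mathrm{TV}}\le\sqrt{\mathrm{KL}(\pi(\cdot\vert S),\piref(\cdot\vert S))}$ (up to the constant $1/\sqrt2$), so that $\Exp_{S\sim\rho^\pi}\big[\|\pi-\piref\|_{\mathrm{TV}}\big]\le\Exp_{S\sim\rho^\pi}\big[\mathrm{KL}^{1/2}\big]=\dzero(\pi,\piref)\le\delta$. Chaining the three bounds yields $\|\rho^\pi-\rho^{\piref}\|_{\mathrm{TV}}\le\kappa\delta$, hence a change-of-measure error of order $\kappa L\cdot 2B\cdot\kappa\delta=2\kappa^2 L\delta B$, which is the stated higher-order term (carrying the Lipschitz factor inherited from Lemma~\ref{prop:norm}). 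Since every estimate is uniform over $\{\pi:\dzero(\pi,\piref)\le\delta\}$, this establishes~\eqref{eqn:19} for all admissible $\pi$ at once.
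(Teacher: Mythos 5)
Your proposal is correct and follows essentially the same route as the paper: both start from Lemma~\ref{prop:norm}, perform the change of measure $\rho^\pi \to \rho^{\piref}$ via $|\langle \rho^\pi-\rho^{\piref}, h\rangle| \le \|h\|_\infty\,\|\rho^\pi-\rho^{\piref}\|_1$ with $\|h\|_\infty \le 2B$, and bound $\|\rho^\pi-\rho^{\piref}\|_1 \le \kappa\delta$ by Pinsker's inequality combined with a state-distribution-shift lemma --- the paper proves that lemma via the resolvent identity $\bar{G}'-\bar{G}=(1-\gamma)^{-1}\bar{G}'\Delta\bar{G}$ (Claim~\ref{claim:1} and Corollary~\ref{cor:1}), which is precisely the closed form of your step-by-step telescoping of the layer marginals. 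Your remark that the higher-order term inherits the Lipschitz factor is accurate: the paper's own derivation also yields $2\kappa^2 L\delta B$, so the absence of $L$ in the stated bound is a typo in the proposition rather than a defect of your argument.
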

In a benign scenario, the second term in the RHS of equation~\eqref{eqn:19} should be dominated by the first term when the neighborhood size $\delta$ is sufficiently small. Moreover, the term $B$ can also be replaced by $\max_{S,A}\norm{\hatM(S, A) - M^\star(S, A)}$ (see the proof that is deferred to Section~\ref{sec:missingproofs}.).  The dependency on $\kappa$ may not be tight for real-life instances, but we note that most analysis of similar nature loses the additional $\kappa$ factor~\cite{schulman2015trust,achiam2017constrained}, and it's inevitable in the worst-case.

\paragraph{A telescoping lemma}

Towards proving Propositions~\ref{prop:normextension} and deriving stronger discrepancy bound, we define the following quantity that captures the discrepancy between $\hatM$ and $M^\star$ on a single state-action pair $(s,a)$.
\begin{align}
G^{\pi, \hatM}(s,a) =  \Exp_{\hat{s}'\sim \widehat{M}(\cdot \vert s, a)}V^{\pi, \widehat{M}}(\hat{s}') - \Exp_{s'\sim M^\star(\cdot \vert s, a)}V^{\pi, \widehat{M}}(s')
\end{align}
Note that if $M, \widehat{M}$ are deterministic, then $G^{\pi, \hatM}(s,a) =  V^{\pi, \widehat{M}}(\widehat{M}(s,a)) - V^{\pi, \widehat{M}}(M^{\star}(s,a))$.  We give a telescoping lemma that decompose the discrepancy between $V^{\pi, M}$ and $V^{\pi, M^\star}$ into the expected single-step discrepancy $G$.

\begin{lemma}\label{lem:telescoping}[Telescoping Lemma]
									Recall that $\kappa := \gamma(1-\gamma)^{-1}$. For any policy $\pi$ and dynamical models $M, \hatM$, we have that
	{\small 
	\begin{align}
	V^{\pi, \widehat{M}} -	V^{\pi, M} & = \kappa \Exp_{\substack{S\sim \rho^{\pi, M}\\ A\sim \pi(\cdot\vert S)}} \left[G^{\pi, \hatM}(S,A) \right] \label{eqn:discounted_telescoping}
	\end{align}
}
\end{lemma}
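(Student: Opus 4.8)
The plan is to fix the function $W := V^{\pi,\widehat M}$ (the value of $\pi$ on the imaginary model) and integrate it along the trajectory distribution generated by running $\pi$ on the model $M$; the single-step mismatch that appears will be precisely $G^{\pi,\widehat M}$ (read with $M$ playing the role of $M^\star$ in its definition). Concretely, let $(S_0, A_0, S_1, A_1, \dots)$ be a trajectory drawn from $\pi$ on $M$. The starting point is the pointwise telescoping identity
\begin{align}
W(S_0) = \sum_{t=0}^{\infty} \gamma^t\big(W(S_t) - \gamma\, W(S_{t+1})\big), \nonumber
\end{align}
which holds because the partial sums collapse to $W(S_0) - \gamma^{T+1} W(S_{T+1})$ and the tail $\gamma^{T+1} W(S_{T+1})$ vanishes as $T\to\infty$ (the value function is bounded under the standing assumption of bounded rewards and $\gamma<1$). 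Taking expectations over the trajectory and using $V^{\pi,\widehat M} = \Exp[W(S_0)]$ reduces the claim to analyzing a single-step residual in expectation.

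Next I would take the conditional expectation of the residual $W(S_t) - \gamma W(S_{t+1})$ given $S_t$, where $A_t\sim \pi(\cdot\vert S_t)$ and $S_{t+1}\sim M(\cdot\vert S_t,A_t)$, and substitute the Bellman equation for $W$ on $\widehat M$, namely $W(S_t) = \Exp_{A_t\sim\pi}[R(S_t,A_t) + \gamma\,\Exp_{\hat s'\sim\widehat M(\cdot\vert S_t,A_t)} W(\hat s')]$. The $W(S_t)$ and the $-\gamma\,\Exp_{S_{t+1}\sim M} W(S_{t+1})$ terms then combine so that the conditional expectation of the residual equals $\Exp_{A_t}[R(S_t,A_t)] + \gamma\,\Exp_{A_t}[G^{\pi,\widehat M}(S_t,A_t)]$, since $\Exp_{\hat s'\sim\widehat M}W(\hat s') - \Exp_{s'\sim M}W(s')$ is exactly $G^{\pi,\widehat M}(S_t,A_t)$. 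Summing $\gamma^t$ times this over $t$, the reward part reassembles into $\Exp[\sum_t \gamma^t R(S_t,A_t)] = V^{\pi,M}$, leaving $V^{\pi,\widehat M} - V^{\pi,M} = \gamma\,\Exp\big[\sum_{t\ge 0}\gamma^t G^{\pi,\widehat M}(S_t,A_t)\big]$.

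Finally I would convert the discounted sum into an expectation over the visitation distribution via Definition~\ref{def:rho}: since $\rho^{\pi,M} = (1-\gamma)\sum_{t\ge0}\gamma^t p_{S_t^{\pi,M}}$, we have $\Exp[\sum_t \gamma^t G^{\pi,\widehat M}(S_t,A_t)] = (1-\gamma)^{-1}\Exp_{S\sim\rho^{\pi,M},\,A\sim\pi}[G^{\pi,\widehat M}(S,A)]$, and the prefactor $\gamma\,(1-\gamma)^{-1} = \kappa$ yields exactly~\eqref{eqn:discounted_telescoping}. The only genuinely delicate point is the first step: justifying the termwise telescoping and the interchange of the infinite sum with the expectation, which follows from the vanishing-tail argument together with absolute summability (both guaranteed by boundedness of $W$ and $\gamma<1$); everything after that is bookkeeping with the Bellman equation and the definition of $\rho^{\pi,M}$.
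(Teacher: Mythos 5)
Your proof is correct, and it produces exactly the same term-by-term decomposition as the paper's: the summand $\gamma^{t+1}\,\Exp\bigl[G^{\pi,\hatM}(S_t,A_t)\bigr]$ you obtain from the Bellman residual at step $t$ is identical to the paper's difference $W_{j+1}-W_j$ between the hybrid value functions that use $M$ for the first $j$ steps and $\hatM$ thereafter. The only difference is presentational: the paper reaches the decomposition by a hybrid-argument interpolation over where the model is switched, whereas you reach it by expanding $V^{\pi,\hatM}$ along the real trajectory via its Bellman equation on $\hatM$ (the Kakade--Langford performance-difference style the paper itself says its argument is reminiscent of); your version makes the convergence/interchange issues slightly more explicit, while the paper's makes the ``only step $j$ differs'' cancellation more visually immediate. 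Your reading of $G^{\pi,\hatM}$ with $M$ in place of $M^\star$ matches the paper's intended (if slightly informal) usage, and the final conversion to $\rho^{\pi,M}$ with prefactor $\kappa$ is the same bookkeeping in both.
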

The proof is reminiscent of the telescoping expansion in~\cite{kakade2002approximately} (c.f.~\cite{schulman2015trust}) for characterizing the value difference of two policies, but we apply it to deal with the discrepancy between models. The detail is deferred to Section~\ref{proof}. With the telescoping Lemma~\ref{lem:telescoping}, Proposition~\ref{prop:norm} follows straightforwardly from Lipschitzness of the imaginary value function. Proposition~\ref{prop:normextension}  follows from that $\rho^{\pi}$ and $\rho^{\piref}$ are close. We defer the proof to Appendix~\ref{sec:missingproofs}.

\subsection{Representation-invariant Discrepancy Bounds}\label{sec:intrinsic}

\newcommand{\cS}{\mathcal{S}}
The main limitation of the norm-based discrepancy bounds in previous subsection is that it depends on the state representation. Let $\cT$ be a one-to-one map from the state space $\cS$ to some other space $\mathcal{S}'$, and for simplicity of this discussion let's assume a model $M$ is deterministic.  Then if we represent every state $s$ by its transformed representation $\cT s$, then the transformed model $M^{\cT}$ defined as $M^\cT(s,a)\triangleq \cT M(\cT^{-1}s,a)$ together with the transformed reward $R^{\cT}(s,a) \triangleq R(\cT^{-1}s,a)$ and transformed policy $\pi^{\cT}(s) \triangleq \pi(\cT^{-1}s)$ is equivalent to the original set of the model, reward, and policy in terms of the performance (Lemma~\ref{lem:invariant}). Thus such transformation $\cT$ is not identifiable from only observing the reward. 
However, the norm in the state space is a notion that depends on the hidden choice of the transformation $\cT$. \footnote{That said, in many cases the reward function itself is known, and the states have physical meanings, and therefore we may be able to use the domain knowledge to figure out the best norm. } 

%

Another limitation is that the loss for the model learning should also depend on the state itself instead of only on the difference $\hatM(S,A)-M^\star(S,A)$. It is possible that when $S$ is at a critical position, the prediction error needs to be highly accurate so that the model $\hatM$ can be useful for planning. On the other hand, at other states, the dynamical model is allowed to make bigger mistakes because they are not essential to the reward.

We propose the following discrepancy bound towards addressing the limitations above. Recall the definition of $
G^{\pi, \hatM}(s,a) =  V^{\pi, \widehat{M}}(\widehat{M}(s, a)) - V^{\pi, \widehat{M}}(M^\star(s,a))
$ which measures the difference between $\widehat{M}(s, a))$ and $M^\star(s,a)$ according to their imaginary rewards. We construct a discrepancy bound using the absolute value of $G$. Let's define $\epsilon_1$ and $\epsilon_{\max}$ as the average of $|G^{\pi, \hat{M}}|$ and its maximum:
$
\epsilon_1 = \Exp_{\substack{S\sim \rho^{\piref}}} \left[ \left|G^{\pi, \widehat{M}}(S,A)\right| \right] \textup{  and  } \epsilon_{\max} = \max_{S} \left|G^{\pi, \hatM}(S)\right| \label{eqn:epsilon}
$
where $G^{\pi, \widehat{M}}(S) = \Exp_{\substack{A \sim \pi}}{\left[G^{\pi, \widehat{M}}(S,A)\right]}$. We will show that the following discrepancy bound $ \cDzero_{\piref}(\hatM, \pi)$ satisfies the property ~\eqref{eqn:base},~\eqref{eqn:equality}.
\begin{align}
 \cDzero_{\piref}(\hatM, \pi) & = \kappa\cdot \epsilon_1 + \kappa^2 \delta\epsilon_{\max} \label{eqn:Dzero}
\end{align}

\begin{proposition}\label{prop:dzero}
	Let $\dzero$ and $\cDzero$ be defined as in equation~\eqref{eqn:dzero} and~\eqref{eqn:Dzero}. Then the choice $d = \dzero$ and $\cD=\cDzero$ satisfies the basic requirements (equation ~\eqref{eqn:base} and~\eqref{eqn:equality}). Moreover, $G$ is invariant w.r.t any one-to-one transformation of the state space (in the sense of equation~\ref{eq:15} in the proof). 
\end{proposition}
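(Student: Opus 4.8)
The plan is to verify the three claims separately, dispatching the invariance assertion and the exactness condition~\eqref{eqn:equality} quickly and concentrating the effort on the neighborhood bound~\eqref{eqn:base}. For invariance, I would invoke the transformation lemma (Lemma~\ref{lem:invariant}), which guarantees $V^{\pi^\cT, M^\cT}(\cT s) = V^{\pi, M}(s)$ for any one-to-one $\cT$. Substituting $\hatM^\cT(\cT s, a) = \cT \hatM(s,a)$ and $M^{\star\cT}(\cT s, a) = \cT M^\star(s,a)$ into the definition of $G^{\pi^\cT, \hatM^\cT}(\cT s, a)$ and applying this value identity to each term, all occurrences of $\cT$ cancel and we recover $G^{\pi^\cT,\hatM^\cT}(\cT s,a) = G^{\pi, \hatM}(s,a)$; this is a symbol-pushing computation with no analytic content. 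For~\eqref{eqn:equality}, if $\hatM = M^\star$ then $G^{\pi, \hatM}(s,a) = V^{\pi, \hatM}(\hatM(s,a)) - V^{\pi, \hatM}(M^\star(s,a)) = 0$ pointwise, so $\epsilon_1 = \epsilon_{\max} = 0$ and hence $\cDzero_{\piref}(\hatM,\pi) = 0$.

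The substance is~\eqref{eqn:base}. First I would apply the telescoping Lemma~\ref{lem:telescoping} with $M = M^\star$ to obtain $V^{\pi, \hatM} - V^{\pi, M^\star} = \kappa\, \Exp_{S \sim \rho^\pi}[G^{\pi,\hatM}(S)]$, where $G^{\pi,\hatM}(S) = \Exp_{A\sim\pi}[G^{\pi,\hatM}(S,A)]$. Taking absolute values and pushing them inside the expectation gives $|V^{\pi,\hatM} - V^{\pi,M^\star}| \le \kappa\, \Exp_{S\sim\rho^\pi}[|G^{\pi,\hatM}(S)|]$. The obstacle is that this average is against $\rho^\pi$, while the first term of $\cDzero$ is $\epsilon_1$, an average against the reference distribution $\rho^{\piref}$ --- the only distribution we may sample, per requirement~\eqref{eqn:optimizable}. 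I would therefore swap distributions using the uniform bound $|G^{\pi,\hatM}(S)| \le \epsilon_{\max}$: $\Exp_{S\sim\rho^\pi}[|G^{\pi,\hatM}(S)|] \le \epsilon_1 + 2\,\|\rho^\pi - \rho^{\piref}\|_{TV}\cdot\epsilon_{\max}$.

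It remains to control $\|\rho^\pi - \rho^{\piref}\|_{TV}$ by the policy closeness $\dzero(\pi,\piref)\le\delta$, and this is where the real work lies. I would prove a discounted-visitation lemma by coupling the two state processes generated by $\pi$ and $\piref$ on the common dynamics $M^\star$: telescoping the per-step distribution differences $p_{S_t^\pi} - p_{S_t^{\piref}}$ and summing the resulting geometric series yields $\|\rho^\pi - \rho^{\piref}\|_{TV} \le \kappa\,\Exp_{S\sim\rho^\pi}[\|\pi(\cdot|S) - \piref(\cdot|S)\|_{TV}]$. Pinsker's inequality then bounds each inner term by $(\tfrac{1}{2} KL(\pi(\cdot|S),\piref(\cdot|S)))^{1/2}$, and taking the expectation over $\rho^\pi$ reproduces exactly $\dzero(\pi,\piref)$ as defined in~\eqref{eqn:dzero}. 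Chaining these gives $\|\rho^\pi - \rho^{\piref}\|_{TV} \lesssim \kappa\delta$, so the correction term is of order $\kappa\cdot\kappa\delta\cdot\epsilon_{\max}$; together with the leading $\kappa\epsilon_1$ this recovers the two-term structure $\kappa\epsilon_1 + \kappa^2\delta\epsilon_{\max}$ of $\cDzero_{\piref}(\hatM,\pi)$ (absorbing the $O(1)$ constants coming from Pinsker and the total-variation normalization).

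I expect the discounted-visitation lemma to be the main obstacle. Obtaining the clean factor $\kappa$ rather than a looser horizon-dependent constant requires carefully telescoping across time steps and tracking the total-variation normalization so that the constants combine with Pinsker to stay inside what $\cDzero$ absorbs. Equally important is matching the \emph{direction} of the bound: since~\eqref{eqn:dzero} charges the KL term against $\rho^\pi$, I would orient the coupling so that the policy-difference contribution is measured under $\rho^\pi$, keeping every intermediate inequality consistent with the definition of $\dzero$.
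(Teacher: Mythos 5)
Your proposal is correct and follows essentially the same route as the paper: telescoping lemma, triangle inequality, a H\"older-type swap from $\rho^\pi$ to $\rho^{\piref}$ paying $\|\rho^\pi-\rho^{\piref}\|_1\cdot\epsilon_{\max}$, and then the visitation-distribution bound $\|\rho^\pi-\rho^{\piref}\|_1\le\kappa\,\Exp_{S\sim\rho^\pi}[KL(\pi(\cdot|S),\piref(\cdot|S))^{1/2}]$ via Pinsker (the paper's Corollary~\ref{cor:1}, proved there through the resolvent identity of Claim~\ref{claim:1}, which is the operator form of your telescoping-and-geometric-series argument). The invariance and~\eqref{eqn:equality} parts also match the paper's treatment.
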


The proof follows from the telescoping lemma (Lemma~\ref{lem:telescoping}) and is deferred to Section~\ref{sec:missingproofs}. We remark that the first term $\kappa\eps_1$ can in principle be estimated and optimized approximately: the expectation be replaced by empirical samples from $\rho^{\piref}$, and $G^{\pi, \hat{M}}$ is an analytical function of $\pi$ and $\hatM$ when they are both deterministic, and therefore can be optimized by back-propagation through time (BPTT). (When $\pi$ and $\hatM$ and are stochastic with a re-parameterizable noise such as Gaussian distribution~\cite{kingma2013auto}, we can also use back-propagation to estimate the gradient.) The second term in equation~\eqref{eqn:Dzero} is difficult to optimize because it involves the maximum. However, it can be in theory considered as a second-order term because $\delta$ can be chosen to be a fairly small number. (In the refined bound in Section~\ref{subsec:refined}, the dependency on $\delta$ is even milder.) 


\begin{remark}\label{remark:1}
Proposition~\ref{prop:dzero} intuitively suggests a technical reason of why model-based approach can be more sample-efficient than policy gradient based algorithms such as TRPO or PPO~\citep{schulman2015trust, schulman2017proximal}. 
The approximation error of $V^{\pi, \widehat{M}}$ in model-based approach decreases as the model error  $\epsilon_1, \epsilon_{\max}$ decrease or the neighborhood size $\delta$ decreases, whereas the approximation error in policy gradient only linearly depends on the the neighborhood size~\cite{schulman2015trust}. In other words, model-based algorithms can trade model accuracy for a larger neighborhood size, and therefore the convergence can be faster (in terms of outer iterations.) This is consistent with our empirical observation that the model can be accurate in a descent neighborhood of the current policy so that the constraint~\eqref{eqn:constraints} can be empirically dropped. We also refine our bonds in Section~\ref{subsec:refined}, where the discrepancy bounds is proved to decay faster in $\delta$. 
\end{remark}



\arxiv{a good para: Attentive readers may notice that an adversarial dynamical model $\hatM$ can make $G^{\pi, \hatM}$ equal to zero by constantly predicting some fixed states with constant reward (and therefore the imaginary value function is constant and $G$ is zero.) However, optimism comes into play here: such ``cheating'' dynamical model admittedly can fool the discrepancy bound, but it will not give high imaginary reward $V^{\pi, \hatM}$. By optimizing $V - D$ over the model and policy, we discourage the algorithm to find such a cheating solution.}

\vspace{-3mm}
\section{Additional Related work}
\vspace{-2mm}
Model-based reinforcement learning is expected to require fewer samples than model-free algorithms~\citep{deisenroth2013survey} and has been successfully applied to robotics in both simulation and in the real world~\citep{pilco,morimoto2003minimax,deisenroth2011learning} using dynamical models ranging from Gaussian process~\citep{pilco,ko2009gp}, time-varying linear models~\citep{gps,lioutikov2014sample,levine2014learning,yip2014model}, mixture of Gaussians~\citep{khansari2011learning},  to neural networks~\citep{hunt1992neural, nagabandi2017neural,kurutach2018model,mb-pgpe, sanchez2018graph, pascanu2017learning}. In particular, the work of ~\citet{kurutach2018model} uses an ensemble of neural networks to learn the dynamical model, and significantly reduces the sample complexity compared to model-free approaches. The work of \citet{chua2018deep} makes further improvement by using a probabilistic model ensemble. Clavera et al.  \citep{clavera2018model} extended this method with meta-policy optimization and improve the robustness to model error. In contrast, we focus on theoretical understanding of model-based RL and the design of new algorithms, and our experiments use a \textit{single} neural network to estimate the dynamical model.

Our discrepancy bound in Section~\ref{sec:design} is closely related to the work~\citep{farahmand2017value} on the value-aware model loss. Our approach differs from it in three details: a) we use the absolute value of the value difference instead of the squared difference; b) we use the imaginary value function from the estimated dynamical model to define the loss, which makes the loss purely a function of the estimated model and the policy; c) we show that the iterative algorithm, using the loss function as a building block, can converge to a local maximum, partly by cause of the particular choices made in a) and b). ~\citet{asadi2018lipschitz} also study the discrepancy bounds under Lipschitz condition of the MDP.

Prior work explores a variety of ways of combining model-free and model-based ideas to achieve the best of the two methods~\citep{dyna, dyna-q, racaniere2017imagination, mordatch2016combining, sun2018dual}. For example, estimated models~\citep{gps,mba, ma-bddpg} are used to enrich the replay buffer in the model-free off-policy RL. \citet{tdm} proposes goal-conditioned value functions trained by model-free algorithms and uses it for model-based controls. \citet{mbve-for-mf, 2018arXiv180701675B} use dynamical models to improve the estimation of the value functions in the model-free algorithms.

On the control theory side, ~\citet{dean2018regret, dean2017sample} provide strong finite sample complexity bounds for solving linear quadratic regulator using model-based approach. \citet{boczar2018finite} provide finite-data guarantees for the ``coarse-ID control'' pipeline,  which is composed of a system identification
step followed by a robust controller synthesis procedure. Our method is inspired by the general idea of maximizing a low bound of the reward in~\citep{dean2017sample}. By contrast, our work applies to non-linear dynamical systems. Our algorithms also estimate the models iteratively based on trajectory samples from the learned policies.

Strong model-based and model-free sample complexity bounds have been achieved in the tabular case (finite state space). We refer the readers to~\citep{2018arXiv180209184K, dann2017unifying, szita2010model,kearns2002near,jaksch2010near, agrawal2017optimistic} and the reference therein. Our work focus on continuous and high-dimensional state space (though the results also apply to tabular case).



Another line of work of model-based reinforcement learning is to learn a dynamic model in a hidden representation space, which is especially necessary for pixel state spaces~\citep{2018arXiv180209184K, dann2017unifying, szita2010model,kearns2002near,jaksch2010near}. \citet{srinivas2018universal} shows the possibility to learn an abstract transition model to imitate expert policy. \citet{oh2017value} learns the hidden state of a dynamical model to predict the value of the future states and applies RL or planning on top of it. \citet{serban2018bottleneck, ha2018world} learns a bottleneck representation of the states. Our framework can be  potentially combined  with this line of research.

\vspace{-3mm}
\section{Practical Implementation and Experiments}
\subsection{Practical implementation}

We design with simplification of our framework a variant of model-based RL algorithms, Stochastic Lower Bound Optimization (SLBO). First, we removed the constraints~\eqref{eqn:constraints}. Second, we stop the gradient w.r.t $M$ (but not $\pi$) from the occurrence of $M$ in $V^{\pi,M}$ in equation~\eqref{eqn:obj} (and thus our practical implementation is not optimism-driven.)

Extending the discrepancy bound in Section~\ref{subsec:error}, we use a multi-step prediction loss for learning the  models with $\ell_2$ norm. For a state $s_t$ and action sequence $a_{t:t+h}$, we define the $h$-step prediction $\hat s_{t+h}$ as
$
\hat s_t = s_t, \textup{ and for } h \ge 0,  \hat s_{t + h + 1} = \widehat{M}_\phi(\hat s_{t+h}, a_{t+h}),
$
The \emph{$H$-step loss} is then defined as
\begin{equation}\label{eqn:multi-step-loss}
	\calL_\phi^{(H)}((s_{t:t+h}, a_{t:t+h}); \phi) = \frac{1}{H}\sum_{i=1}^H \|(\hat s_{t+i} - \hat s_{t + i - 1}) - (s_{t + i} - s_{t + i - 1})\|_2.
\end{equation}
A similar loss is also used in \citet{nagabandi2017neural} for validation. We note that motivation by the theory in Section~\ref{subsec:error}, we use $\ell_2$-norm instead of the square of $\ell_2$ norm.  The loss function we attempt to optimize at iteration $k$ is thus\footnote{This is technically not a well-defined mathematical objective. The $\text{sg}$ operation means identity when the function is evaluated, whereas when computing the update, $\textup{sg}(M_\phi)$ is considered fixed.}
{\small
\begin{align}
\max_{\phi,\theta} ~~ V^{\pi_\theta, \textup{sg}(\widehat{M}_\phi)} - \lambda \Exp_{(s_{t:t+h}, a_{t:t+h})\sim \pi_k, M^\star}\left[\calL_\phi^{(H)}((s_{t:t+h}, a_{t:t+h}); \phi) \right]\label{eq:lb}
\end{align}}
where $\lambda$ is a tunable parameter and $\textup{sg}$ denotes the stop gradient operation.

We note that the term $ V^{\pi_\theta, \textup{sg}(\widehat{M}_\phi)} $ depends on both the parameter $\theta$ and the parameter $\phi$ but there is no gradient passed through $\phi$,  whereas $\calL_\phi^{(H)}$ only depends on the $\phi$. We optimize equation~\eqref{eq:lb} by alternatively maximizing $ V^{\pi_\theta, \textup{sg}(\widehat{M}_\phi)} $ and minimizing $\calL_\phi^{(H)}$: for the former, we use TRPO with samples from the estimated dynamical model $\widehat{M}_\phi$ (by treating $\widehat{M}_\phi$ as a fixed simulator), and for the latter we use standard stochastic gradient methods. Algorithm~\ref{alg:SLBO} gives a pseudo-code for the algorithm.  The $n_{\text{model}}$ and $n_{\text{policy}}$ iterations are used to balance the number of steps of TRPO and Adam updates within the loop indexed by $n_{\textup{inner}}$.\footnote{In principle, to balance the number of steps, it suffices to take one of $n_{\text{model}}$ and $n_{\text{policy}}$ to be 1. However, empirically we found the optimal balance is achieved with larger $n_{\text{model}}$ and $n_{\text{policy}}$, possibly due to complicated interactions between the two optimization problem.} 

\begin{algorithm}[tbh]
	\caption{Stochastic Lower Bound Optimization (SLBO)}
	\label{alg:SLBO}
	\begin{algorithmic}[1]
		\State Initialize model network parameters $\phi$ and policy network parameters $\theta$
		\State Initialize dataset $\calD \gets \emptyset$
		\For{$n_{\text{outer}}$ iterations}
		\State $\calD \gets $ $\calD \cup \{$ collect $n_{\text{collect}}$ samples from real environment using $\pi_\theta$ with noises $\}$
		\For{$n_{\text{inner}}$ iterations}\Comment{optimize~\eqref{eq:lb} with stochastic alternating updates}
		\For{$n_{\text{model}}$ iterations}
		\State optimize \eqref{eqn:multi-step-loss} over $\phi$ with sampled data from $\calD$ by one step of Adam
		\EndFor
		\For{$n_{\text{policy}}$ iterations}
		\State $\calD' \gets \{$ collect $n_\text{trpo}$ samples using $\widehat{M}_\phi$ as dynamics $\}$
		\State optimize $\pi_{\theta}$ by running TRPO on $\calD'$
		\EndFor
		\EndFor
		\EndFor
	\end{algorithmic}
\end{algorithm}

\noindent{\bf Power of stochasticity and connection to standard MB RL:} We identify the main advantage of our algorithms over standard model-based RL algorithms is that we alternate the updates of the model and the policy within an outer iteration. By contrast, most of the existing model-based RL methods  only optimize the models once (for a lot of steps) after collecting a batch of samples (see Algorithm \ref{alg:MB-TRPO} for an example). The stochasticity introduced from the alternation with stochastic samples seems to dramatically reduce the overfitting (of the policy to the estimated dynamical model) in a way similar to that SGD regularizes ordinary supervised training.
\footnote{Similar stochasticity can potentially be obtained by an extreme hyperparameter choice of the standard MB RL algorithm: in each outer iteration of Algorithm~\ref{alg:MB-TRPO}, we only sample a very small number of trajectories and take a few model updates and policy updates. We argue our interpretation of stochastic optimization of the lower bound~\eqref{eq:lb} is more natural in that it reveals the regularization from stochastic optimization.} Another way to view the algorithm is that the model obtained from line 7 of Algorithm~\ref{alg:SLBO} at different inner iteration serves as an ensemble of models. We do believe that a cleaner and easier instantiation of our framework (with optimism) exists, and the current version, though performing very well, is not necessarily the best implementation.

\noindent {\bf Entropy regularization:}
\label{sec:entropy}
An additional component we apply to SLBO is the commonly-adopted entropy regularization in policy gradient method~\citep{williams1991function, a3c}, which was found to significantly boost the performance in our experiments (ablation study in Appendix~\ref{sec:ablation:entropy}). Specifically, an additional entropy term is added to the objective function in TRPO.
We hypothesize that entropy bonus helps exploration, diversifies the collected data, and thus prevents overfitting.

\subsection{Experimental Results
} \label{sec:exp}

We evaluate our algorithm SLBO (Algorithm~\ref{alg:SLBO}) on five continuous control tasks from rllab \citep{duan2016benchmarking}, including Swimmer, Half Cheetah, Humanoid, Ant, Walker. All environments that we test have a maximum horizon of 500, which is longer than most of the existing model-based RL work \citep{nagabandi2017neural, kurutach2018model}. (Environments with longer horizons are commonly harder to train.) More details can be found in Appendix \ref{sec:env}.

\noindent {\bf Baselines.} We compare our algorithm with 3 other algorithms including: (1) Soft Actor-Critic (SAC) \citep{SAC}, the state-of-the-art model-free off-policy algorithm in sample efficiency; (2) Trust-Region Policy Optimization (TRPO) \citep{schulman2015trust}, a policy-gradient based algorithm; and (3) Model-Based TRPO, a standard model-based algorithm described in Algorithm \ref{alg:MB-TRPO}. Details of these algorithms can be found in Appendix \ref{sec:baselines}.\footnote{We did not have the chance to implement the competitive random search algorithms in~\citep{mania2018simple} yet, although our test performance with 500 episode length is higher than theirs with 1000 episode on Half Cheetach (3950 by ours vs 2345 by theirs) and Walker (3650 by ours vs 894 by theirs).}

The result is shown in Figure \ref{fig:main-result}. In Fig \ref{fig:main-result}, our algorithm shows superior convergence rate (in number of samples) than all the baseline algorithms while achieving better final performance with 1M samples. Specifically, we mark model-free TRPO performance after 8 million steps by the dotted line in Fig \ref{fig:main-result} and find out that our algorithm can achieve comparable or better final performance in one million steps.  For ablation study, we also add the performance of SLBO-MSE, which corresponds to running SLBO with squared $\ell_2$ model loss instead of $\ell_2$. SLBO-MSE performs significantly worse than SLBO on four environments, which is consistent with our derived model loss in Section~\ref{subsec:error}. We also study the performance of SLBO and baselines with 4 million training samples in \ref{sec:ablation:longtraining}. Ablation study of multi-step model training can be found in Appendix \ref{sec:ablation:multi-step}.\footnote{Videos demonstrations are available at \href{https://sites.google.com/view/algombrl/home}{https://sites.google.com/view/algombrl/home}. A link to the codebase is available at \href{https://github.com/roosephu/slbo}{https://github.com/roosephu/slbo}.}
%
\vspace{-3mm}
\begin{figure}[tbh]
	\centering
	\begin{subfigure}[b]{0.28 \linewidth}
		\includegraphics[width = \textwidth]{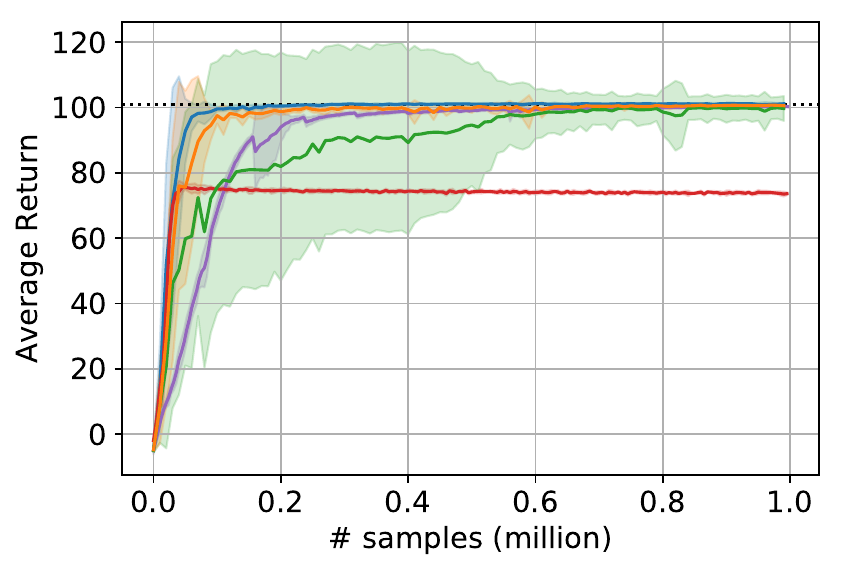}
		\caption{Swimmer} \label{fig:slbo-swimmer}
	\end{subfigure}
	\begin{subfigure}[b]{0.28 \linewidth}
		\includegraphics[width = \textwidth]{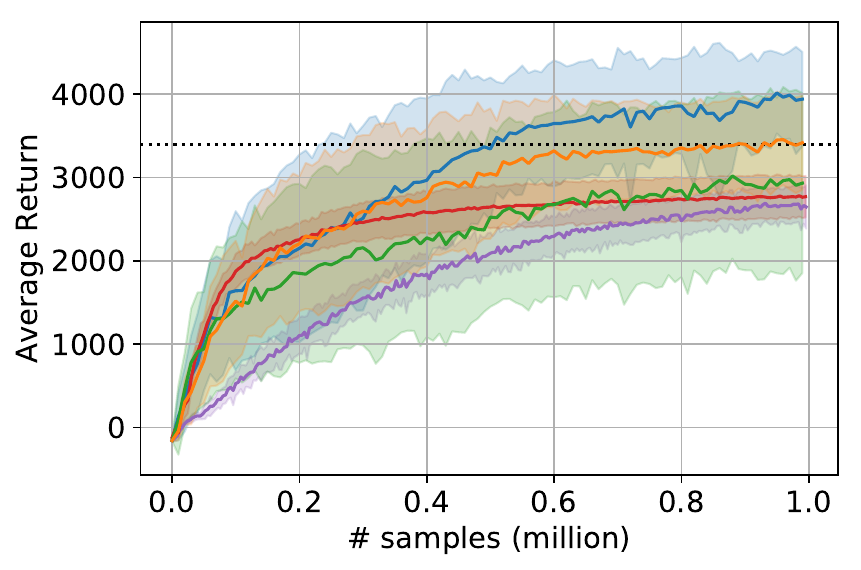}
		\caption{Half Cheetah} \label{fig:slbo-half-cheetah}
	\end{subfigure}
	\begin{subfigure}[b]{0.28 \linewidth}
		\includegraphics[width = \textwidth]{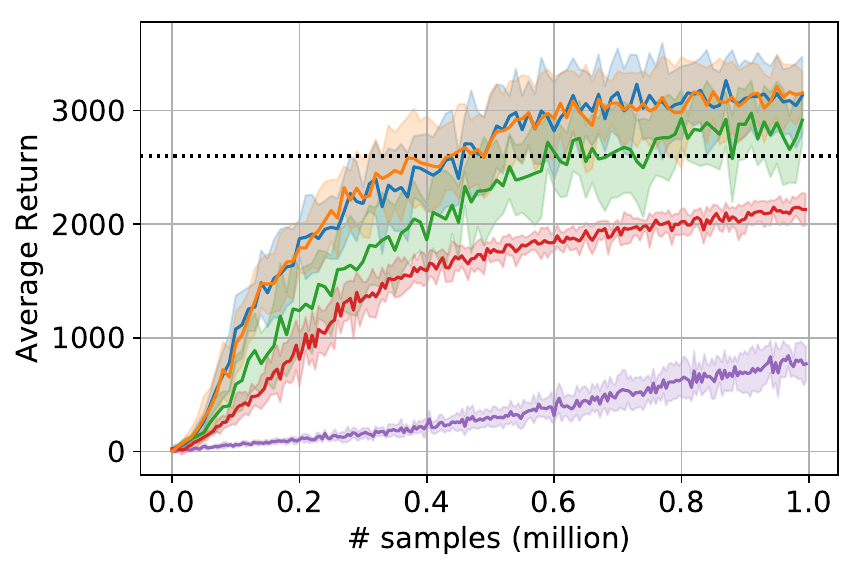}
		\caption{Ant} \label{fig:slbo-ant}
	\end{subfigure} \\
	\begin{subfigure}[b]{0.28 \linewidth}
		\includegraphics[width = \textwidth]{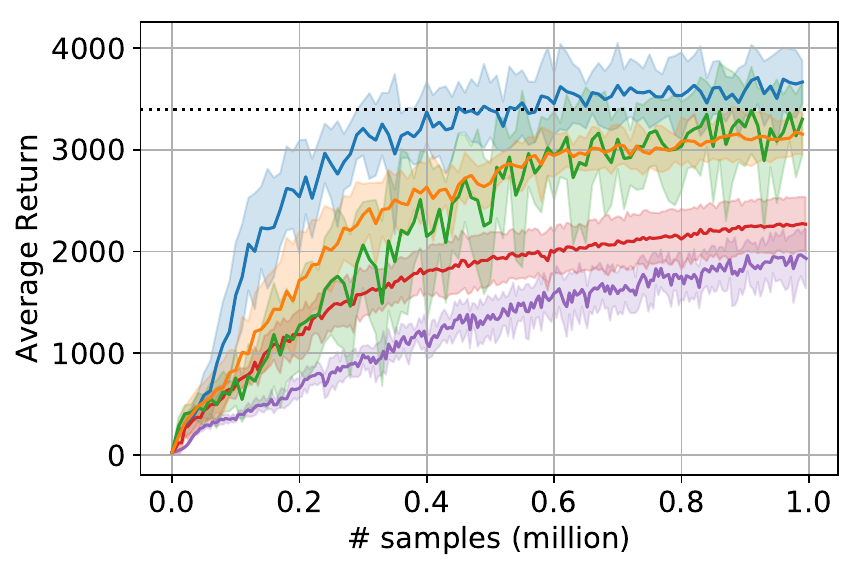}
		\caption{Walker} \label{fig:slbo-walker}
	\end{subfigure}
	\begin{subfigure}[b]{0.28 \linewidth}
		\includegraphics[width = \textwidth]{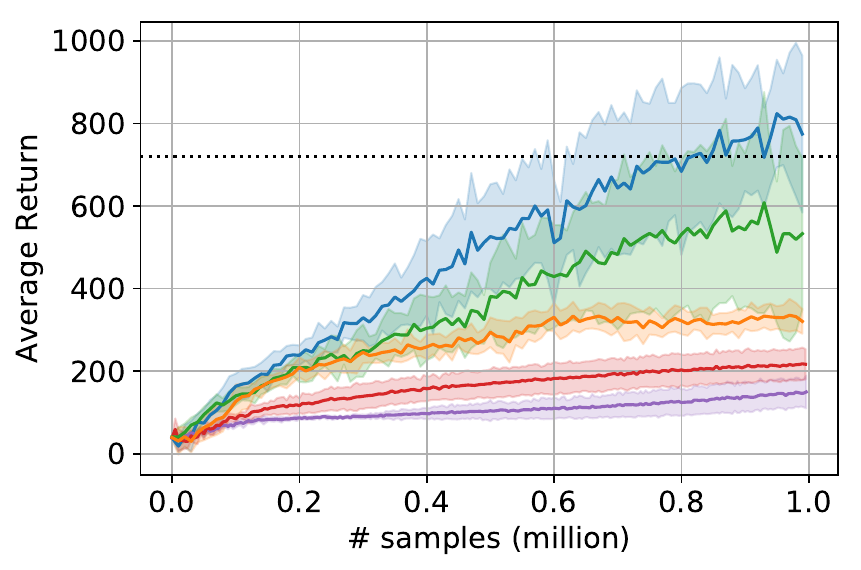}
		\caption{Humanoid} \label{fig:slbo-humanoid}
	\end{subfigure}
	\includegraphics[width = 0.78 \textwidth]{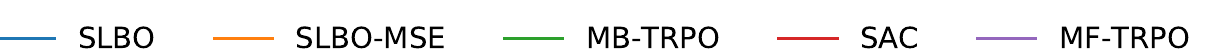}
    \caption{Comparison between SLBO (ours), SLBO with squared $\ell^2$ model loss (SLBO-MSE), vanilla model-based TRPO (MB-TRPO), model-free TRPO (MF-TRPO), and Soft Actor-Critic (SAC). We average the results over 10 different random seeds, where the solid lines indicate the mean and shaded areas indicate one standard deviation. The dotted reference lines are the total rewards of MF-TRPO after 8 million steps. }
    \label{fig:main-result}
\end{figure}

\section{Conclusions}

We devise a novel algorithmic framework for designing and analyzing model-based RL algorithms with the guarantee to convergence monotonically to a local maximum of the reward. Experimental results show that our proposed algorithm (SLBO) achieves new state-of-the-art performance on several mujoco benchmark tasks when one million or fewer samples are permitted.

A compelling (but obvious) empirical open question then given rise to is whether model-based RL can achieve near-optimal reward on other more complicated  tasks or real-world robotic tasks with fewer samples. We believe that understanding the trade-off between optimism and robustness is essential to design more sample-efficient algorithms. Currently, we observed empirically that the optimism-driven part of our proposed meta-algorithm (optimizing $V^{\pi,\widehat{M}}$ over $\widehat{M}$) may lead to instability in the optimization, and therefore don't in general help the performance. It's left for future work to find practical implementation of the optimism-driven approach.

{\color{blue} Update in Feb 2021: It turns out that the conditions of Theorem~\ref{thm:main} cannot simultaneously hold as noted at the end of Section~\ref{sec:algo}. The follow-up work~\citep{dong2021provable} to some extent fixes this issue for some bandit and RL problems with more sophisticated model learning and exploration techniques. It also shows that converging to a global maximum for neural net bandit problems is statistically impossible, which to some degree justifies the local convergence goal of this paper.}


{\bf Acknowledgments: }

We thank the anonymous reviewers for detailed, thoughtful, and helpful reviews.
We'd like to thank Emma Brunskill, Chelsea Finn, Shane Gu, Ben Recht, and Haoran Tang for many helpful comments and discussions. We thank Jiaqi Yang and Kefan Dong for their help with the update of the paper in Feb 2021. 
\appendix

{\small
\bibliography{iclr2019_conference}
}
\bibliographystyle{iclr2019_conference}
\newpage
\section{Refined bounds }\label{subsec:refined}

The theoretical limitation of the discrepancy bound $\cDzero(\hatM, \pi)$ is that the second term involving $\epsilon_{\max}$ is not rigorously optimizable by stochastic samples. In the worst case, there seem to exist situations where such infinity norm of $G^{\pi,\hatM}$ is inevitable. In this section we tighten the discrepancy bounds with a different closeness measure $d$, $\chi^2$-divergence, in the policy space, and the dependency on the $\eps_{\max}$ is smaller (though not entirely removed.) We note that $\chi^2$-divergence has the same second order approximation as KL-divergence around the local neighborhood the reference policy and thus locally affects the optimization much.

We start by defining  a re-weighted version $\beta^\pi$ of the distribution $\rho^{\pi}$ where examples in later step are slightly weighted up. We can effectively sample from $\beta^\pi$ by importance sampling from $\rho^\pi$
\begin{definition}\label{def:beta}
		For a policy $\pi$, define $\beta^{\pi}$ as the \textit{re-weighted} version of discounted distribution of the states visited by $\pi$ on $M^{\star}$. Recall that $p_{S_t^\pi}$ is the distribution of the state at step $t$, we define
$
	\beta^{\pi} = (1-\gamma)^2\sum_{t=1}^{\infty} t\gamma^{t-1} p_{S_t^\pi}
$.
	\end{definition}
Then we are ready to state our discrepancy bound. Let
\newcommand{\dchis}{d^{\chi^2}}
\newcommand{\Dchis}{D^{\chi^2}}
\begin{align}
\dchis(\pi, \piref) & = \max\{\Exp_{S\sim \rho^{\piref}}\left[\chi^2(\pi(\cdot \vert S), \piref(\cdot \vert S))\right], \Exp_{S\sim \beta^{\piref}}\left[\chi^2(\pi(\cdot \vert S), \piref(\cdot \vert S))\right]\}\label{eqn:dchi}\\
\Dchis_{\piref}(\hatM, \pi) & = (1-\gamma)^{-1}\eps_1 + (1-\gamma)^{-2}\delta\eps_2 + (1-\gamma)^{-5/2}\delta^{3/2}\eps_{\max}\label{eqn:Dchi}
\end{align}
where $\epsilon_2 = \Exp_{S\sim \beta^{\piref}}\left[ G^{\pi, \widehat{M}}(S,A)^2\right]$ and $\eps_1, \eps_{\max}$ are defined in equation~\eqref{eqn:epsilon}.
\begin{proposition}	\label{prop:chi2}
	The discrepancy bound $\Dchis$ and closeness measure $\dchis$ satisfies requirements ~\eqref{eqn:base} and~\eqref{eqn:equality}.
\end{proposition}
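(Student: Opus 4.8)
Both requirements reduce to controlling a single value gap. Requirement~\eqref{eqn:equality} is immediate: if $\hatM = M^\star$ then $G^{\pi,\hatM}(s,a) = V^{\pi,\hatM}(\hatM(s,a)) - V^{\pi,\hatM}(M^\star(s,a)) \equiv 0$, so $\eps_1 = \eps_2 = \eps_{\max} = 0$ and hence $\Dchis_{\piref}(\hatM,\pi) = 0$. For requirement~\eqref{eqn:base} it suffices to establish the two-sided estimate $|V^{\pi,\hatM} - V^{\pi,M^\star}| \le \Dchis_{\piref}(\hatM,\pi)$, since then $V^{\pi,M^\star} \ge V^{\pi,\hatM} - |V^{\pi,\hatM} - V^{\pi,M^\star}| \ge V^{\pi,\hatM} - \Dchis_{\piref}(\hatM,\pi)$. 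Thus the entire argument is about upper bounding the model-induced value gap.

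The plan is to begin, exactly as in Proposition~\ref{prop:dzero}, from the Telescoping Lemma~\ref{lem:telescoping}, which gives $V^{\pi,\hatM} - V^{\pi,M^\star} = \kappa\,\Exp_{S\sim\rho^{\pi},A\sim\pi}[G^{\pi,\hatM}(S,A)]$, and then to trade the unobservable visitation distribution $\rho^{\pi}$ for the reference distribution $\rho^{\piref}$. Writing $G^{\pi,\hatM}(S) = \Exp_{A\sim\pi}[G^{\pi,\hatM}(S,A)]$, I would decompose
\[
\Exp_{S\sim\rho^{\pi}}[G^{\pi,\hatM}(S)] = \Exp_{S\sim\rho^{\piref}}[G^{\pi,\hatM}(S)] + \left(\Exp_{S\sim\rho^{\pi}} - \Exp_{S\sim\rho^{\piref}}\right)[G^{\pi,\hatM}(S)].
\]
The first summand yields the main term: bounding $|G^{\pi,\hatM}(S)| \le \Exp_{A\sim\pi}|G^{\pi,\hatM}(S,A)|$ and using $\kappa \le (1-\gamma)^{-1}$ gives $(1-\gamma)^{-1}\eps_1$. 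Everything else must come from the state-distribution shift, and this is precisely where $\chi^2$-divergence and the reweighted distribution $\beta^{\piref}$ enter.

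To control the shift I would expand $\rho^{\pi} - \rho^{\piref}$ one policy-step at a time. Using $\rho^{\pi} = (1-\gamma)\sum_t \gamma^t p_{S_t^{\pi}}$ together with the change-of-policy identity $p_{S_t^{\pi}} - p_{S_t^{\piref}} = \sum_{j<t}(P^{\pi})^{t-1-j}(P^{\pi} - P^{\piref})\,p_{S_j^{\piref}}$, where $P^{\pi}$ is the transition operator under $\pi$ and $M^\star$, the shift becomes a double sum over a divergence step $j$ and a later evaluation time $t$. At step $j$ the operator difference acts through the conditionally mean-zero weight $w-1$ with $w = \pi/\piref$, so a conditional Cauchy--Schwarz bounds each contribution by $\sqrt{\chi^2(\pi(\cdot\vert S),\piref(\cdot\vert S))}$ times the standard deviation of the forward-propagated value of $G^{\pi,\hatM}$. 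Carrying out the inner sum $\sum_{j<t}$ attaches the extra factor $t$ to the weight $\gamma^t$, which is exactly the $t\gamma^{t-1}$ reweighting converting $\rho^{\piref}$ into $\beta^{\piref}$; a second Cauchy--Schwarz then aggregates the divergence factors into $\Exp_{\beta^{\piref}}[\chi^2]\le\delta$ and the value factors into $\eps_2 = \Exp_{\beta^{\piref}}[(G^{\pi,\hatM})^2]$, delivering the second term $(1-\gamma)^{-2}\delta\eps_2$. The contributions of order two and higher in $(w-1)$, left over from expanding the products $(P^{\pi})^{t-1-j}$, cannot be paired against a second moment of $G$; there I would instead use $|G^{\pi,\hatM}|\le\eps_{\max}$ and estimate the residual divergence mass, producing the last term $(1-\gamma)^{-5/2}\delta^{3/2}\eps_{\max}$, whose higher power of $\delta$ marks it as a genuinely higher-order remainder.

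The main obstacle is this last bookkeeping: controlling the higher-order remainder of the multi-step expansion and getting the powers exactly right. The per-step divergences accumulate multiplicatively along trajectories generated by $\pi$ (through the factors $(P^{\pi})^{t-1-j}$), whereas the constraint $\dchis(\pi,\piref)\le\delta$ only controls the \emph{average} $\chi^2$ under $\rho^{\piref}$ and $\beta^{\piref}$, i.e.\ along $\piref$-trajectories; since $\chi^2$ does not factorize across steps and we have no worst-case per-state bound, passing between the two is the delicate point and is the reason the reweighted $\beta^{\piref}$ (rather than $\rho^{\piref}$ alone) is needed. Equally delicate is arranging the \emph{two} Cauchy--Schwarz applications and the second-order separation so that the cross term appears as $\delta\eps_2$ rather than the weaker $\sqrt{\delta\eps_2}$ that a single crude bound would give; this separation, and the accompanying geometric summations that produce the stated powers of $(1-\gamma)$, are where the real work lies. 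The vanishing and invariance features, by contrast, follow as easily as in Proposition~\ref{prop:dzero}.
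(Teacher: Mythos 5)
Your plan is essentially the paper's own proof: it too starts from the telescoping lemma, trades $\rho^{\pi}$ for $\rho^{\piref}$ via the resolvent identity $\barG' - \barG = (1-\gamma)^{-1}\barG'\Delta\barG$ (your step-by-step expansion written in operator form), applies Cauchy--Schwarz twice together with the data-processing inequality for $\chi^2$ (Lemmas~\ref{lem:pqf}, \ref{lem:single_step}, \ref{lem:contraction} and Theorem~\ref{thm:chi-markov}), and controls the leftover $\pi$-propagator mismatch by $\eps_{\max}$, yielding exactly the three terms you identify, with $\beta^{\piref}$ arising from the $t\gamma^{t-1}$ reweighting just as you describe. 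The bookkeeping you flag as the delicate part is exactly what Lemma~\ref{lem:chisqauresimple} carries out, so the approach matches and no new idea is needed.
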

We defer the proof to Section~\ref{sec:missingproofs} so that we can group relevant proofs with similar tools together. Some of these tools may be of independent interests and used for better analysis of model-free reinforcement learning algorithms such as TRPO~\cite{schulman2015trust}, PPO~\cite{schulman2017proximal} and CPO~\cite{achiam2017constrained}.

 \section{Proof of Lemma~\ref{lem:telescoping}}\label{proof}
\begin{proof}[Proof of Lemma~\ref{lem:telescoping}]
	Let $W_j$ be the cumulative reward when we use dynamical model $M$ for $j$ steps and then $\hatM$ for the rest of the steps, that is,
	\begin{align}
	W_j & = \Exp_{\substack{\forall t\ge 0, A_t \sim \pi(\cdot \mid S_t)\\ \forall j > t \ge 0,  S_{t+1}\sim M(\cdot \mid S_t,A_t) \\\forall t \ge j,  S_{t+1}\sim \widehat{M}(\cdot \mid S_t,A_t)
	}}\left[\sum_{t=0}^{\infty} \gamma^t R(S_t, A_t) \mid S_0 = s\right] \nonumber
	\end{align}
	By definition, we have that
	$
	W_{\infty} = V^{\pi, M}(s) \textup{ and } W_{0} = V^{\pi, \widehat{M}}(s)
	$. Then, we decompose the target into a telescoping sum,
	\begin{align}
	V^{\pi, M}(s) - V^{\pi, \widehat{M}}(s)  & = \sum_{j=0}^{\infty} (W_{j+1}-W_j)\label{eqn:telescoping}
	\end{align}
	Now we re-write each of the summands $W_{j+1}-W_j$. Comparing the trajectory distribution in the definition of $W_{j+1}$ and $W_j$, we see that they only differ in the dynamical model applied in $j$-th step.  Concretely, $W_j$ and $W_{j+1}$ can be rewritten as
	$
	W_j  = R  + \Exp_{S_j, A_j\sim \pi, M}\left[\Exp_{\substack{\hat{S}_{j+1}\sim \widehat{M}(\cdot \mid S_j, A_j)}}\left[\gamma^{j+1} V^{\pi, \widehat{M}}(\hat{S}_{j+1})\right]\right]
	$ and $W_{j+1} = R  + \Exp_{S_j, A_j\sim \pi, M^{\star}}\left[\Exp_{\substack{S_{j+1}\sim M(\cdot \mid S_j, A_j)}}\left[\gamma^{j+1} V^{\pi, \widehat{M}}(S_{j+1})\right]\right] $ where $R$ denotes the reward from the first $j$ steps from policy $\pi$ and model $M^\star$.
						Canceling the shared term in the two equations above, we get
	\begin{align}
	W_{j+1}-W_{j} = \gamma^{j+1}\Exp_{S_j, A_j\sim \pi, M}\left[\Exp_{\substack{\hat{S}_{j+1}\sim \widehat{M}(\cdot \mid S_j, A_j)\\ S_{j+1}\sim M(\cdot \mid S_j, A_j) }}\left[V^{\pi, \widehat{M}}(S_{j+1})- V^{\pi, \widehat{M}}(\widehat{S}_{j+1})\right]\right] \nonumber
	\end{align}
	Combining the equation above with equation~\eqref{eqn:telescoping} concludes that
%

	\begin{align}
V^{\pi, M} - V^{\pi, \widehat{M}}  & = \frac{\gamma}{1-\gamma} \Exp_{\substack{S\sim \rho^{\pi}, A \sim \pi(S)}} \left[\Exp_{S'\sim M^\star(\cdot \vert S,A)}V^{\pi, \widehat{M}}(S') - \Exp_{\hat{S}'\sim \widehat{M}(\cdot \vert S, A)}V^{\pi, \widehat{M}}(\hat{S}') \right] \nonumber	\end{align}
\end{proof}

\section{Missing Proofs in Section~\ref{sec:design}}\label{sec:missingproofs}

\subsection{Proof of Proposition~\ref{prop:dzero}}

Towards proving the second part of Proposition~\ref{prop:dzero} regarding the invariance, we state the following lemma:
\begin{lemma}\label{lem:invariant}
	Suppose for simplicity the model and the policy are both deterministic.
	For any one-to-one transformation from $\mathcal{S}$ to $\mathcal{S}'$, let $M^\cT(s,a)\triangleq \cT M(\cT^{-1}s,a)$, $R^{\cT}(s,a) \triangleq R(\cT^{-1}s,a)$, and $\pi^{\cT}(s) \triangleq \pi(\cT^{-1}s)$ be a set of transformed model, reward and policy. Then we have that $(M,\pi, R)$ is equivalent to $(M^\cT, \pi^\cT, R^\cT)$ in the sense that $$V^{\pi^\cT, M^\cT}(\cT s) = V^{\pi, M}(s)$$
	where the value function $V^{\pi^\cT, M^\cT}$ is defined with respect to $R^{\cT}$.
\end{lemma}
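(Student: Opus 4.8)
The plan is to prove this invariance lemma directly from the definitions of the transformed quantities, showing that executing the transformed policy on the transformed model from the transformed initial state produces exactly the same reward sequence as the original system from the original initial state. The key insight is that the transformation $\cT$ acts as a relabeling of states, and since everything (dynamics, policy, reward) is consistently relabeled, the induced reward trajectory is identical.

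First I would establish a trajectory correspondence. Suppose we start the original system at state $s_0 = s$ and the transformed system at state $s_0' = \cT s$. I claim that if we denote by $s_0, s_1, s_2, \dots$ the states visited in the original system under $(M, \pi)$, then the states visited in the transformed system under $(M^\cT, \pi^\cT)$ are exactly $\cT s_0, \cT s_1, \cT s_2, \dots$. I would prove this by induction on the time step $t$. The base case $t=0$ holds by assumption. For the inductive step, suppose the transformed system is at state $\cT s_t$. The action chosen is $\pi^\cT(\cT s_t) = \pi(\cT^{-1}\cT s_t) = \pi(s_t) = a_t$, so the transformed system chooses the same action as the original. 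The next state in the transformed system is then $M^\cT(\cT s_t, a_t) = \cT M(\cT^{-1}\cT s_t, a_t) = \cT M(s_t, a_t) = \cT s_{t+1}$, completing the induction.

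Next I would compare the per-step rewards. At step $t$, the original system collects $R(s_t, a_t)$, while the transformed system collects $R^\cT(\cT s_t, a_t) = R(\cT^{-1}\cT s_t, a_t) = R(s_t, a_t)$, which are identical. Since the discount factors $\gamma^t$ match and the action sequences agree by the trajectory correspondence above, the discounted cumulative rewards are equal term by term. Summing over all $t \ge 0$ gives
\begin{align}
V^{\pi^\cT, M^\cT}(\cT s) = \sum_{t=0}^{\infty} \gamma^t R^\cT(\cT s_t, a_t) = \sum_{t=0}^{\infty} \gamma^t R(s_t, a_t) = V^{\pi, M}(s), \nonumber
\end{align}
which is exactly the claimed identity.

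I do not expect a serious obstacle here, as the argument is essentially bookkeeping once the inductive trajectory correspondence is set up; the proof reduces to repeatedly applying the definitions of $M^\cT$, $\pi^\cT$, and $R^\cT$ together with $\cT^{-1}\cT = \Id$. The only point requiring mild care is bookkeeping in the stochastic case, but since the lemma explicitly assumes deterministic model and policy, this is avoided. (If one wished to extend to the stochastic setting, one would instead argue that $\cT$ pushes forward the distribution $M(\cdot \mid s, a)$ to $M^\cT(\cdot \mid \cT s, a)$ and invoke the change-of-variables formula for the induced trajectory distributions, but this is outside the stated scope.)
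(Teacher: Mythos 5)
Your proof is correct and follows essentially the same route as the paper's: an induction establishing the trajectory correspondence $s_t^{\cT} = \cT s_t$, followed by the observation that the per-step rewards coincide, so the discounted sums agree. The only difference is that you spell out the action-matching step and the term-by-term summation slightly more explicitly, which is fine.
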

\begin{proof}
Let $s_0^\cT = \cT s,\dots, $ be the sequence of states visited by policy $\pi^\cT$ on model $M^\cT$ starting from $s$.
We have that $s_0^\cT = \cT s = \cT s_0$. We prove by induction that $s_t^\cT = \cT s_t$. Assume this is true for some value $t$, then we prove that $s_{t+1}\cT = \cT s_{t+1}$ holds:
\begin{align}
s_{t+1}^\cT & = M^{\cT} (s_t^\cT, \pi^\cT(s_t^\cT)) = M^{\cT} (\cT s_t, \pi^\cT(\cT s_t)) \tag{by inductive hypothesis}\\
& = \cT M (s_t, \pi(s_t)) \tag{by defintion of $M^\cT$, $\pi^{\cT}$} \\
& = \cT s_{t+1} \nonumber
\end{align}
Thus we have $R^\cT(s_t^\cT, a_t^\cT) = R(s_t, a_t)$. Therefore $V^{\pi^\cT, M^\cT}(\cT s) = V^{\pi, M}(s)$.
\end{proof}

\begin{proof}[Proof of Proposition~\ref{prop:dzero}]

	We first show the invariant of $G$ under deterministic models and policies. The same result applies to stochastic policies with slight modification. 	Let $s^\cT = \cT s$. We consider the transformation applied to $M$ and $M^\star$ and the resulting $G$ function
	\begin{align}
	G^\cT(s^\cT, a) \triangleq |V^{\pi^\cT, M^\cT}(M^\cT(s^\cT, a))  - V^{\pi^\cT, M^\cT}(M^{\star,\cT}(s^\cT, a))| \nonumber
	\end{align}
	Note that by Lemma~\ref{lem:invariant}, we have that $V^{\pi^\cT, M^\cT}(M^\cT(s^\cT, a)) = V^{\pi, M}(\cT^{-1}M^\cT(s^\cT, a)) = V^{\pi, M}(M(s, a))$. Similarly, $V^{\pi^\cT, M^\cT}(M^{\star,\cT}(s^\cT, a)) = V^{\pi, M}(M^\star(s, a))$. Therefore we obtain that
	\begin{align}
	G^\cT(s^\cT, a)  = G(s, a) \label{eq:15}
	\end{align}

	By Lemma~\ref{lem:telescoping} and triangle inequality, we have that
	\begin{align}
	\frac{1-\gamma}{\gamma}	\left|V^{\pi, M} - V^{\pi, \widehat{M}}\right|
	& \le \Exp_{\substack{S\sim \rho^{\pi}}} \left[\left| G^{\pi, \widehat{M}}(S)\right| \right] \tag{triangle inequality}\\ & \le \Exp_{\substack{S\sim \rho^{\piref}}} \left[\left| G^{\pi, \widehat{M}}(S) \right|\right] + |\rho^{\pi} - \rho^{\piref}|_1 \cdot\max_{S} \left|G^{\pi, \hatM}(S)\right|
	\tag{Holder inequality}
	\end{align}
	By Corollary~\ref{cor:1} we have that $ |\rho^{\pi} - \rho^{\piref}|_1 \le \frac{\gamma}{1-\gamma}\Exp_{S\sim \rho^{\piref}}\left[KL(\pi(S), \piref(S))^{1/2}\vert S\right] = \frac{\delta\gamma}{1-\gamma}$. Combining this with the equation above, we complete the proof.			\end{proof}

\subsection{Proof of Proof of Proposition~\ref{prop:chi2}}
\begin{proof}[Proof of Proposition~\ref{prop:chi2}]
	Let $\mu$ be the distribution of the initial state $S_0$, and let $P'$ and $P$ be the state-to-state transition kernel under policy $\pi$ and $\piref$.  Let $\bar{G} = (1-\gamma)\sum_{k=0}^{\infty}\gamma^{k}P^k$ and $\bar{G}' = (1-\gamma)\sum_{k=0}^{\infty}\gamma^{k}{P'}^k$. Under these notations, we can re-write $\rho^{\piref} = \barG\mu$ and $\rho^\pi = \barG'\mu$. Moreover, we observe that $\beta^{\piref} = \bar{G}P\bar{G}\mu$.

	Let $\delta_1 = (1-\gamma)^{-1}\chi^2_{\bar{G}\mu}(P',P)^{1/2}$ and $\delta_2 = (1-\gamma)^{-1}\chi^2_{\barG P\barG\mu}(P',P)^{1/2}$ by the $\chi^2$ divergence between $P'$ and $P$, measured with respect to distributions $\barG\mu=\rho^{\piref} $ and $\bar{G}P\bar{G}\mu=\beta^{\piref}$.  By Lemma~\ref{lem:contraction}, we have that the $\chi^2$-divergence between the states can be bounded by the $\chi^2$-divergence between the actions in the sense that:
	\begin{align}
	\chi^2_{\bar{G}\mu}(P',P)^{1/2} = \chi^2_{\rho^{\piref}}(P',P)^{1/2} & \le \Exp_{S\sim \rho^{\piref}}\left[\chi^2(\pi(\cdot \vert S), \piref(\cdot \vert S))\right]^{1/2} \nonumber\\
	\chi^2_{\barG P\barG\mu}(P',P)^{1/2} = \chi^2_{\beta^{\piref}}(P',P)^{1/2}& \le \Exp_{S\sim \beta^{\piref}}\left[\chi^2(\pi(\cdot \vert S), \piref(\cdot \vert S))\right]^{1/2}  \nonumber
	\end{align}
	Therefore we obtain that $\delta_1 \le (1-\gamma)^{-1}\delta$, $\delta_2 \le (1-\gamma)^{-1}\delta$. Let $f(s)=G^{\pi, \hatM}(s).$  By Lemma~\ref{lem:chisqauresimple}, we can control the difference between $\inner{\rho^{\piref},f}$ and $\inner{\rho^{\pi}, f}$ by
	\begin{align}
	\left|\Exp_{S\sim \rho^{\piref}}\left[G^{\pi, \hatM}(S)\right] -\Exp_{S\sim \rho^\pi}\left[G^{\pi, \hatM}(S)  \right] \right| & = \left|\inner{\rho^{\piref},f}-\inner{\rho^{\pi}, f}\right|   \nonumber\\
	& \le \delta_1 \inner{\bar{G}P\bar{G}\mu, f^2}^{1/2} + \delta_1\delta_2^{1/2}\|f\|_\infty\nonumber \\
	& \le \delta_1\eps_2 + \delta_1\delta_2^{1/2}\eps_{\max} \nonumber
	\end{align}
	It follows that
	\begin{align}
	\left| V^{\pi, \widehat{M}} -	V^{\pi, M} \right| & \le \gamma(1-\gamma)^{-1}\left|\Exp_{S\sim \rho^\pi}\left[G^{\pi, \hatM}(S) \right] \right| \tag{by Lemma~\ref{lem:telescoping}} \\
	& \le \gamma(1-\gamma)^{-1}\left(\left|\Exp_{S\sim \rho^{\piref}}\left[G^{\pi, \hatM}(S) \right] \right| + \left|\Exp_{S\sim \rho^{\piref}}\left[G^{\pi, \hatM}(S) -\Exp_{S\sim \rho^\pi}\left[G^{\pi, \hatM}(S) \right]  \right] \right|\right)\nonumber\\
	& \le  \gamma(1-\gamma)^{-1}\left|\Exp_{S\sim \rho^{\piref}}\left[G^{\pi, \hatM}(S) \right] \right| + \gamma(1-\gamma)^{-1} \delta_1\eps_2 + \gamma(1-\gamma)^{-1}\delta_1\delta_2^{1/2}\eps_{\max}\nonumber\\
	& \le (1-\gamma)^{-1}\eps_1 + (1-\gamma)^{-2}\delta\eps_2 + (1-\gamma)^{-5/2}\delta^{3/2}\eps_{\max}\nonumber
	\end{align}
\end{proof}
\subsection{Proof of Proposition~\ref{prop:norm} and~\ref{prop:normextension}}
\begin{proof}[Proof of Proposition~\ref{prop:norm} and~\ref{prop:normextension} ]
	By definition of $G$ and the Lipschitzness of $V^{\pi, \hatM}$, we have that $|G^{\pi,\hatM}(s,a)|\le L|\hatM(s,a) - M^\star(s,a)|$. Then, by Lemma~\ref{lem:telescoping} and triangle inequality, we have that
	\begin{align}
	\big|V^{\pi, \widehat{M}} -	V^{\pi, M^\star}\big| & = \kappa\cdot \big|\Exp_{\substack{S\sim \rho^{\pi, M}\\ A\sim \pi(\cdot\vert S)}} \left[G^{\pi, \hatM}(S,A) \right]\big| \le\kappa \Exp_{\substack{S\sim \rho^{\pi, M}\\ A\sim \pi(\cdot\vert S)}} \left[\big |G^{\pi, \hatM}(S,A) \big|\right] \nonumber\\
	& \le \kappa \Exp_{\substack{S\sim \rho^{\pi}\\ A\sim \pi(\cdot\vert S)}}\left[\norm{\hatM(S, A) - M^\star(S, A)}\right]\mper\label{eqn:13}
	\end{align}
	Next we prove the main part of the proposition. 	Thus we proved Proposition~\ref{prop:norm}. Note that for any distribution $\rho$ and $\rho'$ and function $f$, we have $\Exp_{S\sim \rho} f(S)= \Exp_{S\sim \rho'} f(S) + \langle \rho-\rho', f\rangle \le \Exp_{S\sim \rho'} f(S) + \|\rho-\rho'\|_1 \|f\|_{\infty}$. Thus applying this inequality with $f(S) = \Exp_{\substack{ A\sim \pi(\cdot\vert S)}}\left[\norm{\hatM(S, A) - M^\star(S, A)}\right]$, we obtain that
	\begin{align}
	\Exp_{\substack{S\sim \rho^{\pi}\\ A\sim \pi(\cdot\vert S)}}\left[\norm{\hatM(S, A) - M^\star(S, A)}\right] & \le  \Exp_{\substack{S\sim \rho^{\piref}\\ A\sim \pi(\cdot\vert S)}}\left[\norm{\hatM(S, A) - M^\star(S, A)}\right]\nonumber\\
	&  + \norm{\rho^{\piref} - \rho}_1 \max_S\Exp_{\substack{ A\sim \pi(\cdot\vert S)}}\left[\norm{\hatM(S, A) - M^\star(S, A)}\right] \nonumber\\
	& \le \Exp_{\substack{S\sim \rho^{\piref}\\ A\sim \pi(\cdot\vert S)}}\left[\norm{\hatM(S, A) - M^\star(S, A)}\right] + 2\delta\kappa B \label{eqn:12}
	\end{align}
	where the last inequality uses the inequalities (see Corollary~\ref{cor:1}) that  $ \|\rho^{\pi} - \rho^{\piref}\|_1 \le \frac{\gamma}{1-\gamma}\Exp_{S\sim \rho^{\piref}}\left[KL(\pi(S), \piref(S))^{1/2}\vert S\right] = \delta\kappa$ and that $\norm{\hatM(S, A) - M^\star(S, A)}\le 2B$. Combining~\eqref{eqn:12} and ~\eqref{eqn:13} we complete the proof of Proposition~\ref{prop:normextension}.
\end{proof}
\section{\texorpdfstring{$\chi^2$}{}-Divergence Based Inequalities}

\begin{lemma}\label{lem:contraction}
Let $S$ be a random variable over the domain $\mathcal{S}$. Let $\pi$ and $\pi'$ be two policies and and $A\sim \pi(\cdot \mid S)$ and $A'\sim \pi'(\cdot \mid S)$. Let $Y\sim M(\cdot \mid S,A)$ and $Y'\sim M(\cdot\mid S, A')$ be the random variables for the next states under two policies. Then,
		\begin{align}
	\Exp\left[\chi^2(Y\vert S, Y'\vert S)	\right]\le \Exp\left[\chi^2(A\vert S, A'\vert S)\right] \nonumber
	\end{align}
\end{lemma}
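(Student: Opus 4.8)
The plan is to recognize this as a \emph{conditional data-processing inequality} for the $\chi^2$-divergence: the next states $Y$ and $Y'$ are obtained by pushing the two action laws $\pi(\cdot\mid S)$ and $\pi'(\cdot\mid S)$ through the \emph{same} Markov kernel $M(\cdot\mid S,\cdot)$, and feeding two distributions through a common channel cannot increase their divergence. Since both sides of the claim are expectations over $S$ of a per-state divergence, I would first fix $S=s$ and establish the pointwise bound $\chi^2(Y\mid S=s,\,Y'\mid S=s)\le \chi^2(A\mid S=s,\,A'\mid S=s)$, and then take the expectation over $S$ to conclude.

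For the pointwise bound, fix $s$ and write $p(a)=\pi(a\mid s)$, $q(a)=\pi'(a\mid s)$ for the two action laws and $K(y\mid a)=M(y\mid s,a)$ for the shared kernel, so that the next-state laws are the mixtures $P(y)=\int K(y\mid a)\,p(a)\,da$ and $Q(y)=\int K(y\mid a)\,q(a)\,da$. Using the form $\chi^2(p,q)=\int (p-q)^2/q$ and setting $g(a)=(p(a)-q(a))/q(a)$ (so that $\chi^2(p,q)=\int q\,g^2$), I would write the numerator as $P(y)-Q(y)=\int K(y\mid a)\,q(a)\,g(a)\,da$ and apply Cauchy--Schwarz with respect to the measure $K(y\mid a)\,q(a)\,da$:
\[
\big(P(y)-Q(y)\big)^2 \le \left(\int K(y\mid a)\,q(a)\,da\right)\left(\int K(y\mid a)\,q(a)\,g(a)^2\,da\right) = Q(y)\int K(y\mid a)\,q(a)\,g(a)^2\,da.
\]
Dividing by $Q(y)$, integrating over $y$, then swapping the order of integration (Fubini) and using the normalization $\int K(y\mid a)\,dy=1$ yields $\chi^2(P,Q)\le \int q(a)\,g(a)^2\,da=\chi^2(p,q)$, which is exactly the pointwise claim. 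Taking $\Exp_S$ of both sides finishes the lemma.

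The main obstacle --- indeed the only genuinely nontrivial idea --- is the Cauchy--Schwarz step that furnishes the data-processing inequality; everything else (reduction to a fixed $s$, the outer expectation, the Fubini swap, and the normalization $\int K\,dy=1$) is routine bookkeeping. If a self-contained argument were not desired one could instead invoke the general data-processing inequality for $f$-divergences (with $f(t)=(t-1)^2$) applied conditionally on $S$, but I would prefer to include the short Cauchy--Schwarz derivation so the lemma stands on its own. The one point I would flag explicitly is that the \emph{same} kernel $M(\cdot\mid s,\cdot)$ governs both $Y$ and $Y'$: this is precisely what makes the channel common, and it is where the inequality crucially relies on both policies being evaluated on the identical dynamical model.
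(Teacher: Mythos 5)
Your proposal is correct and follows essentially the same route as the paper: the paper also reduces to a per-state data-processing inequality for $\chi^2$ (its Theorem on Markov kernels, proved by the same one-line Cauchy--Schwarz argument, applied there to $p_Y(y)^2$ in the $\int p^2/q-1$ form rather than to $(P(y)-Q(y))^2$) and then takes the expectation over $S$. The only difference is cosmetic --- which algebraic form of $\chi^2$ the Cauchy--Schwarz step is phrased in.
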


\begin{proof}
By definition, we have that $Y\vert S=s, A=a$ has the same density as $Y' \vert S=s, A'=a$ for any $a$ and $s$. Therefore by Theorem~\ref{thm:chi-markov} (setting $X,X',Y,Y'$ in Theorem~\ref{thm:chi-markov} by $A\vert S=s$, $A'\vert S=s$, $Y\vert S=s$, $Y'\vert S=s$ respectively), we have 	\begin{align}
		\chi^2(Y\vert S=s, Y'\vert S=s)	\le \chi^2(A\vert S=s, A'\vert S=s) \nonumber
	\end{align}
	Taking expectation over the randomness of $S$ we complete the proof.

\end{proof}

\subsection{Properties of Markov Processes}

In this subsection, we consider bounded the difference of the distributions induced by two markov process starting from the same initial distributions $\mu$. Let $P, P'$ be two transition kernels.  Let $G = \sum_{k=0}^{\infty}\gamma^{k}P^k$ and $\bar{G} = (1-\gamma)G$. Define $G'$ and $\barG'$ similarly. Therefore we have that $\bar{G}\mu$ is the discounted distribution of states visited by the markov process starting from distribution $\mu$. In other words, if $\mu$ is the distribution of $S_0$, and $P$ is the transition kernel induced by some policy $\pi$, then $\bar{G}\mu = \rho^{\pi}$.

First of all, let $\Delta = \gamma(P'-P)$ and we note that with simple algebraic manipulation,
\begin{align}
\bar{G}'-\barG =  (1-\gamma)^{-1}\barG' \Delta \barG\label{eqn:11}
\end{align}

Let $f$ be some function. We will mostly interested in the difference between $\Exp_{S\sim \barG \mu}\left[f\right]$ and $\Exp_{S\sim \barG' \mu}\left[f\right]$, which can be rewritten as $\inner{(\barG'-G)\mu, f}$. We will bound this quantity from above by some divergence measure between $P'$ and $P$.

We start off with a simple lemma that controls the form $\inner{p-q, f}$ by the $\chi^2$ divergence between $p$ and $q$. With this lemma we can reduce our problem of bounding $\inner{(\barG'-G)\mu, f}$ to characterizing the $\chi^2$ divergence between $\barG'\mu$ and $\bar{G}\mu$.
\begin{lemma}\label{lem:pqf}
	Let $p$ and $q$ be probability distributions. Then we have	\begin{align}
	\inner{q-p, f}^2 \le \chi^2(q, p)\cdot \inner{p,f^2} \nonumber
	\end{align}
\end{lemma}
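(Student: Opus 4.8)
The plan is to recognize the claimed inequality as an instance of the Cauchy--Schwarz inequality in the weighted space $L^2(p)$. Writing the inner products as integrals (or sums) against a common base measure, we have $\inner{q-p,f} = \int (q(x)-p(x))f(x)\,dx$, while $\chi^2(q,p) = \int \frac{(q(x)-p(x))^2}{p(x)}\,dx$ and $\inner{p,f^2} = \int p(x) f(x)^2\,dx$. The key observation is that the difference $q-p$ should be split so that one factor is measured in the $\chi^2$ sense (against $1/p$) and the other is measured against $p$.

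Concretely, first I would rewrite the integrand by inserting $\sqrt{p}/\sqrt{p}$:
\begin{align}
\inner{q-p,f} = \int \frac{q(x)-p(x)}{\sqrt{p(x)}}\cdot \sqrt{p(x)}\,f(x)\,dx. \nonumber
\end{align}
Then applying the Cauchy--Schwarz inequality to the two factors $\frac{q-p}{\sqrt{p}}$ and $\sqrt{p}\,f$ yields
\begin{align}
\inner{q-p,f}^2 \le \left(\int \frac{(q(x)-p(x))^2}{p(x)}\,dx\right)\left(\int p(x)f(x)^2\,dx\right), \nonumber
\end{align}
and the two factors on the right are exactly $\chi^2(q,p)$ and $\inner{p,f^2}$, which finishes the proof.

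The argument is essentially a one-line application of Cauchy--Schwarz, so there is no serious obstacle; the only point requiring a little care is the support. If $q$ is not absolutely continuous with respect to $p$, then $\chi^2(q,p)=+\infty$ and the inequality holds trivially, so we may assume $q\ll p$, in which case the division by $p(x)$ is harmless on the support of $p$ (and $q-p$ vanishes off that support). Equivalently, and perhaps most cleanly, one can phrase the entire computation inside $L^2(p)$ by noting $\inner{q-p,f} = \E_p\big[(\tfrac{q}{p}-1)f\big]$ and applying Cauchy--Schwarz to $\tfrac{q}{p}-1$ and $f$ in $L^2(p)$, which gives $\inner{q-p,f}^2 \le \E_p[(\tfrac{q}{p}-1)^2]\,\E_p[f^2] = \chi^2(q,p)\inner{p,f^2}$.
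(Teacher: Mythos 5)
Your proof is correct and is essentially identical to the paper's: both split the integrand as $\frac{q-p}{\sqrt{p}}\cdot\sqrt{p}\,f$ and apply Cauchy--Schwarz, yielding $\chi^2(q,p)\cdot\inner{p,f^2}$ on the right. Your added remark about absolute continuity is a sensible bit of extra care but does not change the argument.
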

\begin{proof}
	By Cauchy-Schwartz inequality, we have
	\begin{align}
	\inner{q-p, f}^2 \le \left(\int \frac{(q(x)-p(x))^2}{p(x)}dx\right)\left(\int p(x)f(x)^2\right) = \chi^2(q, p)\cdot \inner{p,f^2}\nonumber
	\end{align}
\end{proof}
The following Lemma is a refinement of the lemma above. It deals with the distributions $p$ and $q$ with the special structure $p = WP'\mu$ and $q = WP\mu$.
\begin{lemma}\label{lem:single_step}
	Let $W, P', P$ be transition kernels and $\mu$ be a distribution. Then,
	\begin{align}
	\inner{W(P'-P)\mu, f}^2\le  \chi^2_\mu(P',P) \inner{WP\mu, f^2} \nonumber
	\end{align}
	where $\chi^2_{\mu}(P',P)$ is a divergence between transitions defined in Definition~\ref{def:chisqaure-transition}.
\end{lemma}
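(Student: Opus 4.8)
The plan is to reduce the statement to the pointwise (conditional) $\chi^2$ bound already established in Lemma~\ref{lem:pqf}, and then recombine using Cauchy--Schwarz and Jensen. The first step is to move the kernel $W$ off of the signed measure and onto the test function: writing $g(s) = \Exp_{t\sim W(\cdot\mid s)}[f(t)]$ for the conditional expectation of $f$ under $W$, the pairing becomes $\inner{W(P'-P)\mu, f} = \inner{(P'-P)\mu, g}$, since $W$ is a Markov transition kernel. Disintegrating the signed measure $(P'-P)\mu$ over $s\sim \mu$ then gives $\inner{(P'-P)\mu, g} = \Exp_{s\sim\mu}\left[\inner{P'(\cdot\mid s)-P(\cdot\mid s),\, g}\right]$.

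Next I would apply Lemma~\ref{lem:pqf} at each fixed state $s$, with $q = P'(\cdot\mid s)$ and $p = P(\cdot\mid s)$, to obtain $\inner{P'(\cdot\mid s)-P(\cdot\mid s),\, g}^2 \le \chi^2(P'(\cdot\mid s), P(\cdot\mid s))\cdot \inner{P(\cdot\mid s),\, g^2}$. Taking square roots, integrating against $\mu$, and applying the Cauchy--Schwarz inequality in the variable $s$ yields $\inner{W(P'-P)\mu, f}^2 \le \Exp_{s\sim\mu}\left[\chi^2(P'(\cdot\mid s), P(\cdot\mid s))\right]\cdot \Exp_{s\sim\mu}\left[\inner{P(\cdot\mid s),\, g^2}\right]$. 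The first factor is exactly $\chi^2_\mu(P',P)$ by Definition~\ref{def:chisqaure-transition}, and the second factor is $\inner{P\mu,\, g^2}$.

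It remains to replace $\inner{P\mu, g^2}$ by the target $\inner{WP\mu, f^2}$. Since $g(s)$ is the average of $f$ against the probability measure $W(\cdot\mid s)$, Jensen's inequality gives $g(s)^2 \le \Exp_{t\sim W(\cdot\mid s)}[f(t)^2]$ pointwise, and averaging this against $P\mu$ gives $\inner{P\mu,\, g^2} \le \inner{WP\mu,\, f^2}$; combining the displays concludes the argument. I expect the main subtlety to be precisely this last move together with the initial adjoint step. The naive route of applying Lemma~\ref{lem:pqf} directly to the mixed distributions $p = WP\mu$ and $q = WP'\mu$ would produce the coarser quantity $\chi^2(WP\mu, WP'\mu)$ rather than the transition-level divergence $\chi^2_\mu(P',P)$ that the downstream argument (through Lemma~\ref{lem:contraction}) needs. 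The refinement comes from applying the $\chi^2$ inequality at the finer conditional level \emph{before} the states are mixed through $W$, and only afterwards pulling $W$ back onto the second-moment factor via Jensen; keeping these two uses of $W$ consistent is the crux.
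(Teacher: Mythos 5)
Your proof is correct, and it takes a genuinely different route from the paper's. The paper applies Lemma~\ref{lem:pqf} directly at the level of the mixed distributions $p = WP\mu$ and $q = WP'\mu$, obtaining the factor $\chi^2(WP'\mu, WP\mu)$, and then upper-bounds that factor by $\chi^2(P'\mu,P\mu)$ via the data-processing inequality (Theorem~\ref{thm:chi-markov}) and in turn by $\chi^2_\mu(P',P)$ via the conditional-convexity bound (Theorem~\ref{thm:transition}). You instead pass $W$ onto the test function by the adjoint identity $\inner{W(P'-P)\mu, f} = \inner{(P'-P)\mu, g}$ with $g(s) = \Exp_{t\sim W(\cdot\mid s)}[f(t)]$, apply Lemma~\ref{lem:pqf} conditionally at each $s$, recombine with Cauchy--Schwarz over $s\sim\mu$, and finally pull $W$ back onto the second-moment factor with Jensen's inequality $g^2 \le W^{\top}(f^2)$; every step checks out and lands exactly on $\chi^2_\mu(P',P)\inner{WP\mu, f^2}$. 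One remark: your closing claim that the ``naive route'' through $\chi^2(WP\mu, WP'\mu)$ would fail is not quite right --- that is precisely the paper's route, and it succeeds because $\chi^2(WP'\mu, WP\mu) \le \chi^2(P'\mu, P\mu) \le \chi^2_\mu(P',P)$, so the mixed-level divergence is the \emph{smaller} quantity and can simply be further upper-bounded by the transition-level one. Your approach buys a self-contained argument that never needs the two data-processing theorems (only Lemma~\ref{lem:pqf}, Cauchy--Schwarz, and Jensen), while the paper's is shorter given that those theorems are already proved in its toolbox; the intermediate bound the paper passes through is in fact the tighter of the two.
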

\begin{proof}
By Lemma~\ref{lem:pqf} with $p = WP\mu$ and $q = WP'\mu$, we conclude that
	\begin{align}
		\inner{W(P'-P)\mu, f}^2 \le \chi^2(q, p)\cdot \inner{p,f^2} \le \chi^2(WP'\mu, WP\mu) \inner{WP\mu, f^2} \nonumber
	\end{align}
	By Theorem~\ref{thm:chi-markov} and Theorem~\ref{thm:transition} we have that $\chi^2(WP'\mu, WP\mu)\le \chi^2(P'\mu, P\mu) \le \chi^2_\mu(P',P)$, plugging this into the equation above we complete the proof.
\end{proof}
Now we are ready to state the main result of this subsection.
\begin{lemma}\label{lem:chisqauresimple}
	Let $\barG,\barG', P', P, f$ as defined in the beginning of this section. Let $\delta_1 = (1-\gamma)^{-1}\chi^2_{\bar{G}\mu}(P',P)^{1/2}$ and $\delta_2 = (1-\gamma)^{-1}\chi^2_{\barG P\barG\mu}(P',P)^{1/2}$. Then,
	\begin{align}
	\left|\inner{\bar{G}'\mu, f}-\inner{\bar{G}\mu, f}\right| & \le \delta_1\|f\|_\infty  \label{eqn:14}\\
	\left|\inner{\bar{G}'\mu, f}-\inner{\bar{G}\mu, f}\right| & \le \delta_1 \inner{\bar{G}P\bar{G}\mu, f^2}^{1/2} +\delta_1\delta_2^{1/2}\|f\|_\infty \nonumber
	\end{align}

\end{lemma}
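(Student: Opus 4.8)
The plan is to pivot everything through the first-order resolvent identity~\eqref{eqn:11}, $\barG'-\barG = (1-\gamma)^{-1}\barG'\Delta\barG$ with $\Delta=\gamma(P'-P)$, together with the fact that $\barG\mu=\rho^{\piref}$, so that the target becomes $\inner{(\barG'-\barG)\mu,f}=(1-\gamma)^{-1}\inner{\barG'\Delta\rho^{\piref},f}$. The structural point I would lean on is that $\barG$ and $\barG'$ are averages of Markov kernels, hence on the function side they are $\|\cdot\|_\infty$-nonexpansive, while the single remaining ``difference'' operator $P'-P$ is exactly what Lemma~\ref{lem:single_step} (and behind it Lemma~\ref{lem:pqf}) converts into a transition $\chi^2$-divergence. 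Both inequalities then come out of feeding this one difference operator into Lemma~\ref{lem:single_step} with a well-chosen base distribution.

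For the coarse bound~\eqref{eqn:14} I would push $\barG'$ onto the test function, setting $h=(\barG')^{\top}f$ so that $\|h\|_\infty\le\|f\|_\infty$, and write $\inner{(\barG'-\barG)\mu,f}=(1-\gamma)^{-1}\gamma\inner{(P'-P)\rho^{\piref},h}$. One application of Lemma~\ref{lem:single_step} with $W=\Id$ and base $\rho^{\piref}$, followed by bounding the second moment $\inner{P\rho^{\piref},h^2}$ by $\|h\|_\infty^2\le\|f\|_\infty^2$, yields $(1-\gamma)^{-1}\gamma\,\chi^2_{\rho^{\piref}}(P',P)^{1/2}\|f\|_\infty\le\delta_1\|f\|_\infty$ using $\gamma\le 1$. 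This part is routine.

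The refined bound is where the re-weighted distribution $\beta^{\piref}=\barG P\barG\mu$ must be forced to appear, and I would do this by \emph{not} spending $\barG'$ entirely on an $\|\cdot\|_\infty$ estimate: split $\barG'=\barG+(\barG'-\barG)$ in the leftmost factor. The main piece is $(1-\gamma)^{-1}\gamma\inner{\barG(P'-P)\rho^{\piref},f}$, and Lemma~\ref{lem:single_step} with $W=\barG$ and base $\rho^{\piref}$ produces $\delta_1\inner{\barG P\rho^{\piref},f^2}^{1/2}$; since $\barG P\rho^{\piref}=\barG P\barG\mu=\beta^{\piref}$, this is exactly the first term $\delta_1\inner{\barG P\barG\mu,f^2}^{1/2}$. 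The correction piece is $(1-\gamma)^{-1}\gamma\inner{(P'-P)\rho^{\piref},g}$ with $g=(\barG'-\barG)^{\top}f$, which Lemma~\ref{lem:single_step} bounds by $\delta_1\inner{P\rho^{\piref},g^2}^{1/2}$.

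The only nonroutine step, and the main obstacle, is to show that $\inner{P\rho^{\piref},g^2}^{1/2}$ is governed by $\delta_2^{1/2}\|f\|_\infty$. My plan is to reapply the already-established coarse bound~\eqref{eqn:14} \emph{pointwise}: since $g(s)=\inner{(\barG'-\barG)\delta_s,f}$, it gives $|g(s)|\le(1-\gamma)^{-1}\gamma\,\chi^2_{\barG\delta_s}(P',P)^{1/2}\|f\|_\infty$, while $\|g\|_\infty\le 2\|f\|_\infty$ trivially. Because the transition divergence $\chi^2_\mu(P',P)$ enters linearly through its base distribution $\mu$, the map $s\mapsto\chi^2_{\barG\delta_s}(P',P)$ is $\barG$ applied to the per-state divergence, so averaging over $P\rho^{\piref}$ propagates the base distribution to $\barG P\rho^{\piref}=\beta^{\piref}$ and gives $\inner{P\rho^{\piref},|g|}\le\gamma\,\delta_2\|f\|_\infty$. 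Interpolating via $g^2\le\|g\|_\infty\,|g|$ then yields $\inner{P\rho^{\piref},g^2}\le 2\|f\|_\infty\cdot\gamma\delta_2\|f\|_\infty$, so the correction is at most a constant times $\delta_1\delta_2^{1/2}\|f\|_\infty$, as claimed. The two delicate points are that it is $\beta^{\piref}$ (not $\rho^{\piref}$) that is forced by this nested second application of the perturbation estimate, and that the interpolation $g^2\le\|g\|_\infty\,|g|$ is precisely what turns the naturally appearing linear dependence on $\delta_2$ into the stated $\delta_2^{1/2}$.
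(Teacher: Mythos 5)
Your proposal is correct in substance, but it routes the refined bound differently from the paper. The paper applies Lemma~\ref{lem:single_step} once with $W=\barG'$ to get $\delta_1\inner{\barG' P\barG\mu,f^2}^{1/2}$, then uses the resolvent identity \emph{inside the second moment}, writing $\inner{\barG' P\barG\mu,f^2}=\inner{\barG P\barG\mu,f^2}+(1-\gamma)^{-1}\inner{\barG'\Delta\barG P\barG\mu,f^2}$ and applying Lemma~\ref{lem:single_step} a second time to the function $f^2$ with base $\barG P\barG\mu$; everything stays at the level of distributions and the constants come out exactly as stated. You instead split $\barG'=\barG+(\barG'-\barG)$ at the level of the functional, which forces you to control $g=(\barG'-\barG)^{\top}f$ pointwise via the already-proved bound~\eqref{eqn:14} with $\mu=\delta_s$, identify $s\mapsto\chi^2_{\barG\delta_s}(P',P)$ as $\barG^{\top}$ of the per-state divergence, and then average. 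Two caveats: (i) the averaging step needs Jensen's inequality to move the square root outside the expectation, $\Exp_{s\sim P\barG\mu}[\chi^2_{\barG\delta_s}(P',P)^{1/2}]\le\chi^2_{\barG P\barG\mu}(P',P)^{1/2}$, which you use implicitly; (ii) the interpolation $g^2\le\|g\|_\infty|g|$ together with $\|g\|_\infty\le 2\|f\|_\infty$ yields $\sqrt{2}\,\delta_1\delta_2^{1/2}\|f\|_\infty$ for the correction term, i.e.\ the stated inequality only up to a factor $\sqrt{2}$. This constant is harmless for the downstream use in Proposition~\ref{prop:chi2}, but it means your argument proves a marginally weaker statement than the lemma as written; the paper's nesting avoids both the Jensen step and the constant loss. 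What your route buys is modularity --- the second application of the perturbation machinery is literally the coarse bound used as a black box --- and it makes transparent why the re-weighted distribution $\beta^{\piref}=\barG P\barG\mu$ must appear.
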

\begin{proof}

	Recall by equation~\eqref{eqn:11}, we have \begin{align}
	\inner{(\bar{G}'-\barG)\mu, f} = (1-\gamma)^{-1}\inner{\bar{G}'\Delta\bar{G}\mu, f}\label{eqn:3}
	\end{align} By Lemma~\ref{lem:single_step},
	\begin{align}
	 \inner{\bar{G}'\Delta\bar{G}\mu, f}\le \chi^2_{\bar{G}\mu}(P',P)^{1/2}\inner{\bar{G}'P\bar{G}\mu, f^2}^{1/2}\label{eqn:7}
	\end{align}
	By Holder inequality and the fact that $\|\barG\|_{1\rightarrow 1} = 1$, $\|\barG'\|_{1\rightarrow 1} = 1$ and $\|P\|_{1\rightarrow 1} = 1$, we have
	\begin{align}
	\inner{\bar{G}'\Delta\bar{G}\mu, f} & \le \chi^2_{\bar{G}\mu}(P',P)^{1/2}\inner{\bar{G}'P\bar{G}\mu, f^2}^{1/2} \nonumber\\
&	\le \chi^2_{\bar{G}\mu}(P',P)^{1/2}\|\bar{G}'P\bar{G}\mu\|_1^{1/2} \|f^2\|_{\infty}^{1/2} \nonumber\\
&	\le \chi^2_{\bar{G}\mu}(P',P)^{1/2}\|f\|_\infty \tag{by $\|\bar{G}'P\bar{G}\mu\|_1\le \|\barG'\|_{1\rightarrow 1}\|P\|_{1\rightarrow 1}\|\barG\|_{1\rightarrow 1}\|\mu\|_{1}\le 1$}\\
& \le (1-\gamma)\delta_1\|f\|_\infty\label{eqn:2}
	\end{align}
	
	Combining equation~\eqref{eqn:3} and~\eqref{eqn:2} we complete the proof of equation~\eqref{eqn:14}.
	
Next we bound 	$\inner{\bar{G}'P\bar{G}\mu, f^2}^{1/2}$ in a more refined manner.  By equation~\eqref{eqn:11}, we have
	\begin{align}
	\inner{\bar{G}'P\bar{G}\mu, f^2}^{1/2} &= \left(\inner{\bar{G}P\bar{G}\mu, f^2} + \frac{1}{1-\gamma}\inner{\bar{G}'\Delta \bar{G}P\bar{G}\mu, f^2}\right)^{1/2} \nonumber\\
	& \le \inner{\bar{G}P\bar{G}\mu, f^2}^{1/2}+  (1-\gamma)^{-1/2}\inner{\bar{G}'\Delta \bar{G}P\bar{G}\mu, f^2}^{1/2} \label{eqn:6}
	\end{align}
	By Lemma~\ref{lem:single_step} again,  we have that
	\begin{align}
\inner{\bar{G}'\Delta \bar{G}P\bar{G}, f^2}^2 \le \chi^2_{\barG P\barG\mu}(P',P) \inner{\bar{G}'P\barG P\barG \mu, f^4} \label{eqn:10}	\end{align}
By Holder inequality and the fact that $\|\barG\|_{1\rightarrow 1} = 1$, $\|\barG'\|_{1\rightarrow 1} = 1$ and $\|P\|_{1\rightarrow 1} = 1$, we have
	\begin{align}
	\inner{\bar{G}'P\barG P\barG \mu, f^4} \le \|\bar{G}'P\barG P\barG \mu\|_1 \|f^4\|_\infty  \le \|f\|_\infty^4\label{eqn:5}
	\end{align}
Combining equation~\eqref{eqn:6}, ~\eqref{eqn:5} gives 
	\begin{align}
(1-\gamma)^{-1/2}\inner{\bar{G}'\Delta \bar{G}P\bar{G}, f^2}^{1/2} \le ((1-\gamma)^{-1}\chi^2_{\barG P\barG\mu}(P',P)^{1/2})^{1/2}\|f\|_{\infty}= \delta_2^{1/2}\|f\|_\infty \label{eqn:110}\end{align}
	Then, combining equation~\eqref{eqn:3}, ~\eqref{eqn:7}, ~\eqref{eqn:110}, we have
	\begin{align}
		\inner{(\bar{G}'-\barG)\mu, f} & = (1-\gamma)^{-1}\chi^2_{\bar{G}\mu}(P',P)^{1/2}\inner{\bar{G}'P\bar{G}\mu, f^2}^{1/2} \tag{by equation ~\eqref{eqn:3}, ~\eqref{eqn:7}}\\
		& = \delta_1\inner{\bar{G}'P\bar{G}\mu, f^2}^{1/2}\nonumber\\
	& \le \delta_1  \inner{\bar{G}P\bar{G}\mu, f^2}^{1/2}+  \delta_1(1-\gamma)^{-1/2}\inner{\bar{G}'\Delta \bar{G}P\bar{G}\mu, f^2}^{1/2}  \tag{by equation~\eqref{eqn:6}}\\
	& \le  \delta_1  \inner{\bar{G}P\bar{G}\mu, f^2}^{1/2} + \delta_1\delta_2^{1/2}\|f\|_\infty \tag{by equation~\eqref{eqn:110}}\end{align}
\end{proof}
The following Lemma is a stronger extension of Lemma~\ref{lem:chisqauresimple}, which can be used to future improve Proposition~\ref{prop:chi2}, and may be of other potential independent interests. We state it for completeness.

\begin{lemma}
		Let $\barG,\barG', P', P, f$ as defined in the beginning of this section. Let $d_k = (\barG P)^{k} \barG \mu$ and $\delta_k = (1-\gamma)^{-1}\chi^2_{d_{k-1}}(P', P)^{1/2}$, then we have that for any $K$,

\begin{multline*}
\left|\inner{\bar{G}'\mu, f}-\inner{\bar{G}\mu, f}\right| \le \delta_1 \inner{d_1, f^2}^{-1/2} + \delta_1\delta_2^{1/2}\inner{d_2, f^4}^{-1/4} \\
	 + \delta_1\dots \delta_{K}^{2^{-K+1}}\inner{d_k, f^{2^{K}}}^{2^{-K}} + \delta_1\dots \delta_{K}^{2^{-K+1}} \|f\|_\infty
\end{multline*}
\end{lemma}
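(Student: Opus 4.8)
The plan is to iterate the single telescoping step that already appears in the proof of Lemma~\ref{lem:chisqauresimple}, peeling off one $d_k$-weighted term at a time instead of stopping after the first. Throughout I write $\Delta = \gamma(P'-P)$ and use the algebraic identity~\eqref{eqn:11}, namely $\barG'-\barG = (1-\gamma)^{-1}\barG'\Delta\barG$, together with the facts that $\barG,\barG',P$ are all $L^1\to L^1$ contractions of norm one and that each $d_k = (\barG P)^k\barG\mu$ is a genuine probability distribution (so $\|\barG'Pd_k\|_1 = 1$). Set $B_k := \inner{\barG'Pd_{k-1}, f^{2^k}}$ and $T_k := B_k^{2^{-k}}$.

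First I would reproduce the opening of Lemma~\ref{lem:chisqauresimple}: applying~\eqref{eqn:11} and then Lemma~\ref{lem:single_step} with $W=\barG'$ and base distribution $\barG\mu = d_0$ gives
\begin{align}
\left|\inner{(\barG'-\barG)\mu, f}\right| \le (1-\gamma)^{-1}\chi^2_{d_0}(P',P)^{1/2}\,\inner{\barG'Pd_0, f^2}^{1/2} = \delta_1 T_1. \nonumber
\end{align}
The core of the argument is then a recursion for $B_k$. Splitting $\barG' = \barG + (1-\gamma)^{-1}\barG'\Delta\barG$ inside $B_k$ and using the telescoping identity $\barG Pd_{k-1} = d_k$ isolates the ``diagonal'' term $\inner{d_k, f^{2^k}}$, while the remaining cross term equals $\barG'\Delta d_k$ and is controlled by Lemma~\ref{lem:single_step} with base distribution $d_k$ and test function $f^{2^k}$. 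This yields $B_k \le \inner{d_k, f^{2^k}} + \delta_{k+1}B_{k+1}^{1/2}$. Raising to the power $2^{-k}$ and using the elementary subadditivity $(a+b)^s\le a^s+b^s$ for $s=2^{-k}\in(0,1]$ converts this into the clean recursion
\begin{align}
T_k \le \inner{d_k, f^{2^k}}^{2^{-k}} + \delta_{k+1}^{2^{-k}}\,T_{k+1}. \nonumber
\end{align}

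Finally I would unfold this recursion from $k=1$ to $k=K$. The coefficient accumulated in front of the $k$-th diagonal term is exactly $C_k = \prod_{j=1}^k \delta_j^{2^{-(j-1)}} = \delta_1\delta_2^{1/2}\cdots\delta_k^{2^{-(k-1)}}$, matching the coefficients in the statement. The leftover residual $C_{K+1}T_{K+1}$ is killed by Hölder: since $\|\barG'Pd_K\|_1 = 1$ we get $T_{K+1} = \inner{\barG'Pd_K, f^{2^{K+1}}}^{2^{-(K+1)}}\le \|f\|_\infty$, producing the final $\|f\|_\infty$ term. This is precisely the iterated form of the crude estimate~\eqref{eqn:110} used in Lemma~\ref{lem:chisqauresimple}, and indeed the case $K=1$ recovers that lemma verbatim.

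The only genuine obstacle is bookkeeping: one must verify the telescoping identity $\barG Pd_{k-1}=d_k$ against Definition~\ref{def:beta} and keep the halving exponents $2^{-k}$ consistent on both the $\delta_j$ factors and the powers of $f$ across all levels. I also expect a harmless off-by-one in the indexing of the residual coefficient (the natural recursion gives $C_{K+1}\|f\|_\infty$ rather than $C_K\|f\|_\infty$), but no new analytic input beyond Lemma~\ref{lem:single_step}, the unit $L^1\to L^1$ operator norms, and the subadditivity of $x\mapsto x^{2^{-k}}$ is required.
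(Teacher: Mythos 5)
Your proposal is correct and follows essentially the same route as the paper's proof: both iterate the identity $\barG'=\barG+(1-\gamma)^{-1}\barG'\Delta\barG$ together with Lemma~\ref{lem:single_step} and the subadditivity of $x\mapsto x^{2^{-k}}$, accumulating the coefficients $\delta_1\delta_2^{1/2}\cdots\delta_k^{2^{-(k-1)}}$ and killing the residual with the unit $L^1\to L^1$ norms; your forward recursion on $T_k$ is just a repackaging of the paper's induction on the partial-sum inequality. The off-by-one you flag in the residual coefficient arises only because the paper stops one application of Lemma~\ref{lem:single_step} earlier (bounding $\inner{\barG'\Delta d_K,f^{2^K}}^{2^{-K}}$ directly by $\|f\|_\infty$ rather than first converting it to $\delta_{K+1}T_{K+1}$), so it is a harmless bookkeeping discrepancy, not a gap.
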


\begin{proof}
	We first use induction to prove that:
	\begin{multline}
	 	|\inner{(\bar{G}'-\barG)\mu, f}| \le \sum_{k=1}^{K}\left(\prod_{0\le s\le k-1}\delta_{s+1}^{2^{-s}}\right)\inner{d_k, f^{2^{k}}}^{2^{-k}} \\ + \left(\prod_{0\le s\le K-1}\delta_{s+1}^{2^{-s}}\right) \inner{\barG'\Delta (\barG P)^K \barG\mu, f^{2^K}}^{2^{-K}}\label{eqn:hypo}
	\end{multline}
By the first equation of Lemma~\ref{lem:chisqauresimple}, we got the case for $K=1$. Assuming we have proved the case for $K$, then applying
\begin{align}
\inner{\barG'\Delta (\barG P)^K \barG\mu, f^{2^K}} & = \inner{\barG\Delta (\barG P)^K \barG\mu, f^{2^K}} + (1-\gamma)^{-1}\inner{\barG\Delta (\barG P)^{K+1} \barG\mu, f^{2^K}}\nonumber\\
\\& \le \inner{\barG\Delta (\barG P)^K \barG\mu, f^{2^K}} \nonumber \\
& \hspace{2cm} + (1-\gamma)^{-1}\chi^2_{d_{K}}(P',P)^{1/2}
\inner{\barG'\Delta (\barG P)^{K+1} \barG\mu, f^{2^{K+1}}}^{1/2} \nonumber\\
& \le \inner{\barG\Delta (\barG P)^K \barG\mu, f^{2^K}} + \delta_{K+1}
\inner{\barG'\Delta (\barG P)^{K+1} \barG\mu, f^{2^{K+1}}}^{1/2} \nonumber
\end{align}
By Cauchy-Schwartz inequality, we obtain that
	\begin{multline}
	\inner{\barG'\Delta (\barG P)^K \barG\mu, f^{2^K}}^{2^{-K}}
	\le \inner{\barG\Delta (\barG P)^K \barG\mu, f^{2^K}}^{2^{-K}} + \\ \delta_{K+1}
	\inner{\barG'\Delta (\barG P)^{K+1} \barG\mu, f^{2^{K+1}}}^{2^{-K-1}} \nonumber
	\end{multline}
Plugging the equation above into equation~\eqref{eqn:hypo}, we provide the induction hypothesis for the case with $K+1$.

Now applying
$
\inner{\barG'\Delta (\barG P)^K \barG\mu, f^{2^K}}^{2^{-K}} \le \|f\|_\infty
$
with equation~\eqref{eqn:hypo} we complete the proof.

\end{proof}

\section{Toolbox}

\begin{definition}[$\chi^2$ distance, c.f.~\cite{nielsen2014chi,cover2012elements}]
	The Neyman $\chi^2$ distance between two distributions $p$ and $q$ is defined as
	\begin{align}
		\chi^2(p, q) \triangleq \int \frac{(p(x)-q(x))^2}{q(x)}dx = \int \frac{p(x)^2}{q(x)}dx - 1 \nonumber
	\end{align}
	For notational simplicity, suppose two random variables $X$ and $Y$ has distributions $p_X$ and $p_Y$, we often write $\chi^2(X,Y)$ as a simplification for $\chi^2(p_X, p_Y)$.
\end{definition}

\begin{theorem}[\cite{sason2016f}]
	The Kullback-Leibler (KL) divergence between two distributions $p,q$ is bounded from above by the $\chi^2$ distance:
	\begin{align}
	KL(p, q)\le \chi^2(p, q) \nonumber
	\end{align}
\end{theorem}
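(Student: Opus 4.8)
The plan is to reduce the whole statement to the elementary scalar inequality $\ln t \le t-1$ (valid for all $t>0$, being the tangent-line bound to the concave function $\ln$ at $t=1$), applied pointwise to the likelihood ratio $r(x) \triangleq p(x)/q(x)$. First I would dispose of the degenerate case: if $p$ is not absolutely continuous with respect to $q$, then $\chi^2(p,q)=+\infty$ by its definition in the excerpt, so the bound holds trivially. Hence I may assume $p \ll q$, in which case $r$ is well-defined $q$-almost everywhere and the normalization $\int r\,dq = \int p\,dx = 1$ holds.

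Next I would rewrite both divergences as expectations against $q$. From the definition of the $\chi^2$ distance we have $\chi^2(p,q) = \int p^2/q\,dx - 1 = \mathbb{E}_{q}[r^2] - 1$, while $KL(p,q) = \int p \ln(p/q)\,dx = \mathbb{E}_{q}[r \ln r]$. The single substantive step is to multiply the inequality $\ln r \le r-1$ by the nonnegative quantity $r$, giving the pointwise bound
\begin{align}
r \ln r \le r(r-1) = r^2 - r. \nonumber
\end{align}
Integrating this against $q$ and using $\mathbb{E}_{q}[r]=1$ then finishes the argument:
\begin{align}
KL(p,q) = \mathbb{E}_{q}[r \ln r] \le \mathbb{E}_{q}[r^2] - \mathbb{E}_{q}[r] = \mathbb{E}_{q}[r^2] - 1 = \chi^2(p,q). \nonumber
\end{align}

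There is essentially no hard step here; the proof is two lines once the right bookkeeping is set up. The only points that require a word of care are the measure-theoretic justification that $p\ll q$ (absorbed into the degenerate case above, where the right-hand side is infinite) and fixing the convention that $KL$ is measured in nats, i.e.\ with the natural logarithm, so that the scalar inequality $\ln t \le t-1$ matches the base used in the definition of $KL$. Under any other base the matching elementary inequality (and hence the comparison constant) would change, so I would state this convention explicitly at the outset rather than treat it as an obstacle.
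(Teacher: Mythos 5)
Your proof is correct, but it takes a different route from the paper's. The paper applies Jensen's inequality to the concave $\log$ to pull the logarithm outside the integral, obtaining $KL(p,q) \le \log\!\left(\int p\cdot\frac{p}{q}\right) = \log(\chi^2(p,q)+1)$, and then finishes with the scalar bound $\log(1+x)\le x$. You instead apply the tangent-line inequality $\ln t \le t-1$ pointwise to the likelihood ratio $r=p/q$, multiply by $r$, and integrate, never invoking Jensen. Both arguments are two lines and both are sound (your handling of the non-absolutely-continuous case and of the base-of-logarithm convention is careful and correct). The trade-off: your version is a purely pointwise integrand comparison and is arguably the more elementary, while the paper's route produces the strictly sharper intermediate inequality $KL(p,q)\le \log(1+\chi^2(p,q))$ as a free by-product before relaxing it to $\chi^2(p,q)$. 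For the purposes of this paper only the weaker conclusion is used, so either proof serves equally well.
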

\begin{proof}
	Since $\log$ is a concave function, by Jensen inequality we have
	\begin{align}
		KL(p,q) & = \int p(x)\log\frac{p(x)}{q(x)}dx \le \log \int p(x)\cdot\frac{p(x)}{q(x)} dx \nonumber\\
		& = \log(\chi^2(p,q) + 1)  \le\chi^2(p,q) \nonumber
	\end{align}
\end{proof}

\begin{definition}[$\chi^2$ distance between transitions]\label{def:chisqaure-transition}
	Given two transition kernels $P, P'$. For any distribution $\mu$, we define $\chi^2_{\mu}(P', P)$ as:
	\begin{align}
	\chi^2_{\mu}(P', P) \triangleq \int \mu(x) \chi^2(P'(\cdot \vert X=x), P(\cdot \vert X=x)) dx \nonumber
	\end{align}
\end{definition}

\begin{theorem}\label{thm:chi-markov}
Suppose random variables $(X,Y)$ and $(X',Y')$ satisfy that $p_{Y\mid X} = p_{Y'\mid X'}$. Then
\begin{align}
\chi^2(Y, Y') \le \chi^2(X,X') \nonumber
\end{align}
Or equivalently, for any transition kernel $P$ and distribution $\mu,\mu'$, we have
\begin{align}
\chi^2(P\mu, P\mu') \le \chi^2(\mu, \mu') \nonumber
\end{align}
\end{theorem}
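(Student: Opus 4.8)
The plan is to prove the equivalent transition-kernel form $\chi^2(P\mu, P\mu') \le \chi^2(\mu, \mu')$ directly; this is exactly the data-processing inequality for the $\chi^2$ divergence. The random-variable statement then follows immediately by setting $P$ to be the common conditional law $p_{Y\mid X} = p_{Y'\mid X'}$ and taking $\mu = p_X$, $\mu' = p_{X'}$, so that $P\mu = p_Y$ and $P\mu' = p_{Y'}$. First I would unfold the definition
\[
\chi^2(P\mu, P\mu') = \int \frac{(P\mu(y) - P\mu'(y))^2}{P\mu'(y)}\, dy, \qquad P\mu(y) = \int P(y\mid x)\,\mu(x)\, dx,
\]
and introduce the signed density $\Delta(x) = \mu(x) - \mu'(x)$, so that $P\mu(y) - P\mu'(y) = \int P(y\mid x)\,\Delta(x)\, dx$.

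The key step is a pointwise-in-$y$ application of Cauchy--Schwarz with the correct weighting. Writing the numerator as $\int P(y\mid x)\mu'(x)\cdot \frac{\Delta(x)}{\mu'(x)}\, dx$ and treating $P(y\mid x)\mu'(x)\, dx$ as a measure in $x$, Cauchy--Schwarz gives
\[
\left(\int P(y\mid x)\,\Delta(x)\, dx\right)^2 \le \left(\int P(y\mid x)\mu'(x)\, dx\right)\left(\int P(y\mid x)\frac{\Delta(x)^2}{\mu'(x)}\, dx\right).
\]
The first factor on the right is precisely $P\mu'(y)$, so dividing through by it yields the clean per-$y$ bound $\frac{(P\mu(y)-P\mu'(y))^2}{P\mu'(y)} \le \int P(y\mid x)\frac{\Delta(x)^2}{\mu'(x)}\, dx$.

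Finally I would integrate this inequality over $y$ and interchange the order of integration (justified by nonnegativity of the integrand, Tonelli), obtaining
\[
\chi^2(P\mu, P\mu') \le \int \frac{\Delta(x)^2}{\mu'(x)}\left(\int P(y\mid x)\, dy\right) dx.
\]
Since $P$ is a stochastic kernel, $\int P(y\mid x)\, dy = 1$ for every $x$, so the right-hand side collapses to $\int \frac{(\mu(x)-\mu'(x))^2}{\mu'(x)}\, dx = \chi^2(\mu, \mu')$, which completes the proof.

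The only genuine idea here is choosing the right weighting for Cauchy--Schwarz, namely pairing $\sqrt{P(y\mid x)\mu'(x)}$ against $\sqrt{P(y\mid x)}\,\Delta(x)/\sqrt{\mu'(x)}$ so that the marginal $P\mu'(y)$ appears and cancels the denominator cleanly; everything after that is forced. The remaining points are routine bookkeeping: assuming absolute continuity so that $\mu'(x) > 0$ (and hence $P\mu'(y) > 0$) on the relevant support, and justifying the Tonelli interchange, both of which are standard.
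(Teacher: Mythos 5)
Your proof is correct and is essentially the same argument as the paper's: both hinge on the identical Cauchy--Schwarz step with the measure $P(y\mid x)\mu'(x)\,dx$ in the $x$-variable, followed by dividing by the marginal $P\mu'(y)$ and integrating over $y$ with an interchange of integrals. The only cosmetic difference is that you work with the $\int (p-q)^2/q$ form of the $\chi^2$ divergence (bounding the squared difference $P\mu - P\mu'$ directly), whereas the paper uses the equivalent $\int p^2/q - 1$ form and bounds $p_Y(y)^2$ itself.
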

\begin{proof}
	Denote $p_{Y\mid X}(y\mid x) = p_{Y'\mid X'}(y\mid x)$ by $p(y\mid x)$, and we rewrite $p_X$ as $p$ and $p_{X'}$ as $p'$. By Cauchy-Schwarz inequality, we have:
	\begin{align}
p_Y(y)^2 = \left(	\int p(y\vert x) p(x) dx\right)^2 & \le \left(\int p(y\vert x) p'(x)dx\right) \left(\int p(y\vert x)\frac{p(x)^2}{p'(x)}dx\right) \nonumber\\
& = p_{Y'}(y) \left(\int p(y\vert x)\frac{p(x)^2}{p'(x)}dx\right)\label{eqn:toolbox:1}
	\end{align}
	It follows that
	\begin{align}
	\chi^2(Y,Y')  = \int \frac{p_Y(y)^2}{p_{Y'}(y)} dy -1 \le \int dy \int p(y\vert x)\frac{p(x)^2}{p'(x)}dx - 1 = \chi^{2}(X, X')\nonumber
	\end{align}

\end{proof}

\begin{theorem}\label{thm:transition}
	Let $X,Y,Y'$ are three random variables. Then,
	\begin{align}
		\chi^2(Y,Y') \le \Exp\left[\chi^2(Y\vert X, Y'\vert X)\right] \nonumber
	\end{align}
	We note that the expectation on the right hand side is over the randomness of $X$.\footnote{Observe $\chi^2(Y\vert X, Y'\vert X)$ deterministically depends on $X$. }
	As a direct corollary, we have for transition kernel $P'$ and $P$ and distribution $\mu$,
	\begin{align}
	\chi^2(P'\mu, P\mu)\le \chi^2_\mu(P',P) \nonumber\end{align}
		 \end{theorem}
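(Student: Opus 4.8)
The plan is to recognize this inequality as an instance of the joint convexity of the $\chi^2$ divergence in its two arguments, proved concretely through a pointwise Cauchy--Schwarz argument that parallels the one already used in the proof of Theorem~\ref{thm:chi-markov}. First I would pin down the joint structure implicit in the statement: let $X$ have density $\mu$, and let $Y$ and $Y'$ be produced from $X$ through the two conditional laws $p_{Y\mid X}$ and $p_{Y'\mid X}$, so that the marginals are the mixtures $p_Y(y)=\int p_{Y\mid X}(y\mid x)\mu(x)\,dx$ and $p_{Y'}(y)=\int p_{Y'\mid X}(y\mid x)\mu(x)\,dx$. Expanding both sides with the definition $\chi^2(p,q)=\int p^2/q-1$, the additive constant $-1$ cancels on each side (using $\int\mu=1$), so it suffices to bound $\int p_Y(y)^2/p_{Y'}(y)\,dy$ by $\int\mu(x)\int p_{Y\mid X}(y\mid x)^2/p_{Y'\mid X}(y\mid x)\,dy\,dx$.

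Next I would reduce everything to a single pointwise inequality. By Fubini's theorem the right-hand side equals $\int dy\int \mu(x)\, p_{Y\mid X}(y\mid x)^2/p_{Y'\mid X}(y\mid x)\,dx$, so it is enough to establish, for each fixed $y$,
\[
\frac{p_Y(y)^2}{p_{Y'}(y)}\le \int \mu(x)\,\frac{p_{Y\mid X}(y\mid x)^2}{p_{Y'\mid X}(y\mid x)}\,dx.
\]
Writing $a(x)=p_{Y\mid X}(y\mid x)$ and $b(x)=p_{Y'\mid X}(y\mid x)$, the left numerator is $(\int a\,\mu)^2$ and the left denominator is $\int b\,\mu$, so this is precisely Cauchy--Schwarz, $(\int a\,\mu)^2=(\int (a/\sqrt b)\,\sqrt b\,\mu)^2\le (\int (a^2/b)\,\mu)(\int b\,\mu)$, after dividing through by $\int b\,\mu$. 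Integrating this bound over $y$ and interchanging the order of integration once more then yields the main inequality.

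Finally, the corollary is immediate by specialization: take $X\sim\mu$, $Y\mid X\sim P'(\cdot\mid X)$, and $Y'\mid X\sim P(\cdot\mid X)$, so that $p_Y=P'\mu$ and $p_{Y'}=P\mu$ are the pushed-forward distributions, while the right-hand side becomes $\int\mu(x)\,\chi^2(P'(\cdot\mid x),P(\cdot\mid x))\,dx=\chi^2_\mu(P',P)$ by Definition~\ref{def:chisqaure-transition}. I expect no deep obstacle here; the only points requiring care are fixing the intended coupling of $(X,Y,Y')$ — in particular that $Y$ and $Y'$ interact only through a shared $X\sim\mu$ — and justifying the interchange of the $x$ and $y$ integrations, since the core estimate is exactly the same Cauchy--Schwarz step that appears in equation~\eqref{eqn:toolbox:1}.
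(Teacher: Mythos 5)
Your proposal is correct and follows essentially the same route as the paper's own proof: the heart of both arguments is the identical pointwise Cauchy--Schwarz bound $\bigl(\int p(y\mid x)\mu(x)\,dx\bigr)^2 \le \bigl(\int p'(y\mid x)\mu(x)\,dx\bigr)\bigl(\int \tfrac{p(y\mid x)^2}{p'(y\mid x)}\mu(x)\,dx\bigr)$, followed by integration over $y$ and cancellation of the $-1$ terms. Your additional care about the coupling of $(X,Y,Y')$ and the Fubini interchange is a welcome but minor tightening of what the paper leaves implicit.
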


\begin{proof}
	We denote $p_{Y'\vert X}(y\vert x)$ by $p'(y\mid x)$ and $p_{Y\vert X}(y\vert x)$ by $p(y\vert x)$, and let $p(x)$ be a simplification for $p_X(x)$. We have by Cauchy-Schwarz,
	\begin{align}
	\frac{p_Y(y)^2}{p_{Y'}(y)} & = \frac{\left(\int p(y\vert x)p(x) dx\right)^2}{\int p'(y\mid x)p(x)dx} \le \int \frac{p(y\vert x)^2}{p'(y\vert x)}p(x)dx \nonumber
	\end{align}
	It follows that
	\begin{align}
	\chi^2(Y,Y') = \int 	\frac{p_Y(y)^2}{p_{Y'}(y)}  dy -1 \le \int \frac{p(y\vert x)^2}{p'(y\vert x)}p(x)dxdy -1 = \Exp\left[\chi^2(Y\vert X, Y'\vert X)\mid X\right] \nonumber
	\end{align}
	\end{proof}

\begin{claim}\label{claim:1}
	Let $\mu$ be a distribution over the state space $\mathcal{S}$.	Let $P$ and $P'$ be two transition kernels. $
	G = \sum_{k=0}^\infty (\gamma P)^k = (\Id - \gamma P)^{-1}
	$
	and
	$
	G' = \sum_{k=0}^\infty (\gamma P')^k = (\Id - \gamma P')^{-1}
	$.
	Let $d = (1-\gamma)G\mu$ and $d' = (1-\gamma)G'\mu$ be the discounted distribution starting from $\mu$ induced by the transition kernels $G$ and $G'$.
	Then,
	\begin{align}
	|d-d'|_1  \le \frac{1}{1-\gamma}|\Delta d|_1 \nonumber
	\end{align}
	Moreover, let $\gamma(P' - P) = \Delta$. Then, we have
	\begin{align}
	G'-G = \sum_{k=1}^\infty (G\Delta)^kG \nonumber
	\end{align}
\end{claim}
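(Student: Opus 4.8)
The plan is to separate the two assertions: I would first establish the operator identity $G'-G=\sum_{k\ge 1}(G\Delta)^kG$, and then read off the $\ell_1$ bound $|d-d'|_1\le(1-\gamma)^{-1}|\Delta d|_1$ as an immediate consequence of its one-step version. The starting point for both is the resolvent relation. Since $G^{-1}=\Id-\gamma P$ and $(G')^{-1}=\Id-\gamma P'$, subtracting gives $G^{-1}-(G')^{-1}=\gamma(P'-P)=\Delta$. Multiplying this on the left by $G'$ and on the right by $G$ (and, separately, in the opposite order) yields the one-step identity
\begin{align}
G'-G=G'\Delta G=G\Delta G'\mcom \nonumber
\end{align}
which is precisely the algebra underlying equation~\eqref{eqn:11}.

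Next I would deduce the $\ell_1$ inequality directly from the one-step identity, \emph{without} invoking the infinite series. Applying $G'-G=G'\Delta G$ to $\mu$ and using $(1-\gamma)G\mu=d$,
\begin{align}
d'-d=(1-\gamma)(G'-G)\mu=(1-\gamma)\,G'\Delta G\mu=G'\Delta d\mper \nonumber
\end{align}
Taking $\ell_1$ norms and bounding the operator norm by $\|G'\|_{1\rightarrow 1}\le\sum_{k\ge 0}\gamma^k\|P'\|_{1\rightarrow 1}^k=(1-\gamma)^{-1}$ (since $\|P'\|_{1\rightarrow 1}=1$) gives $|d'-d|_1\le(1-\gamma)^{-1}|\Delta d|_1$. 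Equivalently one may use equation~\eqref{eqn:11} together with $\|\barG'\|_{1\rightarrow 1}=1$; either way the bound needs only the single-step identity and hence holds unconditionally.

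For the series identity I would iterate the recursion $G'=G+G\Delta G'$ (the second form above). Substituting the right-hand side into itself $n$ times produces the finite expansion with explicit remainder
\begin{align}
G'=\sum_{k=0}^{n}(G\Delta)^kG+(G\Delta)^{n+1}G'\mcom \nonumber
\end{align}
so that $G'-G=\sum_{k=1}^{n}(G\Delta)^kG+(G\Delta)^{n+1}G'$; letting $n\to\infty$ and discarding the remainder gives the claim. Equivalently, from the one-step identity $\Id-G\Delta=G(G')^{-1}$ is invertible with inverse $G'G^{-1}$, so $G'=(\Id-G\Delta)^{-1}G$ and the stated series is just the Neumann expansion of $(\Id-G\Delta)^{-1}$.

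The step I expect to be the main obstacle is justifying that the remainder $(G\Delta)^{n+1}G'$ vanishes, i.e.\ that the Neumann series converges. The crude operator-norm estimate only gives $\|G\Delta\|_{1\rightarrow 1}\le 2\gamma(1-\gamma)^{-1}$, which exceeds $1$ for $\gamma$ close to $1$, so convergence does not follow from submultiplicativity alone and instead requires the spectral radius of $G\Delta$ to be strictly below $1$. The relevant structural fact is that $\Delta$ maps every measure into the zero-mass subspace, on which $\gamma P$ acts as an $\ell_1$-contraction; combined with the invertibility of $\Id-G\Delta=G(G')^{-1}$, this is what keeps the spectral radius below $1$ (and it is immediate in the finite-state setting). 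Notably, the inequality is insulated from this issue, since its proof uses only the one-step identity.
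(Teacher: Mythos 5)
Your treatment of the first inequality is correct and coincides with the paper's own argument: both derive the one-step identity $G'-G=G'\Delta G$ (you via the resolvent relation $G^{-1}-(G')^{-1}=\Delta$, the paper by direct algebraic manipulation), apply it to $\mu$, and conclude with $(1-\gamma)\|G'\|_{1\rightarrow 1}\le 1$. You are also right that this part needs only the one-step identity and is therefore unconditional.

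The gap is in the series identity, specifically in the convergence mechanism you propose. You correctly identify the remainder $(G\Delta)^{n+1}G'$ as the obstacle and correctly observe that submultiplicativity only yields $\|G\Delta\|_{1\rightarrow 1}\le 2\gamma(1-\gamma)^{-1}$. But your rescue --- that the spectral radius of $G\Delta$ is strictly below $1$ because $\Delta$ maps into the zero-mass subspace and $\Id-G\Delta$ is invertible, and that this is ``immediate in the finite-state setting'' --- is false. On the zero-mass subspace $P$ is still only non-expansive in $\ell_1$, not a strict contraction, and invertibility of $\Id-G\Delta$ excludes only the eigenvalue $1$, not eigenvalues of larger modulus. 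Concretely, take two states, $P=\Id$, $P'$ the swap, and $\mu=(1,0)$. Then $G=(1-\gamma)^{-1}\Id$ and $G\Delta=\gamma(1-\gamma)^{-1}(P'-\Id)$ has eigenvalue $-2\gamma/(1-\gamma)$ on the zero-mass vector $(1,-1)$, whose modulus exceeds $1$ as soon as $\gamma>1/3$; since $\Delta G\mu$ is proportional to $(1,-1)$, the terms $(G\Delta)^kG\mu$ grow geometrically and the series diverges. So the Neumann expansion is valid only under an additional smallness condition on $\gamma(P'-P)$ relative to $1-\gamma$. To be fair, the paper's own proof has exactly the same hole --- it writes that ``doing this recursively gives'' the series, with no convergence justification --- and the series identity is never used downstream (Corollary~\ref{cor:1} relies only on the $\ell_1$ bound), so nothing else in the paper is affected. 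But your write-up asserts a convergence argument that does not hold, rather than merely leaving it unaddressed.
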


\begin{proof}
	With algebraic manipulation, we obtain,
	\begin{align}
	G' - G & = (\Id -\gamma P')^{-1} ((\Id - \gamma P) - (\Id - \gamma P')(\Id - \gamma P)^{-1} \nonumber\\
	&  = G'\Delta G \label{eqn:4}
	\end{align}
	It follows that
	\begin{align}
	|d-d'|_1 & = (1-\gamma) |G'\Delta G\mu|_1 \le  |\Delta G\mu|_1 \tag{since $(1-\gamma)|G'|_{1\rightarrow 1}\le 1$} \\
	& = \frac{1}{1-\gamma}|\Delta d|_1 \nonumber
	\end{align}

	Replacing $G'$ in the RHS of the equation ~\eqref{eqn:4} by $G'= G+ G'\Delta G$, and doing this recursively gives
	\begin{align}
	G'-G = \sum_{k=1}^\infty (G\Delta)^kG \nonumber
	\end{align}
\end{proof}

\begin{corollary}\label{cor:1}
	Let $\pi$ and $\pi'$ be two policies and let $\rho^{\pi}$ be defined as in Definition~\ref{def:rho}. Then,
	\begin{align}
	|\rho^{\pi} - \rho'|_1
	& \le \frac{\gamma}{1-\gamma}\Exp_{S\sim \rho^\pi}\left[KL(\pi(S), \pi'(S))^{1/2}\mid S\right] \nonumber
	\end{align}
\end{corollary}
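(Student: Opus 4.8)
The plan is to chain together Claim~\ref{claim:1} (which already handles the ``discounted geometric series'' structure of $\rho^\pi$ and $\rho'$) with a reduction from the difference of state-to-state transition kernels down to the difference of the underlying action distributions, and finally Pinsker's inequality to pass from total variation to KL. Concretely, I identify $P$ as the state-to-state transition kernel induced by $\pi$ on $M^\star$ and $P'$ as the one induced by $\pi'$, so that $d = \rho^\pi$ and $d' = \rho'$ in the notation of Claim~\ref{claim:1}, and $\Delta = \gamma(P'-P)$. The first inequality of Claim~\ref{claim:1} then gives immediately
\[
|\rho^\pi - \rho'|_1 \le \frac{1}{1-\gamma}\,|\Delta\rho^\pi|_1 = \frac{\gamma}{1-\gamma}\,\big|(P'-P)\rho^\pi\big|_1 .
\]
Note it is important here that Claim~\ref{claim:1} is applied so that the right-hand side is weighted by $d=\rho^\pi$ (rather than $\rho'$), which is exactly what produces the expectation over $S\sim\rho^\pi$ demanded by the statement.

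The main work is bounding $\big|(P'-P)\rho^\pi\big|_1$. Here I would use that both kernels factor through the \emph{same} true model: $P(\cdot\mid s)=\int \pi(a\mid s)\,M^\star(\cdot\mid s,a)\,da$ and likewise for $P'$ with $\pi'$. Writing out $(P'-P)\rho^\pi$, pulling the absolute value inside both integrals by the triangle inequality, and then integrating over the next state $s'$ first (using $\int M^\star(s'\mid s,a)\,ds'=1$) collapses the transition kernel and leaves
\[
\big|(P'-P)\rho^\pi\big|_1 \le \Exp_{S\sim\rho^\pi}\!\left[\,\norm{\pi(\cdot\mid S)-\pi'(\cdot\mid S)}_1\,\right].
\]
This is the key step and the place where the shared-dynamics structure is exploited; everything else is bookkeeping.

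Finally I would apply Pinsker's inequality pointwise in $S$, i.e.\ $\norm{\pi(\cdot\mid S)-\pi'(\cdot\mid S)}_1 \le \sqrt{2\,KL(\pi(S),\pi'(S))}$ (the KL direction matches the statement since we placed $\pi$ in the first argument), and take expectation. Combining the three displays yields the claimed bound $\frac{\gamma}{1-\gamma}\Exp_{S\sim\rho^\pi}[KL(\pi(S),\pi'(S))^{1/2}]$, where the universal constant from Pinsker's inequality is absorbed (the paper suppresses the harmless $\sqrt{2}$ factor). I expect no serious obstacle: the only delicate points are getting the orientation of $d$ versus $d'$ right when invoking Claim~\ref{claim:1}, and the Fubini swap that integrates out $M^\star$, while the Pinsker step is entirely standard.
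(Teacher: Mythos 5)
Your proposal is correct and follows essentially the same route as the paper's own proof: invoke Claim~\ref{claim:1} to reduce to $\frac{1}{1-\gamma}|\Delta\rho^\pi|_1$, contract the shared kernel $M^\star$ to pass from next-state distributions to action distributions (the paper states this as a one-line data-processing inequality for total variation, which you prove directly via triangle inequality and Fubini), and finish with Pinsker. The only cosmetic difference is that you make the $\sqrt{2}$ from Pinsker explicit, which the paper also suppresses.
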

\begin{proof}
	Let $P$ and $P'$ be the state-state transition matrix under policy $\pi$ and $\pi'$ and $\Delta = \gamma(P'-P)$ By Claim~\ref{claim:1}, we have that
	\begin{align}
	|\rho^{\pi} - \rho^{\pi'}|_1 \le \frac{1}{1-\gamma}|\Delta \rho^\pi|_1 & = \frac{\gamma}{1-\gamma}\Exp_{S\sim \rho^\pi}\left[|p_{M^\star(S,\pi(S))\mid S} - p_{M^\star(S,\pi'(S))\mid S}|_1\right]  \nonumber\\
	& \le \frac{\gamma}{1-\gamma}\Exp_{S\sim \rho^\pi}\left[|p_{\pi(S)\mid S} - p_{\pi'(S)\mid S}|_1\right] \nonumber\\
	& \le \frac{\gamma}{1-\gamma}\Exp_{S\sim \rho^\pi}\left[KL(\pi(S), \pi'(S))^{1/2}\mid S\right] \nonumber\tag{by Pinkser's inequality}
	\end{align}
\end{proof}

\section{Implementation Details}\label{sec:exp_app}

\subsection{Environment Setup}
\label{sec:env}
We benchmark our algorithm on six tasks based on physics simulator Mujoco \citep{todorov2012mujoco}.  We use rllab's implementation \citep{duan2016benchmarking} \footnote{commit b3a2899 in \url{https://github.com/rll/rllab/}} to interact with Mujoco. All the environments we use have a maximum horizon of 500 steps. We remove all contact information from observation. To compute reward from states, we put the velocity of center of mass into the states.


\subsection{Network Architecture and Model Learning}

We use the same reward function as in rllab, except that all the coefficients $C_\text{contact}$ in front of the contact force s are set to  $0$ in our case. We refer the readers to \citep{duan2016benchmarking} Supp Material 1.2 for more details.
All actions are projected to the action space by clipping.
We normalize all observations by $s' = \frac{s - \mu}{\sigma}$ where $\mu, \sigma \in \R^{d_\text{observation}}$ are computed from all observations we collect from the real environment. Note that $\mu, \sigma$ may change as we collect new data. Our policy will always produce an action $a$ in $[-1, 1]^{d_{\text{action}}}$ and the action $a'$, which is fed into the environment, is scaled linearly by $a' = \frac{1 - a}{2} a_\text{min} + \frac{1 + a}{2} a_\text{max}$, where $a_\text{min}, a_\text{max}$ are the min or max values allowed at each entry.

\subsection{SLBO Details}
\label{sec:slbo-details}
The dynamical model is represented by a feed-forward neural network with two hidden layers, each of which contains 500 hidden units. The activation function at each layer is ReLU. We use Adam to optimize the loss function with learning rate $10^{-3}$ and $L_2$ regularization $10^{-5}$. The network does not predict the next state directly; instead, it predicts the normalized difference of $s_{t+1} - s_t$. The normalization scheme and statistics are the same as those of observations: We maintain $\mu, \sigma$ from collected data in the real environment and may change them as we collect more, and the normalized difference is $\frac{s_{t+1} - s_t - \sigma}{\mu}$.

The policy network is a feed-forward network with two hidden layers, each of which contains 32 hidden units. The policy network uses $\tanh$ as activation function and outputs  a Gaussian distribution $\calN(\mu(s), \sigma^2)$ where $\sigma$ a state-independent trainable vector.

During our evaluation, we use $H = 2$ for multi-step model training and the batch size is given by $\frac{256}{H} = 128$, i.e., we enforce the model to see 256 transitions at each batch.

We run our algorithm $n_{\text{outer}} = 100$ iterations. We collect $n_{\text{train}} = 10000$ steps of real samples from the environment at the start of each iteration using current policy with Ornstein-Uhlunbeck noise (with parameter $\theta = 0.15, \sigma = 0.3$)  for better exploration. At each iteration, we optimize dynamics model and policy alternatively for $n_{\text{inner}} = 20$ times. At each iteration, we optimize dynamics model for $n_{\text{model}} = 100$ times and optimize policy for $n_{\text{policy}} = 40$ times.

\subsection{Baselines}
\label{sec:baselines}

\paragraph{TRPO} TRPO hyperparameters are listed at Table \ref{trpo-hyperparameters}, which are the same as OpenAI Baselines' implementation. These hyperparameters are fixed for all experiments where TRPO is used, including ours, MB-TRPO and MF-TRPO. We do not tune these hyperparameters. We also normalize observations as our algorithm and OpenAI Baselines do.

We use a neural network as the value function to reduce variance, which has 2 hidden layers of units 64 and uses $\tanh$ as activation functions. We use Generalized Advantage Estimator (GAE) \citet{schulman2015high} to estimate advantages. Both TRPO used in our algorithm and that in model-free algorithm share the same set of hyperparameters.

\begin{table}[t]
    \centering
    \caption{TRPO Hyperparameters. }
    \label{trpo-hyperparameters}
    \begin{tabular}{ll}
        \multicolumn{1}{c}{\bf Hyperparameters}  &\multicolumn{1}{c}{\bf Values} \\ \hline \\
        batch size & 4000 \\
        max KL divergence & 0.01 \\
        discount $\gamma$ & 0.99 \\
        GAE $\lambda$ & 0.95 \\
        CG iterations & 10 \\
        CG damping & 0.1 \\
    \end{tabular}
\end{table}

\paragraph{SAC} For fair comparison, we do not use a large policy network (2 hidden layers, one of which has 256 hidden units) as the authors suggest, but use exactly the same policy network as ours. All other hyperparameters are kept the same as the authors'. Note that Q network and value network have 256 hidden units at each hidden layers, which is more than TRPO's. We refer the readers to \cite{SAC} Appendix D for more details.

\paragraph{MB-TRPO}
\label{sec:mb-trpo}


Model-Based TRPO (MB-TRPO) is similar to our algorithm SLBO but does not optimize model and policy alternatively during one iteration. We do not tune the hyperparameter $n_\text{model}$ since any number beyond a certain threshold would bring similar results. For $n_\text{policy}$ we try $\{100, 200, 400, 800\}$ on Ant and find $n_\text{policy} = 200$ works best in Ant so we use it for all other environments. Note that when Algo \ref{alg:SLBO} uses 800 Adam updates (per outer iteration), it has the same amount of updates (per outer iteration) as in Algo \ref{alg:MB-TRPO}. As suggested by Section \ref{sec:entropy}, we use 0.005 as the coefficient of entropy bonus for all experiments.


\begin{algorithm}[tbh]
    \caption{Model-Based Trust Region Policy Optimization (MB-TRPO)}
    \label{alg:MB-TRPO}
    \begin{algorithmic}[1]
        \State initialize model network parameters $\phi$ and policy network parameters $\theta$
        \State initialize dataset $\calD \gets \emptyset$
        \For{$n_{\text{outer}}$ iterations}
            \State $\calD \gets $ $\calD \cup \{$ collect $n_{\text{collect}}$ samples from real environment using $\pi_\theta$ with noises $\}$
            \For{$n_{\text{model}}$ iterations}
                \State optimize \eqref{eqn:multi-step-loss} over $\phi$ with sampled data from $\calD$ by one step of Adam
            \EndFor
            \For{$n_{\text{policy}}$ iterations}
                \State $\calD' \gets \{$ collect $n_\text{trpo}$ samples using $\widehat{M}_\phi$ as dynamics $\}$
                \State optimize $\pi_{\theta}$ by running TRPO on $\calD'$
            \EndFor
        \EndFor
    \end{algorithmic}
\end{algorithm}

\paragraph{SLBO} We tune multi-step model training parameter $H \in \{1, 2, 4, 8\}$, entropy regularization coefficient $\lambda \in \{0, 0.001, 0.003, 0.005\}$ and $n_\text{policy} \in \{10, 20, 40\}$ on Ant and find $H = 2, \lambda = 0.005, n_\text{policy} = 40$ work best, then we fix them in all environments, though environment-specific hyperparameters may work better. The other hyperparameters, including $n_\text{inner}, n_\text{model}$ and network architecture, are never tuned. We observe that at the first several iterations, the policy overfits to the learnt model so a reduction of $n_\text{policy}$ at the beginning can further speed up convergence but we omit this for simplicity.

The most important hyperparameters we found are $n_\text{policy}$ and the coefficient in front of the entropy regularizer $\lambda$. It seems that once $n_\text{model}$ is large enough we don’t see any significant changes. We did have a held-out set for model prediction (with the same distribution as the training set) and found out the model doesn’t overfit much. As mentioned in \ref{sec:slbo-details}, we also found out normalizing the state helped a lot since the raw entries in the state have different magnitudes; if we don’t normalize them, the loss will be dominated by the loss of some large entries.

\subsection{Ablation Study}

\paragraph{Multi-step model training}
\label{sec:ablation:multi-step}
We compare multi-step model training with single-step model training and the results are shown on Figure \ref{fig:multi-step-training}. Note that $H = 1$ means we use single-step model training. We observe that small $H$ (e.g., 2 or 4) can be beneficial, but larger $H$ (e.g., 8) can hurt. We hypothesize that smaller $H$ can help the model learn the uncertainty in the input and address the error-propagation issue to some extent. \citet{pathak2018zero} uses an auto-regressive recurrent model to predict a multi-step loss on a trajectory, which is closely related to ours. However, theirs differs from ours in the sense that they do not use the predicted output $x_{t+1}$ as the input for the prediction of $x_{t+2}$, and so on and so forth.

\begin{figure}[tbh]
    \centering
    \begin{subfigure}[b]{0.3 \linewidth}
        \includegraphics[width = \textwidth]{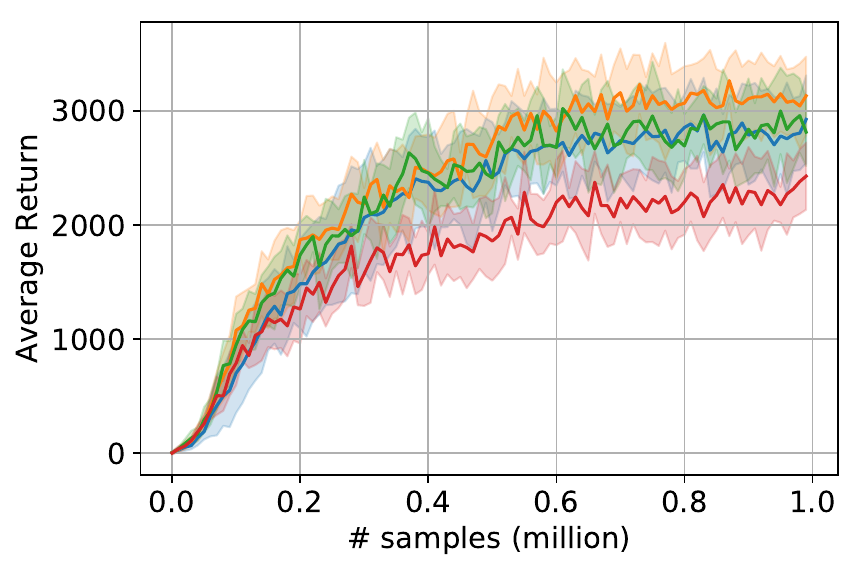}
        \caption{Ant} \label{fig:ant-multi-step}
    \end{subfigure}
    \begin{subfigure}[b]{0.3 \linewidth}
        \includegraphics[width = \textwidth]{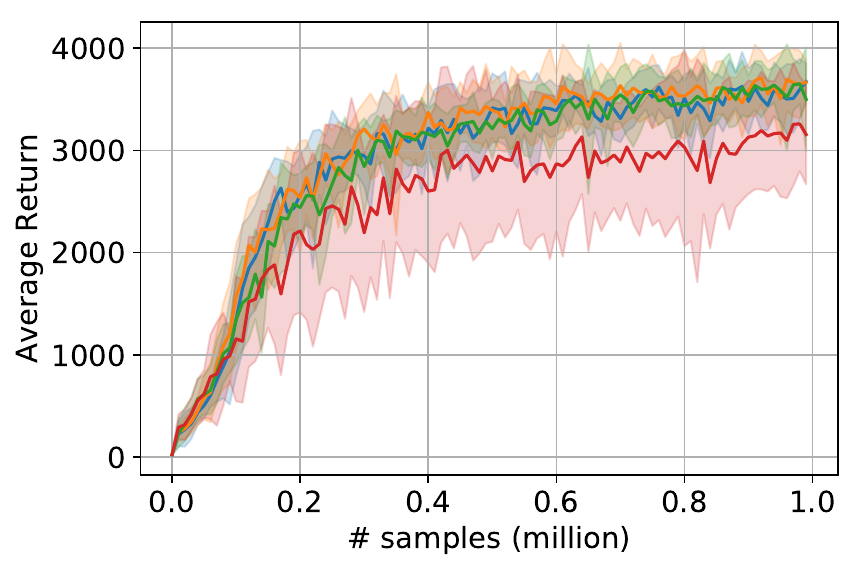}
        \caption{Walker} \label{fig:walker-multi-step}
    \end{subfigure}
    \begin{subfigure}[b]{0.3 \linewidth}
        \includegraphics[width = \textwidth]{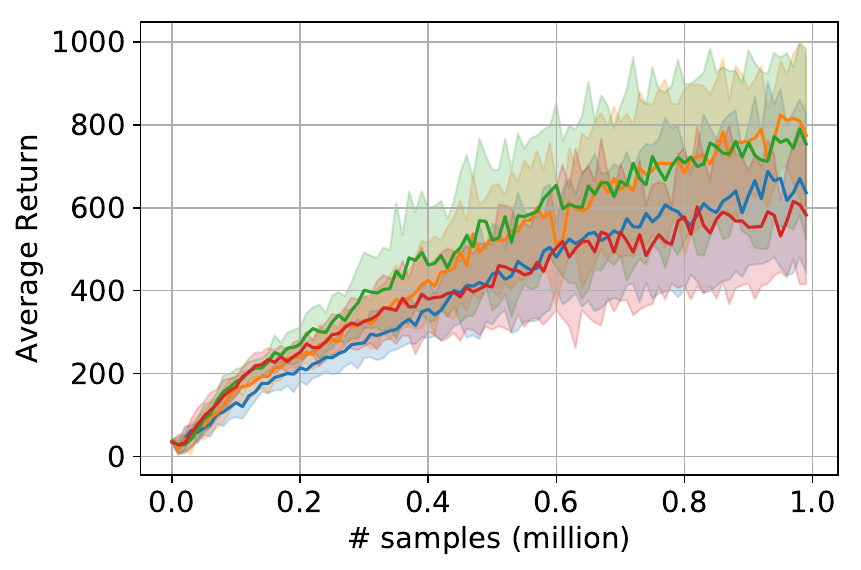}
        \caption{Humanoid} \label{fig:humanoid-multi-step}
    \end{subfigure}
    \includegraphics[width = 0.6 \textwidth]{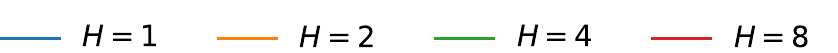}
    \caption{Ablation study on multi-step model training. All the experiments are average over 10 random seeds. The x-axis shows the total amount of real samples from the environment. The y-axis shows the averaged return from execution of our learned policy. The solid line is the mean of the total rewards from each seed. The shaded area is one-standard deviation. }
    \label{fig:multi-step-training}
\end{figure}

\paragraph{Entropy regularization}
\label{sec:ablation:entropy}
Figure \ref{fig:entropy} shows that entropy reguarization can improve both sample efficiency and final performance. More entropy regularization leads to better sample efficiency and higher total rewards. We observe that in the late iterations of training, entropy regularization may hurt the performance thus we stop using entropy regularization in the second half of training.

\begin{figure}[tbh]
    \centering
    \begin{subfigure}[b]{0.3 \linewidth}
        \includegraphics[width = \textwidth]{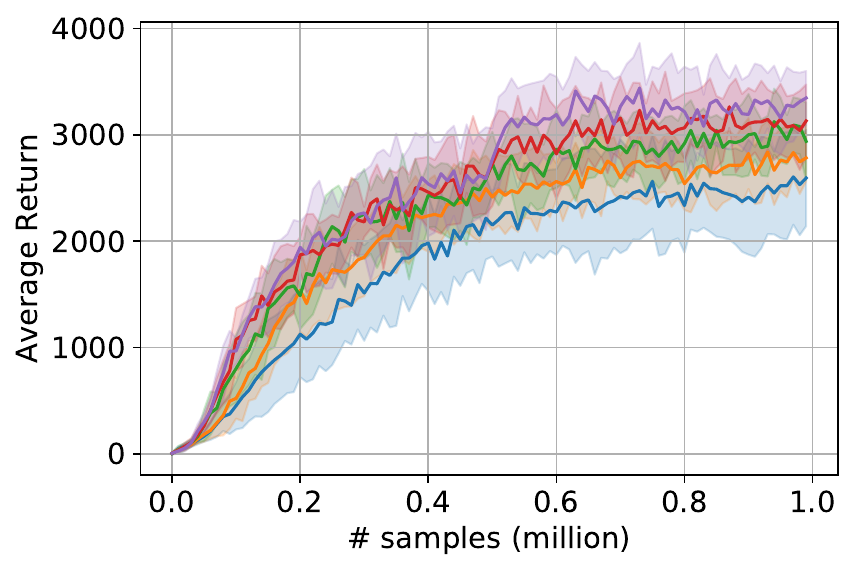}
        \caption{Ant} \label{fig:ant-entropy}
    \end{subfigure}
    \begin{subfigure}[b]{0.3 \linewidth}
        \includegraphics[width = \textwidth]{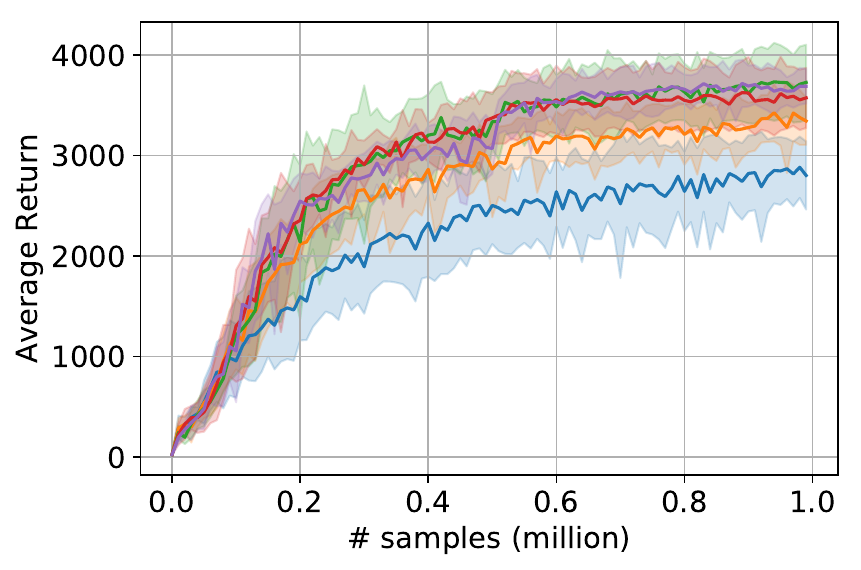}
        \caption{Walker} \label{fig:walker-entropy}
    \end{subfigure}
    \includegraphics[width = 0.8 \textwidth]{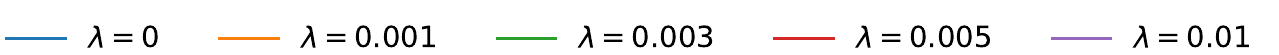}
    \caption{Ablation study on entropy regularization. $\lambda$ is the coefficient of entropy regularization in the TRPO's objective. All the experiments are averaged over 10 random seeds. The x-axis shows the total amount of real samples from the environment. The y-axis shows the averaged return from execution of our learned policy. The solid line is the mean of the total rewards from each seed. The shaded area is one-standard deviation.  }
    \label{fig:entropy}
\end{figure}

\paragraph{SLBO with 4M training steps}
\label{sec:ablation:longtraining}
Figure \ref{fig:long-result} shows that SLBO is superior to SAC and MF-TRPO in Swimmer, Half Cheetah, Walker and Humanoid when 4 million samples or fewer samples are allowed. For Ant environment , although SLBO with less than one million samples reaches the performance of MF-TRPO with 8 million samples, SAC's performance surpasses SLBO after 2 million steps of training. Since model-free TRPO almost stops improving after 8M steps and our algorithms uses TRPO for optimizing the estimated environment, we don't expect SLBO can significantly outperform the reward of TRPO at 8M steps. The result shows that SLBO is also satisfactory in terms of asymptotic convergence (compared to TRPO.) It also indicates a better planner or policy optimizer instead of TRPO might be necessary to further improve the performance. 

\begin{figure}[tbh]
	\centering
	\begin{subfigure}[b]{0.3 \linewidth}
		\includegraphics[width = \textwidth]{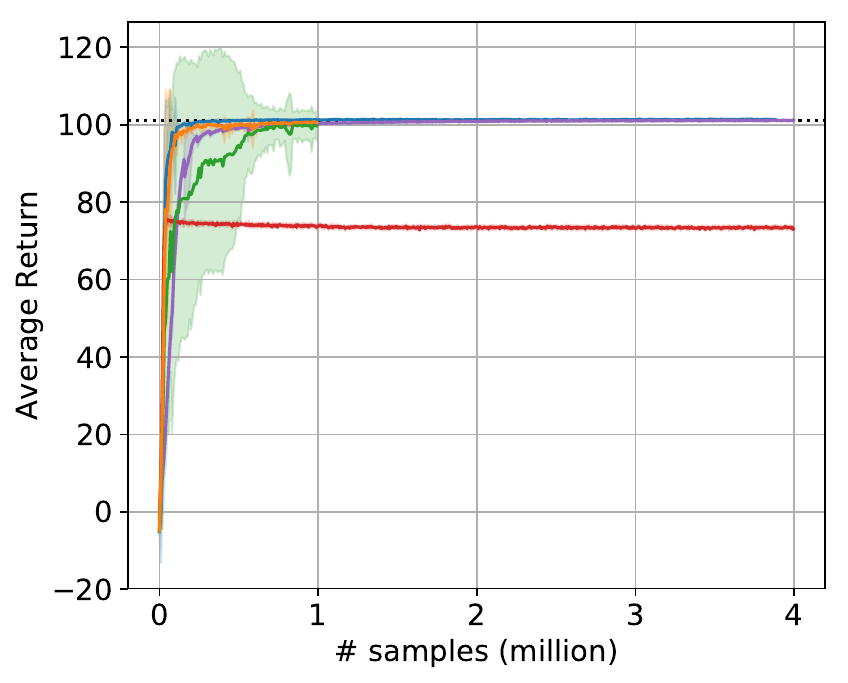}
		\caption{Swimmer} \label{fig:slbo-swimmer}
	\end{subfigure}
	\begin{subfigure}[b]{0.3 \linewidth}
		\includegraphics[width = \textwidth]{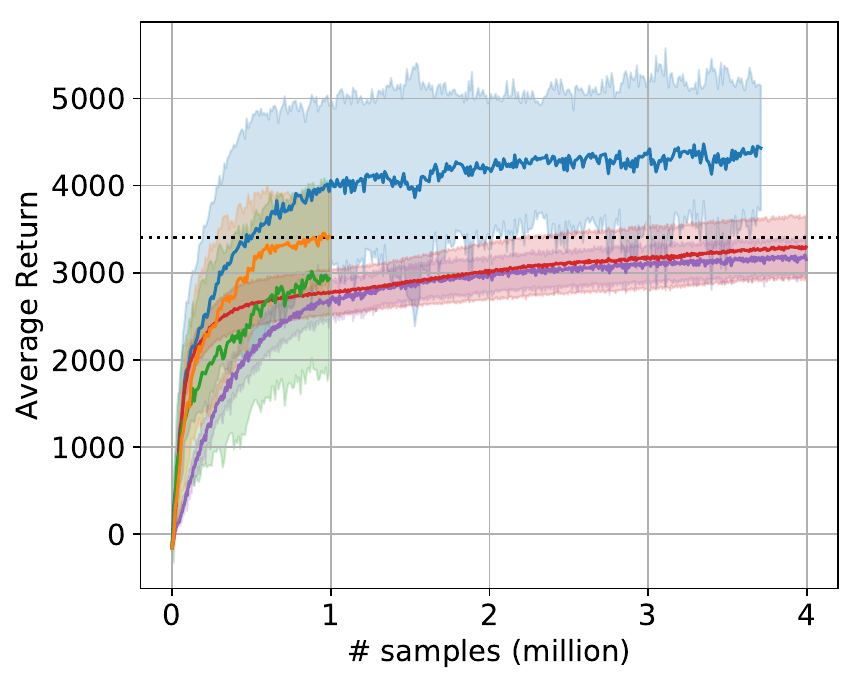}
		\caption{Half Cheetah} \label{fig:slbo-half-cheetah}
	\end{subfigure}
	\begin{subfigure}[b]{0.3 \linewidth}
		\includegraphics[width = \textwidth]{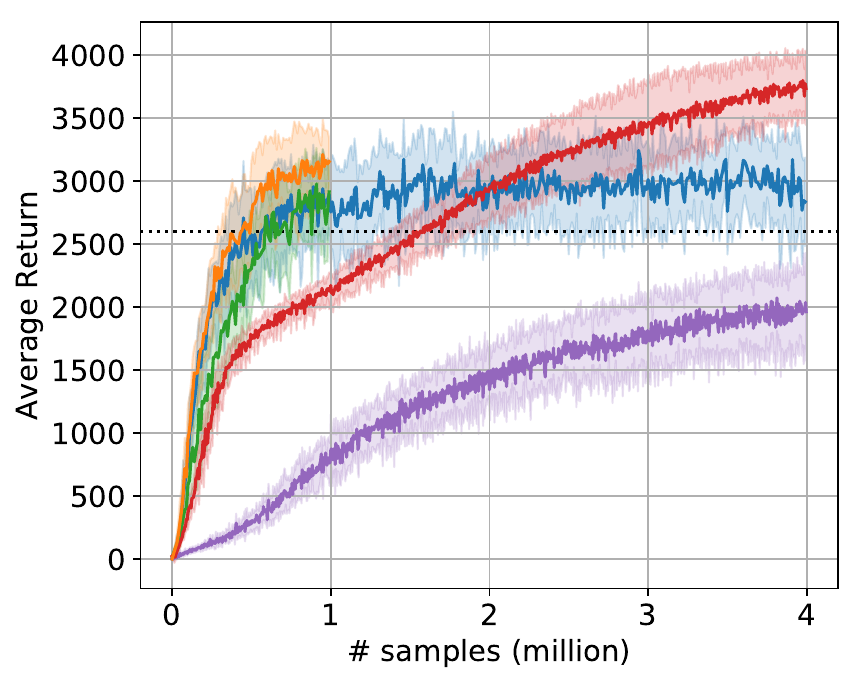}
		\caption{Ant} \label{fig:slbo-ant}
	\end{subfigure} \\
	\begin{subfigure}[b]{0.3 \linewidth}
		\includegraphics[width = \textwidth]{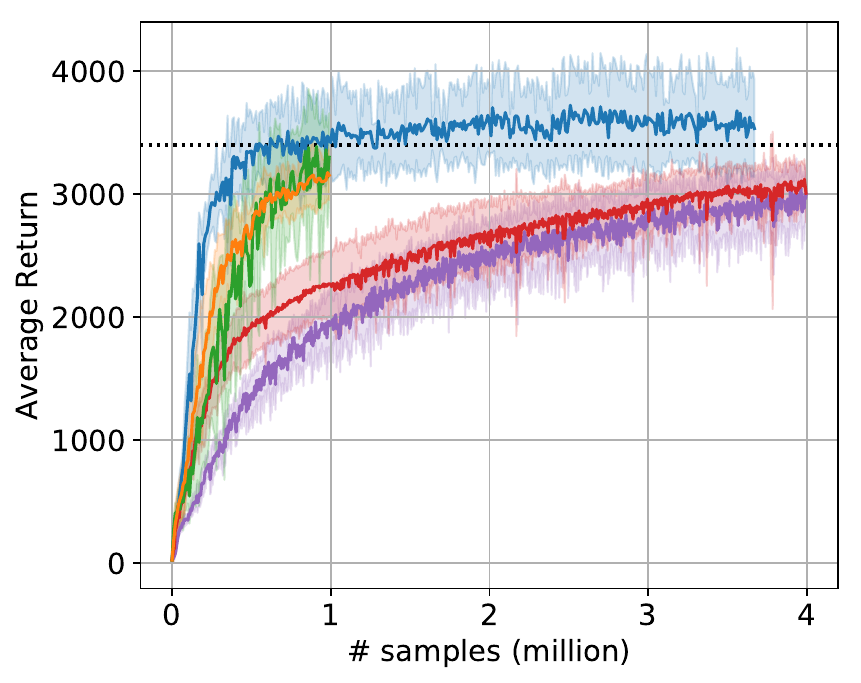}
		\caption{Walker} \label{fig:slbo-walker}
	\end{subfigure}
	\begin{subfigure}[b]{0.3 \linewidth}
		\includegraphics[width = \textwidth]{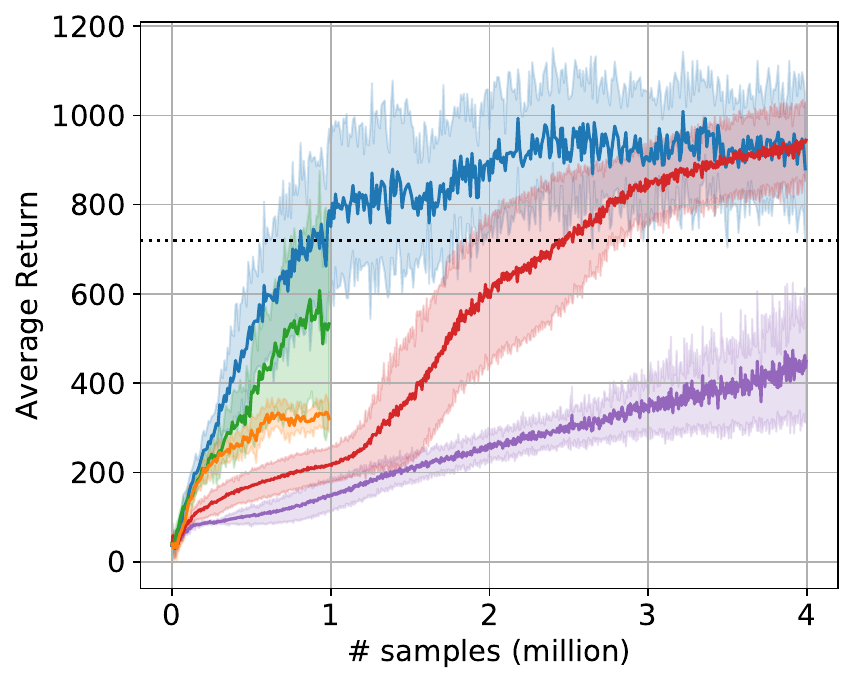}
		\caption{Humanoid} \label{fig:slbo-humanoid}
	\end{subfigure}
	\includegraphics[width = 0.8 \textwidth]{figures/slbo/legend}
    \caption{Comparison among SLBO (ours), SLBO with squared $\ell^2$ model loss (SLBO-MSE), vanilla model-based TRPO (MB-TRPO), model-free TRPO (MF-TRPO), and Soft Actor-Critic (SAC) with more samples than in Figure~\ref{fig:main-result}. SLBO, SAC, MF-TRPO are trained with 4 million real samples. We average the results over 10 different random seeds, where the solid lines indicate the mean and shaded areas indicate one standard deviation. The dotted reference lines are the total rewards of MF-TRPO after 8 million steps. }
    \label{fig:long-result}
\end{figure}

\section{Sample Complexity Bounds}\label{sec:sample_complexity}
In this section, we extend Theorem~\ref{thm:main} to a final sample complexity result. For simplicity, let $L_{\piref, \delta}^{\pi, M} = V^{\pi, M} -  \cD_{\pi_k,\delta}(M, \pi)$ be the lower bound of $V^{\pi, M^\star}$. We omit the subscript $\delta$ when it's clear from contexts. When $\cD$ satisfies~\eqref{eqn:base}, we have that, 
\begin{align}
V^{\pi, M^\star} \ge L_{\piref, \delta}^{\pi, M} \quad \quad \forall \pi ~ \textup{s.t.}~ d(\pi, \piref)\le \delta \label{eqn:base2}
\end{align}When $\cD$ satisfies ~\eqref{eqn:optimizable}, we use $\widehat{L}_{\piref, \delta}^{\pi, M}$ to denotes its empirical estimates. Namely, we replace the expectation in equation~\eqref{eqn:optimizable} by empirical samples $\tau^{(1)}, \dots, \tau^{(n)}$. In other words, we optimize 
\begin{align}
	\pi_{k+1}, M_{k+1} & = \argmax_{\pi\in \Pi,~ M\in \mathcal{M}} \widehat{L}_{\piref, \delta}^{\pi, M} =  ~~V^{\pi, M} -  \frac{1}{n}\sum_{i=1}^n f(\widehat{M}, \pi, \tau^{(i)}) 
\end{align}
instead of equation~\eqref{eqn:obj}. 

Let $p$ be the total number of parameters in the policy and model parameterization. We assume that we have a discrepancy bound $\cD_{\piref}(\pi, M)$ satisfying~\eqref{eqn:optimizable} with a function $f$ that is bounded with $[-B_f, B_f]$ and that is $L_f$-Lipschitz in the parameters of $\pi$ and $M$. That is, suppose $\pi$ is parameterized by $\theta$ and $M$ is parameterized by $\phi$, then we require $|f(M_\phi, \phi_\theta, \tau) -f(M_{\phi'}, \phi_{\theta'}, \tau)|\le L_f(\|\phi-\phi'\|_2^2 + \|\theta-\theta'\|^2)$ for all $\tau$, $\theta, \theta', \phi, \phi'$. We note that $L_f$ is likely to be exponential in dimension due to the recursive nature of the problem, but our bounds only depends on its logarithm. We also restrict our attention to parameters in an Euclidean ball $\{\theta: \|\theta\|_2 \le B \}$ and $\{\phi: \|\phi\|_2 \le B\}$. Our bounds will be logarithmic in $B$. 

We need the following definition of approximate local maximum since with sampling error we cannot hope to converge to the exact local maximum. 
\begin{definition}
	We say $\pi$ is a $(\delta, \epsilon)$-local maximum of $V^{\pi, M^\star}$ with respect to the constraint set $\Pi$ and metric $d$, if for any $\pi'\in \Pi$ with $d(\pi, \pi') \le \delta$, we have $V^{\pi,M^\star} \ge V^{\pi', M^\star}-\eps$. 
\end{definition}

We show a sample complexity bounds that scales linearly in $p$ and logarithmically in $L_f, B$ and $B_f$. 

\begin{theorem}\label{thm:sample_complexity}
	Let $\epsilon > 0$. In the setting of Theorem~\ref{thm:main}, under the additional assumptions above, suppose we use $n = O(B_fp\log(BL_f/\epsilon)/\epsilon^2)$ trajectories to estimate the discrepancy bound in Algorithm~\ref{alg:framework}. Then, for any $t$, if $\pi_t$ is not a $(\delta,\epsilon)$-local maximum, then the total reward will increase in the next step: with high probability, 
	\begin{align}
		V^{\pi_{t+1}, M^\star} \ge 			V^{\pi_{t}, M^\star} +\eps/2
	\end{align}
	As a direct consequence, suppose the maximum possible total reward is $B_R$ and the initial total reward is 0, then for some  $T = O(B_R/\epsilon)$, we have that $\pi_T$ is a $(\delta, \epsilon)$-local maximum of the $V^{\pi, M^\star}$. 
\end{theorem}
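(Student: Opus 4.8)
The plan is to combine a single uniform concentration bound over the parameter space with the deterministic improvement argument already used in the proof of Theorem~\ref{thm:main}, carrying an $\epsilon/4$ slack through every inequality. Fix an iteration $t$ and condition on the reference policy $\pi_t$. Since $\tau^{(1)},\dots,\tau^{(n)}$ are drawn i.i.d.\ from $\pi_t$ on $M^\star$, the empirical discrepancy $\widehat{\cD}_{\pi_t}(M,\pi)=\frac1n\sum_i f(M,\pi,\tau^{(i)})$ is, by \eqref{eqn:optimizable}, an unbiased estimate of $\cD_{\pi_t}(M,\pi)$ for each fixed $(\pi,M)$. The first and central step is to establish the uniform deviation bound
\begin{equation}
\sup_{\pi\in\Pi,\,M\in\mathcal{M}}\big|\widehat{\cD}_{\pi_t}(M,\pi)-\cD_{\pi_t}(M,\pi)\big|\le \epsilon/4 \tag{$\ast$}
\end{equation}
with high probability. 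For a single pair this is Hoeffding's inequality using $f\in[-B_f,B_f]$. To make it uniform I would build an $\eta$-net of the Euclidean balls $\{\|\theta\|_2\le B\}\times\{\|\phi\|_2\le B\}$ (of log-cardinality $O(p\log(B/\eta))$), union-bound Hoeffding over the net, and use the $L_f$-Lipschitzness of $f$ in $(\theta,\phi)$ to transfer the estimate to off-net points, since moving the parameters by $\eta$ changes both $\widehat{\cD}$ and $\cD$ by at most $L_f\eta$ for every $\tau$. Choosing $\eta\asymp\epsilon/L_f$ and $n=O\big(B_f p\log(BL_f/\epsilon)/\epsilon^2\big)$ then yields $(\ast)$.

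Next I would rerun the argument of Theorem~\ref{thm:main} with this slack. Suppose $\pi_t$ is not a $(\delta,\epsilon)$-local maximum, so there is $\pi'$ with $d(\pi',\pi_t)\le\delta$ and $V^{\pi',M^\star}>V^{\pi_t,M^\star}+\epsilon$. The pair $(\pi',M^\star)$ is feasible for \eqref{eqn:obj} at step $t$, and by \eqref{eqn:equality} together with $(\ast)$,
\[
\widehat L_{\pi_t,\delta}^{\pi',M^\star}=V^{\pi',M^\star}-\widehat{\cD}_{\pi_t}(M^\star,\pi')\ge V^{\pi',M^\star}-\epsilon/4> V^{\pi_t,M^\star}+3\epsilon/4.
\]
Because $(\pi_{t+1},M_{t+1})$ maximizes the empirical objective, $\widehat L_{\pi_t,\delta}^{\pi_{t+1},M_{t+1}}\ge \widehat L_{\pi_t,\delta}^{\pi',M^\star}$, and then applying \eqref{eqn:base2} (valid since the constraint forces $d(\pi_{t+1},\pi_t)\le\delta$) and $(\ast)$ once more in the reverse direction,
\[
V^{\pi_{t+1},M^\star}\ge L_{\pi_t,\delta}^{\pi_{t+1},M_{t+1}}\ge \widehat L_{\pi_t,\delta}^{\pi_{t+1},M_{t+1}}-\epsilon/4> V^{\pi_t,M^\star}+\epsilon/2,
\]
which is exactly the claimed per-step improvement.

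For the consequence I would union-bound $(\ast)$ over all $T$ iterations, adding only a $\log T$ term that is absorbed into the logarithm in $n$, and then observe that each iteration at a non-local-maximum raises $V^{\pi_t,M^\star}$ by at least $\epsilon/2$. Since the total reward stays in $[0,B_R]$, this can occur at most $2B_R/\epsilon$ times, so some $\pi_T$ with $T=O(B_R/\epsilon)$ must be a $(\delta,\epsilon)$-local maximum.

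The step I expect to be the main obstacle is the uniform bound $(\ast)$: the function $f$ is itself an expectation over trajectories whose errors compound, so $L_f$ may be exponential in the horizon and dimension. The argument must therefore be arranged so that $L_f$ enters only through the net resolution $\log(1/\eta)$, i.e.\ only as $\log L_f$; this is what keeps $n$ polynomial in $p$ and $1/\epsilon$ while merely logarithmic in $L_f$ and $B$. A secondary subtlety is that the reference policy, and hence the sampling distribution $\rho^{\pi_t}$, changes at every iteration, so fresh trajectories and a fresh invocation of $(\ast)$ are needed each step — precisely what the per-iteration sampling in Algorithm~\ref{alg:framework} supplies.
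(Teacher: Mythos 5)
Your proposal is correct and follows essentially the same route as the paper's proof: a Hoeffding plus $\eta$-net uniform convergence bound giving $\epsilon/4$ accuracy on the empirical lower bound, followed by rerunning the comparison against the feasible competitor $(\pi', M^\star)$ from Theorem~\ref{thm:main} with the concentration slack inserted at the two places the empirical and population objectives are exchanged. Your explicit handling of the $\log L_f$ dependence through the net resolution and the per-iteration re-sampling are exactly the points the paper's terser ``standard $\epsilon$-cover + union bound'' argument relies on.
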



\begin{proof}

	By Hoeffiding's inequality, we have for fix $\pi$ and $\widehat{M}$, with probability $1-n^{O(1)}$ over the randomness of $\tau^{(1)},\dots, \tau^{(n)}$, 
	\begin{align}
	\left|\frac{1}{n}\sum_{i=1}^n f(\widehat{M}, \pi, \tau^{(i)}) -  \Exp_{\tau \sim \piref, M^\star}[f(\hatM, \pi,\tau)] \right| \le 4\sqrt{\frac{B_f \log n}{n}}. 
	\end{align}
	In more succinct notations, we have $|\widehat{\cD}_{\pi_k,\delta}(M, \pi) -  \cD_{\pi_k,\delta}(M, \pi)| \le 4\sqrt{\frac{B_f \log n}{n}}$, and therefore 
	\begin{align}
	|\widehat{L}^{\pi, M} - L^{\pi, M}|\le 4\sqrt{\frac{B_f \log n}{n}}. 
	\end{align}
	By a standard $\epsilon$-cover + union bound argument, we can prove the uniform convergence: with high probability (at least $1-n^{O(1)}$) over the choice of $\tau^{(1)},\dots, \tau^{(n)}$, for all policy and model, for all policy $\pi$ and dynamics $M$, 
	\begin{align}
|\widehat{L}^{\pi, M} - L^{\pi, M}|\le 4\sqrt{\frac{B_f p\log (n B L_f)}{n}} = \epsilon/4. \label{eqn:uniform}
\end{align}
	
%
	
	Suppose at iteration $t$, we are at policy $\pi_t$ which is not a $(\delta, \epsilon)$-local maximum of $V^{\pi, M^\star}$. Then, there exists $\pi'$ such that $d(\pi', \pi_t) \le \delta$ and \begin{align}V^{\pi', M^\star} \ge V^{\pi_t, M^\star} + \epsilon. \label{eqn:1}\end{align}
	
Then, we have that 
\begin{align}
V^{\pi_{t+1}, M^\star} & \ge L_{\pi_t}^{\pi_{t+1}, M_{t+1}}  \tag{by equation~\eqref{eqn:base2}}\\
& \ge \widehat{L}_{\pi_t}^{\pi_{t+1}, M_{t+1}} -\eps/4 \tag{by uniform convergence, equation~\eqref{eqn:uniform}}\\
& \ge \widehat{L}_{\pi_t}^{\pi', M^\star} -\eps/4 \tag{by the definition of $\pi_{t+1}, M_{t+1}$} \\
& \ge L^{\pi', M^\star}_{\pi_t} -\eps/2 \tag{by uniform convergence, equation~\eqref{eqn:uniform}}\\
& = V^{\pi', M^\star} -\eps/2 \tag{by equation~\eqref{eqn:equality}}\\
& = V^{\pi_t, M^\star} + \eps/2 \tag{by equation~\eqref{eqn:1}}
\end{align}

Note that the total reward can only improve by $\epsilon/2$ for at most $O(B_R/\epsilon)$ steps. Therefore, in the first   $O(B_R/\epsilon)$  iterations, we must have hit a solution that is a $(\delta, \eps)$-local maximum. This completes the proof. 
	
\end{proof}

\end{document}